\newtheorem{remark}[theorem]{Remark}
\DeclareMathOperator*{\argmax}{arg\,max}
\DeclareMathOperator*{\argmin}{arg\,min}
\newcommand{\reals}{\mathbb{R}}
\newcommand{\posint}{\mathbb{Z}_{+}}
\newcommand{\prob}{\mathbb{P}}
\newcommand{\probd}{p}
\newcommand{\like}{\mathcal{L}}
\newcommand{\ee}{\mathbb{E}}
\newcommand{\yn}{\mathcal{Y}_n}
\newcommand{\xn}{\mathcal{X}_n}
\newcommand{\yk}{\mathcal{Y}_k}
\newcommand{\avg}{\textrm{\normalfont avg}}
\newcommand{\map}{\textrm{\normalfont map}}
\newcommand{\tavg}{\textrm{\normalfont $\theta$-avg}}
\newcommand{\tmap}{\textrm{\normalfont $\theta$-map}}
\newcommand{\thetsig}{\theta_{\Sigma}}
\newcommand{\thetgam}{\theta_{\Gamma}}
\newcommand{\thetdyn}{\theta_{\Psi}} 
\newcommand{\thetobs}{\theta_{h}}
\newcommand{\dimx}{d}
\newcommand{\dimpar}{p}
\newcommand{\dimy}{m}
\newcommand{\numObs}{n}
\title{Bayesian System ID: optimal management of parameter, model, and measurement uncertainty}
\author{Nicholas Galioto $^*$ and Alex A. Gorodetsky
\thanks{N. Galioto and A.A. Gorodetsky are with the Department of Aerospace Engineering, University of Michigan, Ann Arbor, MI 48109, USA,
        \textit{email:} {\tt\small \{ngalioto, goroda\}@umich.edu}}
}
\begin{document}

\maketitle

\begin{abstract}
    We evaluate the robustness of a probabilistic formulation of system identification (ID) to sparse, noisy, and indirect data. Specifically, we compare estimators of future system behavior derived from the Bayesian posterior of a learning problem to several commonly used least squares-based optimization objectives used in system ID. Our comparisons indicate that the log posterior has improved geometric properties compared with the objective function surfaces of traditional methods that include differentially constrained least squares and least squares reconstructions of discrete time steppers like dynamic mode decomposition (DMD). These properties allow it to be both  more sensitive to new data and less affected by multiple minima -- overall yielding a more robust approach.  Our theoretical results indicate  that least squares and regularized least squares methods like dynamic mode decomposition and sparse identification of nonlinear dynamics (SINDy) can be derived from the probabilistic formulation by assuming noiseless measurements. We also analyze the computational complexity of a Gaussian filter-based approximate marginal Markov Chain Monte Carlo scheme that we use to obtain the Bayesian posterior for both linear and nonlinear problems. We then empirically demonstrate that obtaining the marginal posterior of the parameter dynamics and making predictions by extracting optimal estimators (e.g., mean, median, mode) yields orders of magnitude improvement over the aforementioned approaches. We attribute this performance to the fact that the Bayesian approach captures parameter, model, and measurement uncertainties, whereas the other methods typically neglect at least one type of uncertainty.
\end{abstract}

\begin{keywords}
System ID, Approximate marginal MCMC, UKF-MCMC, Bayesian inference, DMD, SINDy, Dynamical systems
\end{keywords}

\section{Introduction}

Recovering nonlinear models of dynamical systems from data is quickly becoming a primary enabling technology for analysis and decision making in fields spanning science and engineering where first principles models are often incomplete or simply unavailable. Examples range from forecasting the weather and climate~\cite{Glahn_1972, Gneiting_2005, Maqsood_2004}, predicting fluid flows~\cite{Lumley_2007, Sirovich_1987, Aubry_1991}, and enabling adaptive control~\cite{Polycarpou_1993}. All of these fields have a long history of developing estimation and system identification techniques such as advanced Kalman filtering in forecasting~\cite{Ott_2004, Hunt_2004}, dynamic mode decomposition for computational fluid dynamics~\cite{schmid_2010}, and a wide ranging set of schemes in adaptive control~\cite{Leffens_1982, Slotine_1987, Craig_1987}. In this paper we compare the implicit and explicit optimization formulations posed by several representative approaches, and we demonstrate that algorithms that appropriately manage parameter, model, and measurement uncertainty in a cohesive manner are often more robust than more standard least squares-based approaches.

For any system identification approach, there are two primary challenges: (1) parameterizing a model space over which to search and (2) posing an optimization problem whose minimum yields an optimal model. A great majority of recent work has focused on addressing the first challenge, primarily due to the rapid availability of machine learning software. These recent works seek to learn neural network representations of problems because of their representation capacity~\cite{Becerra_2005, Mastorocostas_2002, Delgado_1995}. These works are partly motivated by the belief that modern systems are complicated and existing linear or linear-subspace models are no longer capable of representing the systems we seek to model. 

In this paper, we explore the second challenge -- that of posing an optimization problem, or, more generally, specifying a goal whose minimum will yield a system with predictive power. We argue that this problem is equally, if not more, important than appropriately parameterizing a model space. We support this assertion by showing that many currently used optimization objective specifications fail to recover models \textit{even when the correct model class is known}. Specifically, these specifications cause system identification techniques to break down in the presence of sparse measurements and/or noisy data.  

We advocate a probabilistic approach to system dynamics that explicitly provides for the representation and incorporation of three uncertainties: parameter uncertainty, model uncertainty, and measurement uncertainty. This probabilistic setting, given in Section~\ref{sec:problem}, poses the problem as a hidden Markov model and is well known in the estimation and filtering literature across disciplines~\cite{Sarkka_2013,Law2015,Barfoot2017}. Despite being well known, this setting has not been thoroughly compared to predominant system identification approaches in the context of model learning rather than filtering/smoothing. 

The solution to this formulation is a posterior distribution of the model parameters given the observed data. As a result, predictions and forecasting become probabilistic -- weighting future outcomes by their relative probabilities.  This posterior distribution must be computed using computational inference approaches such as Markov Chain Monte Carlo or variational inference. Given a posterior distribution, goal-oriented estimators can be extracted based on a specified loss and risk metric~\cite{Berger_1985}. For instance, it is well known that the posterior mean is the optimal estimator for Bayes risk with squared loss, and the posterior median is optimal for $L_1$ loss.

To this end, our contributions involve proving that several existing and popular approaches for system identification, sparse identification of nonlinear dynamics (SINDy)~\cite{Brunton_2016} and dynamic mode decomposition (DMD)~\cite{schmid_2010}, are realizations of the probabilistic framework under some limiting assumptions (they assume no measurement uncertainty). We choose these two approaches because they are representative of many (nonlinear) least squares type approaches that are used. We then empirically demonstrate that we yield improved predictions compared to these approaches on wide varying problems. Concretely, our contributions are the following:
\begin{enumerate}
    \item A complexity analysis of the unscented Kalman filter MCMC (UKF-MCMC) algorithm developed in~\cite{Erazo_2018}, which enables an approximate marginal Markov Chain Monte Carlo algorithm to sample from the marginal posterior of the model parameters;
    \item Theorems~\ref{th:DMD} and~\ref{th:sindy} proving that DMD and SINDy can be viewed as specific cases of the presented probabilistic approach with additional assumptions of zero measurement noise; and
    \item A wide ranging set of numerical simulation results demonstrating the robustness and improved prediction quality of our approach in all cases, including sparse and noisy data. 
\end{enumerate}

The UKF-MCMC approach mentioned in the first contribution refers to a computational algorithm that targets the marginal posterior distribution of the model parameters to avoid performing inference over the joint parameter-state space. It can be viewed as an approximation of the marginal likelihood that is traditionally very difficult to compute for dynamical systems~\cite{Haario2015}.  The UKF-MCMC algorithm is one of a number of algorithms that have been recently developed that draw on Gaussian-based filtering to approximate the marginal likelihood~\cite{Noh_2019, Drovandi_2019, Khalil_2015}. These algorithms trade off the approximation quality for some additional computational efficiency compared with the seminal particle-marginal approach of Andrieu~\cite{Andrieu_2009}, which is able to reconstruct the exact posterior. Nevertheless, our results indicate that the posterior approximated by the UKF-MCMC algorithm is still able to reconstruct systems with good accuracy. 

We apply the UKF-MCMC algorithm to the hierarchical Bayesian setting where we explicitly learn the process and measurement covariance of the dynamical system. Furthermore, we use standardized uninformative priors for the model parameters and standard half-normal priors for the unknown covariances~\cite{Gelman_2006}. As a result, our algorithm requires no additional parameters, besides number of MCMC samples, compared to competing single-point estimators (DMD and SINDy). Furthermore, we provide a computational complexity analysis showing that the expense of our approach compared to these existing approaches grows linearly with the number of data points. However, our accuracy gains are shown to sufficiently offset this expense.

The second contribution aims to uncover, or at least interpret, some of the underlying assumptions that have led to observed poor performance of the methods to which we compare.  Many data-driven methods claim a certain degree of objectivity (as compared to, for instance, the Bayesian approach we propose here) because they avoid placing strong assumptions (priors) on the system model that may influence the method's estimate. In reality, however, ``analyses that have the appearance of objectivity virtually always contain hidden, and often quite extreme, subjective assumptions''~\cite{Berger_1985}.   It will be shown that the estimators DMD and SINDy hold the hidden assumption that uncertainty enters only through process noise and that the measurements are noiseless.  Conversely, techniques such as parameter optimization of deterministic ODEs account only for noise in the measurements and not in the process.  The Bayesian estimator presented here will be shown to outperform these common approaches as soon as their underlying assumptions are violated, even in the modifications of these algorithms that incorporate denoising~\cite{Hemati_2017, Chartrand_2011}.

The rest of this paper is structured as follows. In Section~\ref{sec:lsq_objectives}, we explain the central problem this paper hopes to address: how common least squares-based system ID approaches create objective functions with certain undesirable features.  We illustrate this problem by providing the contours of two common objective functions for a simple two-dimensional problem and show how the Bayesian approach incorporates the advantageous features of both without including the problematic features.  In Section~\ref{sec:problem}, we detail the probabilistic framework of a system ID problem, including the problem setup and primary goals.  Then in Section~\ref{sec:existing_approaches}, we provide an analysis of two existing approaches to system ID: DMD and SINDy.  In this section, we give a brief explanation on the implementation of both algorithms and then provide theory that unveils the underlying assumptions used by these approaches.  Section~\ref{sec:algorithm} outlines the algorithms used to implement the Bayesian approach and provides a comparison of their computational complexity to that of DMD and sparse regression.  Finally, Section~\ref{sec:experiments} applies the Bayesian algorithm to five different dynamical systems including linear, nonlinear, chaotic, and PDE systems.  Comparisons of the Bayesian algorithm to DMD and SINDy are given, and it is shown that not only is the Bayesian approach just as effective for systems for which these common approaches display exemplary performance, but the Bayesian algorithm also remains robust in certain regimes where DMD and SINDy fail.

\section{Representative challenges in common least squares approaches to system ID}
\label{sec:lsq_objectives}

In this section, we highlight the geometry of the objective functions of several representative optimization formulations for system identification that we explore in this paper. Specifically, we consider three objectives: one that considers measurement uncertainty but no process/model uncertainty; one that assumes process uncertainty but no measurement uncertainty; and finally our proposed approach that considers both process and measurement uncertainty. The first two approaches are the most commonly used, but we show that they suffer from multiple minima and poor data sensitivity, respectively. Furthermore, while variations of these approaches are used on complex systems, we highlight their limitations in an extremely simple setting of recovering a linear pendulum.

To motivate the results, consider a simple setting where the true model is a linear oscillator with a frequency of 2.00rad/s. Suppose that the learning objective is to identify the frequency. One might intuitively believe that the following least squares objective (in the time-domain) would appropriately penalize incorrect frequencies $\omega$
\begin{align}
   J(\omega, T) =  \int_0^T \Big(\text{cos}(2.00t) - \text{cos}(\omega t)\Big)^2dt,
\end{align}
where $J(\omega, T)$ measures the ``error'' of estimating some parameter $\omega.$ An optimization scheme would then try to find the parameter $\omega$ to minimize $J$. This objective is not derived from any arguments, rather it is intuitively specified and here we attempt to see whether this specification makes sense.\footnote{For this linear problem, it is more appropriate to consider frequency-domain system ID, which would not encounter the problems described here. However, these types of time-domain system ID procedures using least squares-based regression/machine learning approaches are increasingly being used for complex nonlinear systems~\cite{Chen_2020, Rackauckas_2020, Erichson_2019, Raissi_2019}, and we seek to show that they can be limited in an extremely simple setting.}

Prior to considering the full system identification, we consider a property of the least squares objective. We compare the cost of two parameters $\omega=2.01$rad/s and $\omega=4.00$rad/s at two different times $T=10$ and $T=1000.$ In the case where we obtain noise-free data for ten seconds, we obtain $J(2.01, 10) = 0.02$ and $J(4.00, 10) = 9.63$ -- as we desire, the cost of estimating $\omega=2.01$rad/s is more than 100 times lower than estimating $\omega=4.00$rad/s. However, suppose that we obtain data for 100 times longer. Then we obtain $J(2.01,1000) = 1053.96$ and $J(4.00, 1000) = 999.58$ -- the relative difference between the two objectives has shrunk tremendously. 

In this example, even small perturbations from the true parameters of a system yield large errors given enough time, and, in this case, greatly reduce the relative benefit of $\omega = 2.01$ over $\omega = 4.00.$ In simpler terms, this example demonstrates that as the number of data points increases, the relative difference between $\omega = 2.01$ and $\omega=4.00$ decays! The practical implication is that optimization formulations may have significantly more difficulty in distinguishing between correct and incorrect parameters. The issue here is that the least squares objective does not seem to behave as intuition would expect, nor does it match the behavior we are aiming to achieve. Specifically, we seek an objective function that exaggerates the difference between parameters with small errors and those with large errors as more data are obtained.

In this paper, we show that an approach that introduces (and then seeks to reduce) the uncertainty in parameters, models, and measurements leads to objective functions that are far better behaved.  For example, to account for imprecision in our parameter estimates, we must include process noise in our model, allowing us to reposition our reconstructed trajectory at the time of every measurement. This way, the model is not given enough time for errors to accumulate and eventually become indistinguishable from a far worse model. While the process noise is not the model error, it does encapsulate the fact that the predicted motion is incorrect. In fact, we empirically show that it should be included even when the model class spanned by the parameters encapsulates the truth.

To this end, we consider three objective functions
\begin{align}
    \theta^* &= \argmin_{\theta} \  \sum_{i=1}^n \lVert (y_i - x(t_i)) \rVert_2^2  \quad \text{subject to} \quad  \frac{dx}{dt} = f(t, x; \theta) \label{eq:detobj} \\
    \theta^* &= \argmin_{\theta} \  \sum_{i=2}^n \lVert y_{i} - \Psi(y_{i-1}; \theta) \rVert_2^2 \label{eq:noobscobj}\\
    \theta^{*} &= \argmax_{\theta} \log( p(y_1,\ldots,y_{n} \mid \theta)) \label{eq:ourobj}
\end{align}
where $\theta$ are model parameters, $f$ are continuous dynamics representing the time derivatives of a problem, and $\Psi$ are discrete propagators. The first objective, Equation~\eqref{eq:detobj}, assumes deterministic dynamics and performs least squares regression to match the trajectory of a differential equation to the data. The least squares objective here implicitly accounts for measurement noise, and is widely used in the literature~\cite{Ayed_2019, Peng_2010, Evensen_1998}. The second objective, Equation~\eqref{eq:noobscobj}, assumes there is no measurement noise, only process noise/model uncertainty, and instead builds a propagator between observations. This objective is representative of DMD~\cite{schmid_2010} and similar least squares approaches~\cite{Wu_2020, Constantine_2012,Dai_1989}. The final objective, Equation~\eqref{eq:ourobj}, is the log marginal likelihood arising from Theorem~\ref{th:post} that we advocate for in this paper. Note that standard $L_2$ and sparsity-enhancing $L_1$ regularizations/priors can also be included to each of these objectives, but they do not change the qualitative conclusions.

Figure~\ref{fig:objCont} shows these objective functions for the case of learning a continuous time linear pendulum
\begin{align}
    \begin{bmatrix}\dot{x}_1 \\ \dot{x}_2 \end{bmatrix} = 
    \begin{bmatrix} 0 & \theta_1 \\ \theta_2 & 0  \end{bmatrix}
    \begin{bmatrix}x_1 \\ x_2 \end{bmatrix},
    \label{eq:pend_conts_param}
\end{align}
where the true parameters are $\theta_1 = 1$ and $\theta_2 = -g/L.$ Here, $g$ is the acceleration due to gravity and $L$ is the pendulum length. Data are obtained in 0.1 second increments with noise standard deviation of 0.1. Each column corresponds to a different objective. The first column corresponds to the objective of Equation~\eqref{eq:detobj}, the second corresponds to Equation~\eqref{eq:noobscobj}, and the third to Equation~\eqref{eq:ourobj}.

The rows of this figure correspond to 20, 40, and 80 data points collected in 0.1 second increments, respectively. In the left panel, we see that the assumption of a deterministic system (no process noise) results in many local minima, each of which represents a system that matches the data closely at some points and at other points may be completely opposite.  In the middle column we see that excluding the measurement noise has smoothed over certain features of the deterministic system and the objective becomes insensitive to the number of data points. This panel corresponds to the shape of the objective used by DMD. In this case, finding the minimum of the objective is fast, but the reconstructed system may lack some of the key features of the true trajectory and is not tremendously affected by increasing data.  Lastly, the third column represents the objective arising from our probabilistic approach that considers all types of uncertainty. Only in this approach do we see that increasing data has a beneficial effect on the objective function. Multiple local minima do not exist, the characteristic shape seen in the left column remains, and the objective becomes steeper in the region of the minimum.

\begin{figure}
    \centering
    \begin{subfigure}{0.35\linewidth}
        \includegraphics[width=\linewidth]{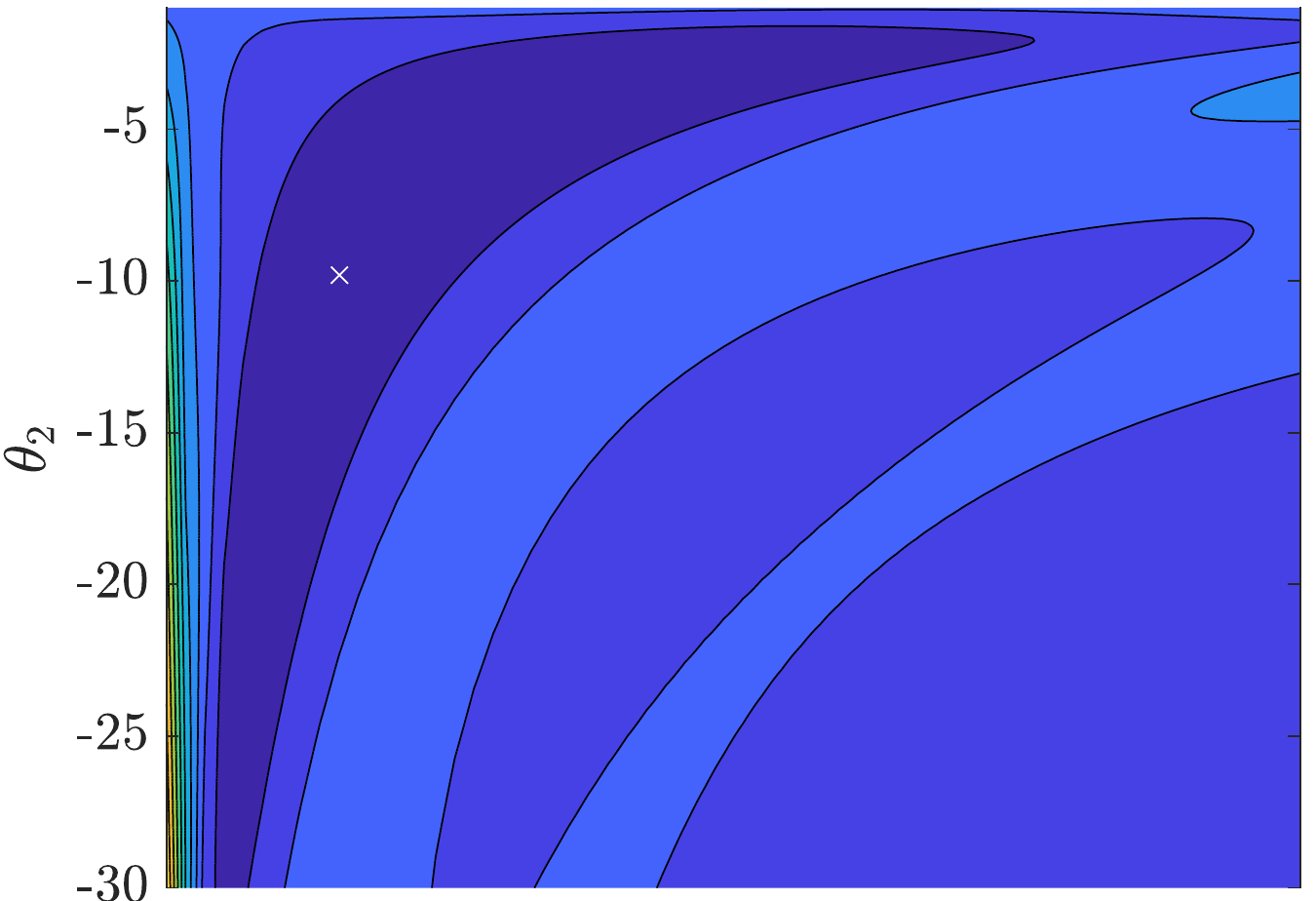}
        \label{fig:objCont11}
    \end{subfigure}
    \begin{subfigure}{0.31\linewidth}
        \includegraphics[width=\linewidth]{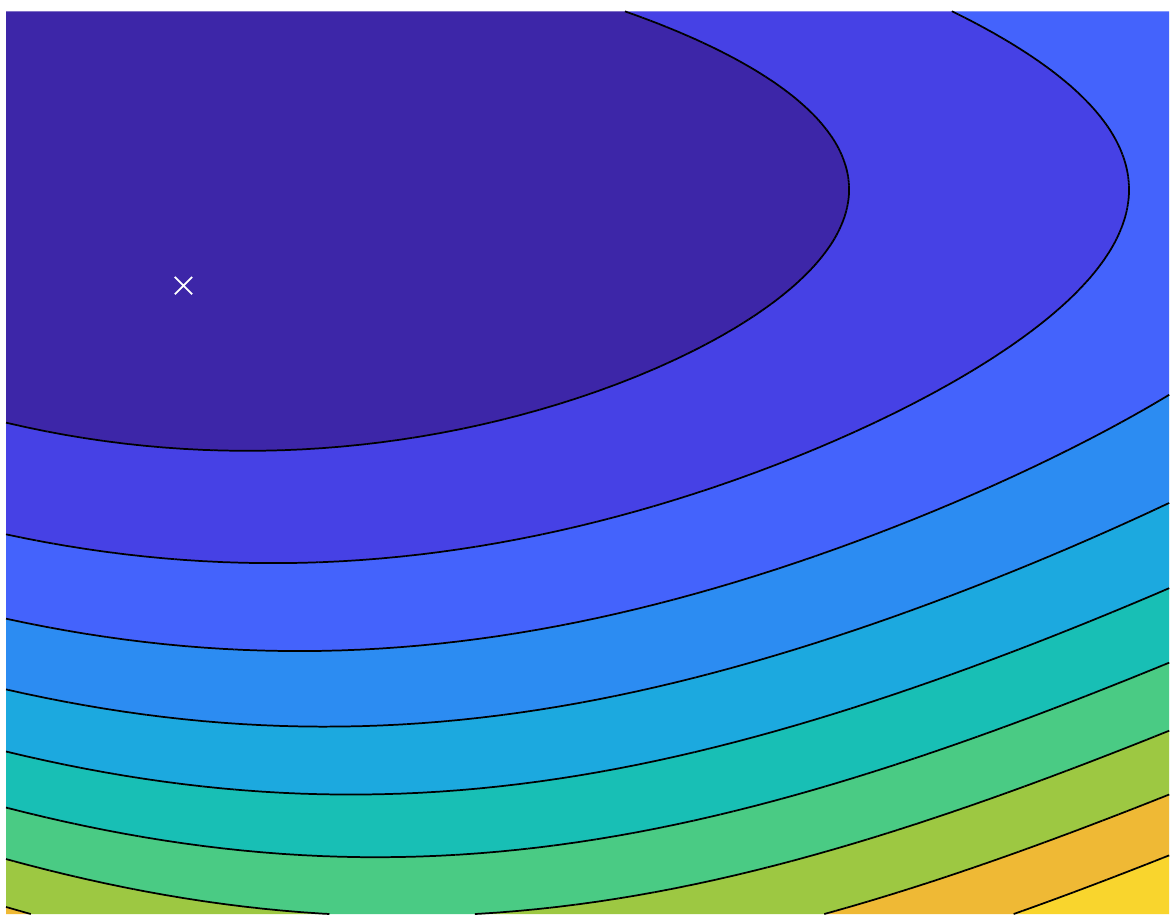}
        \label{fig:objCont13}
    \end{subfigure}
    \begin{subfigure}{0.32\linewidth}
        \includegraphics[width=\linewidth]{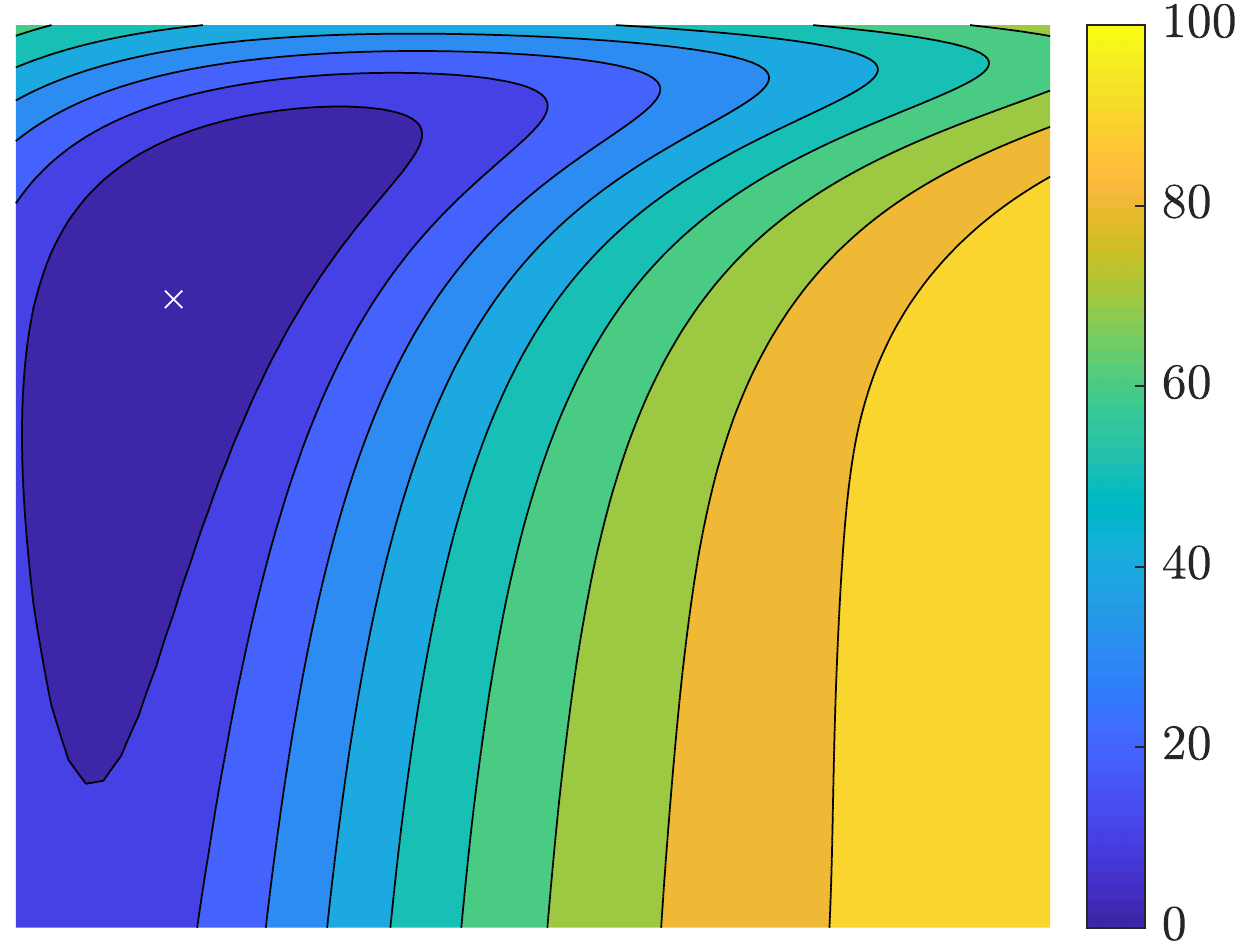}
        \label{fig:objCont12}
    \end{subfigure}
    \begin{subfigure}{0.35\linewidth}
        \includegraphics[width=\linewidth]{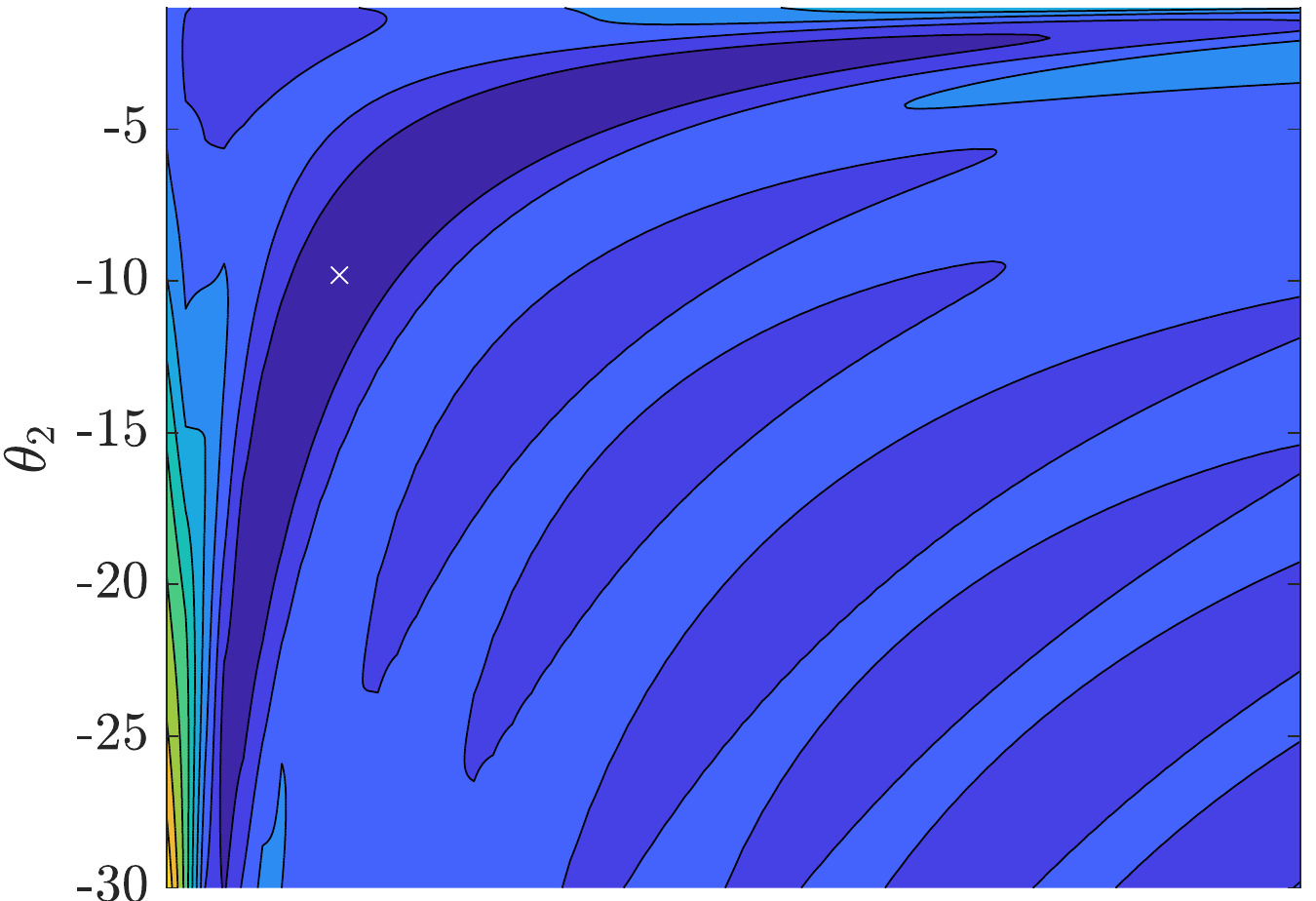}
        \label{fig:objCont21}
    \end{subfigure}
    \begin{subfigure}{0.31\linewidth}
        \includegraphics[width=\linewidth]{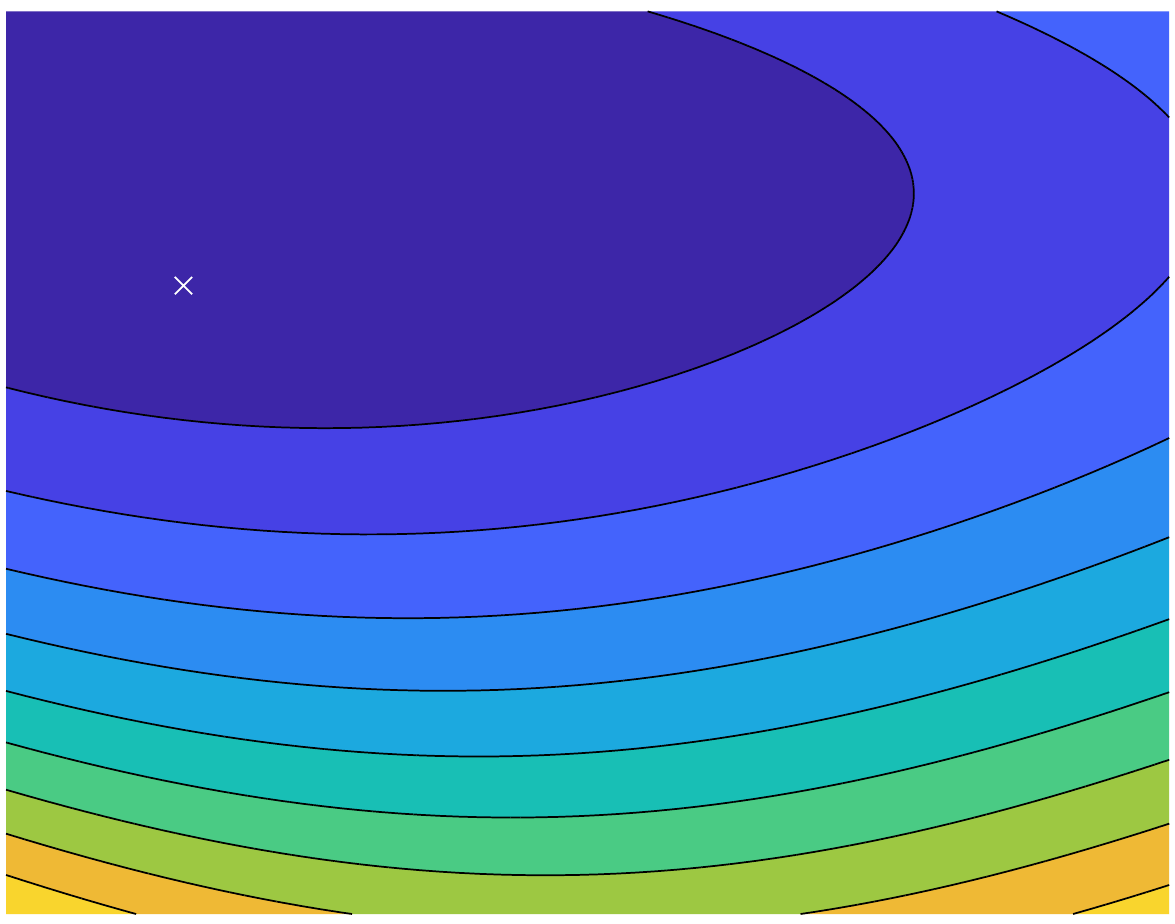}
        \label{fig:objCont23}
    \end{subfigure}
    \begin{subfigure}{0.32\linewidth}
        \includegraphics[width=\linewidth]{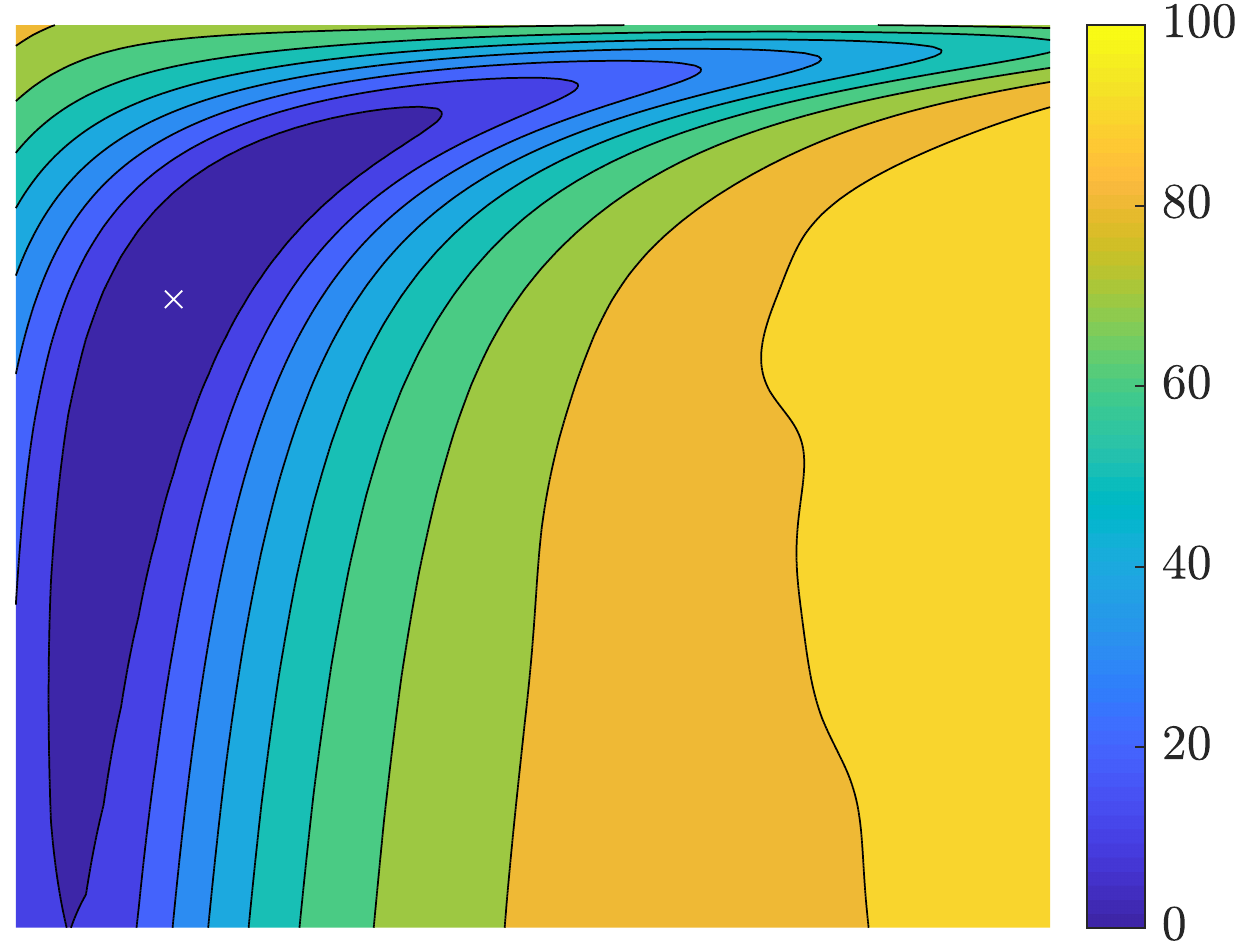}
        \label{fig:objCont22}
    \end{subfigure}
    \begin{subfigure}{0.35\linewidth}
        \includegraphics[width=\linewidth]{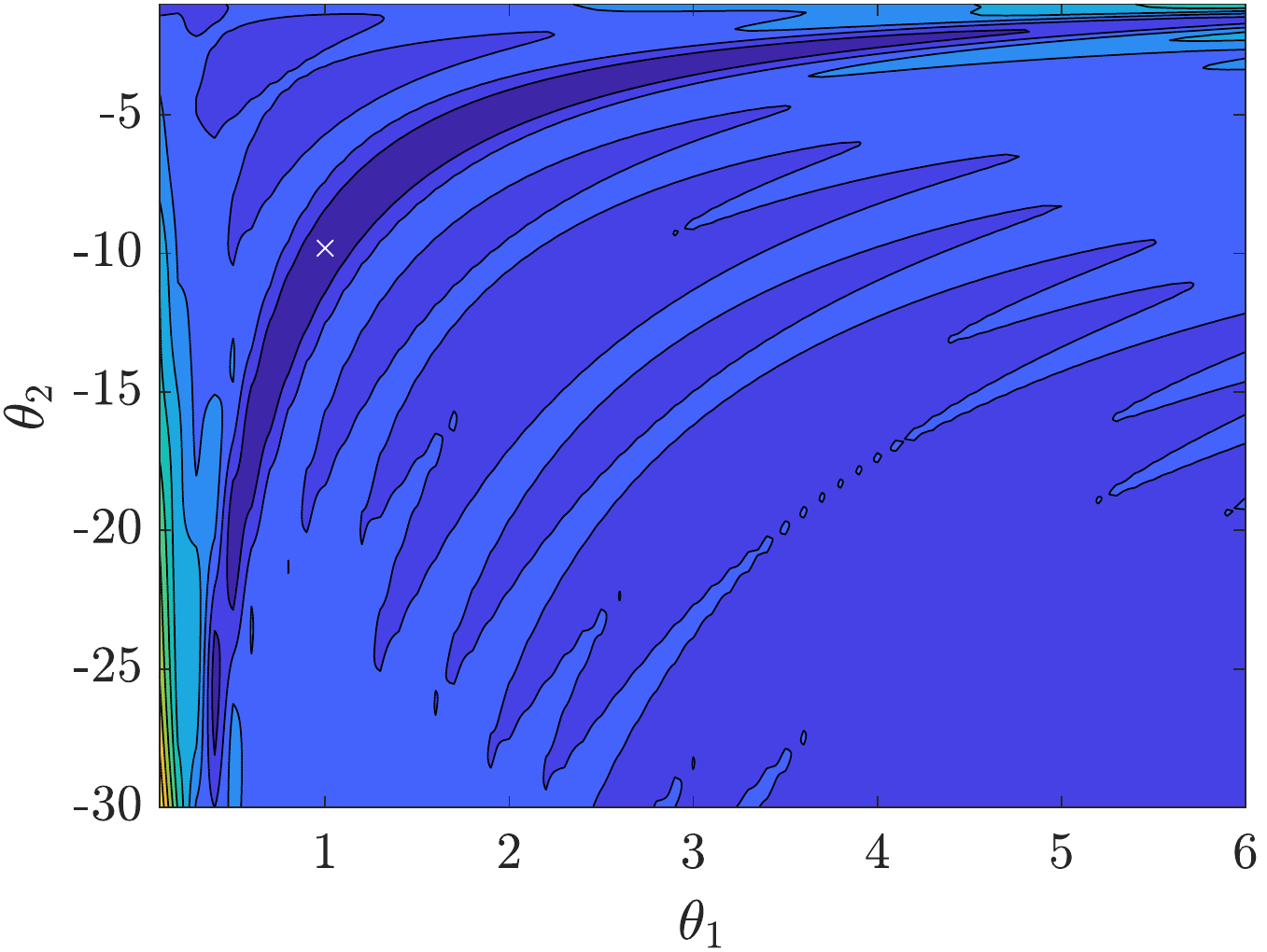}
        \caption{No Process Noise}
        \label{fig:objCont31}
    \end{subfigure}
    \begin{subfigure}{0.31\linewidth}
        \includegraphics[width=\linewidth]{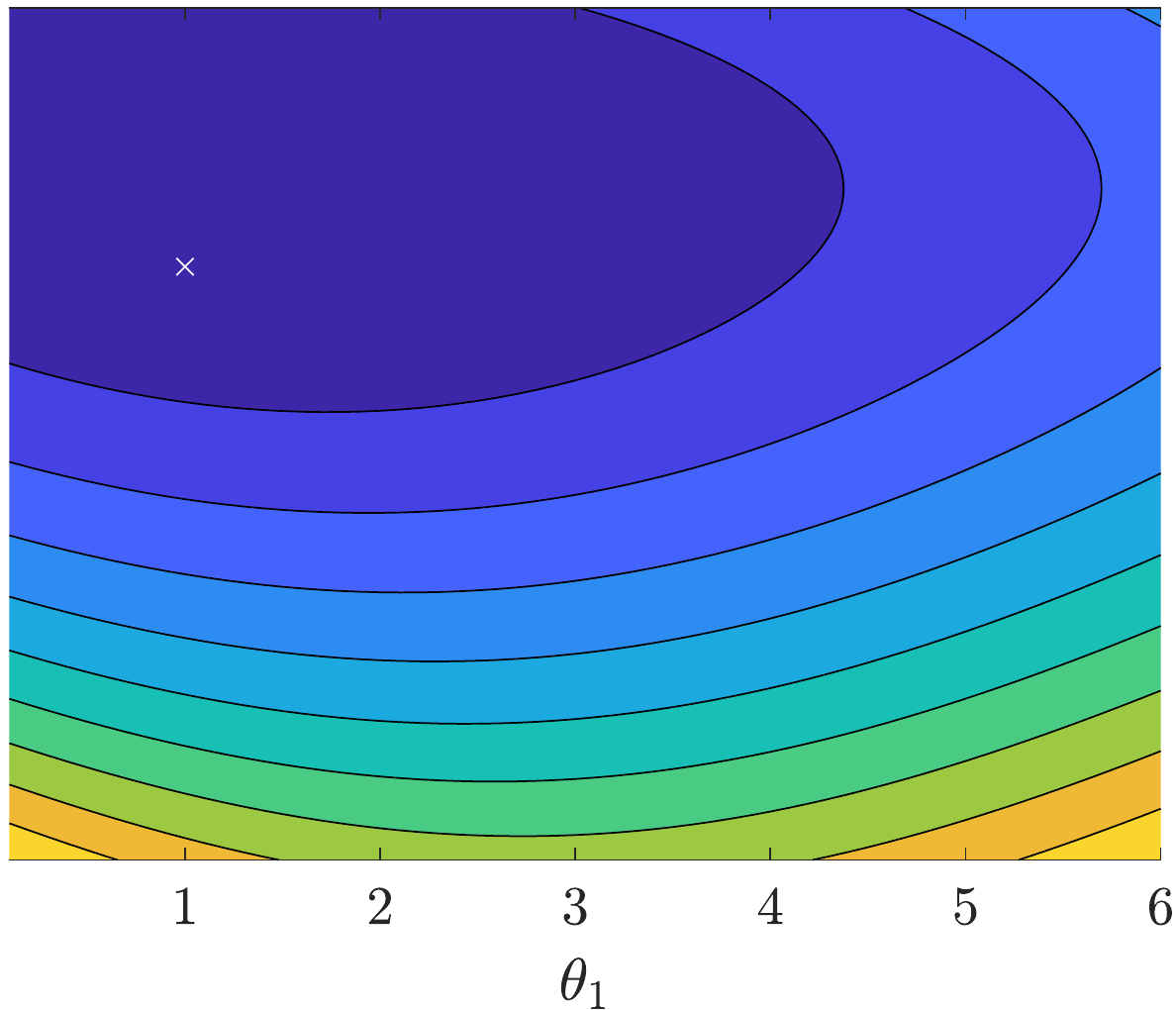}
        \caption{No Measurement Noise}
        \label{fig:objCont33}
    \end{subfigure}
    \begin{subfigure}{0.32\linewidth}
        \includegraphics[width=\linewidth]{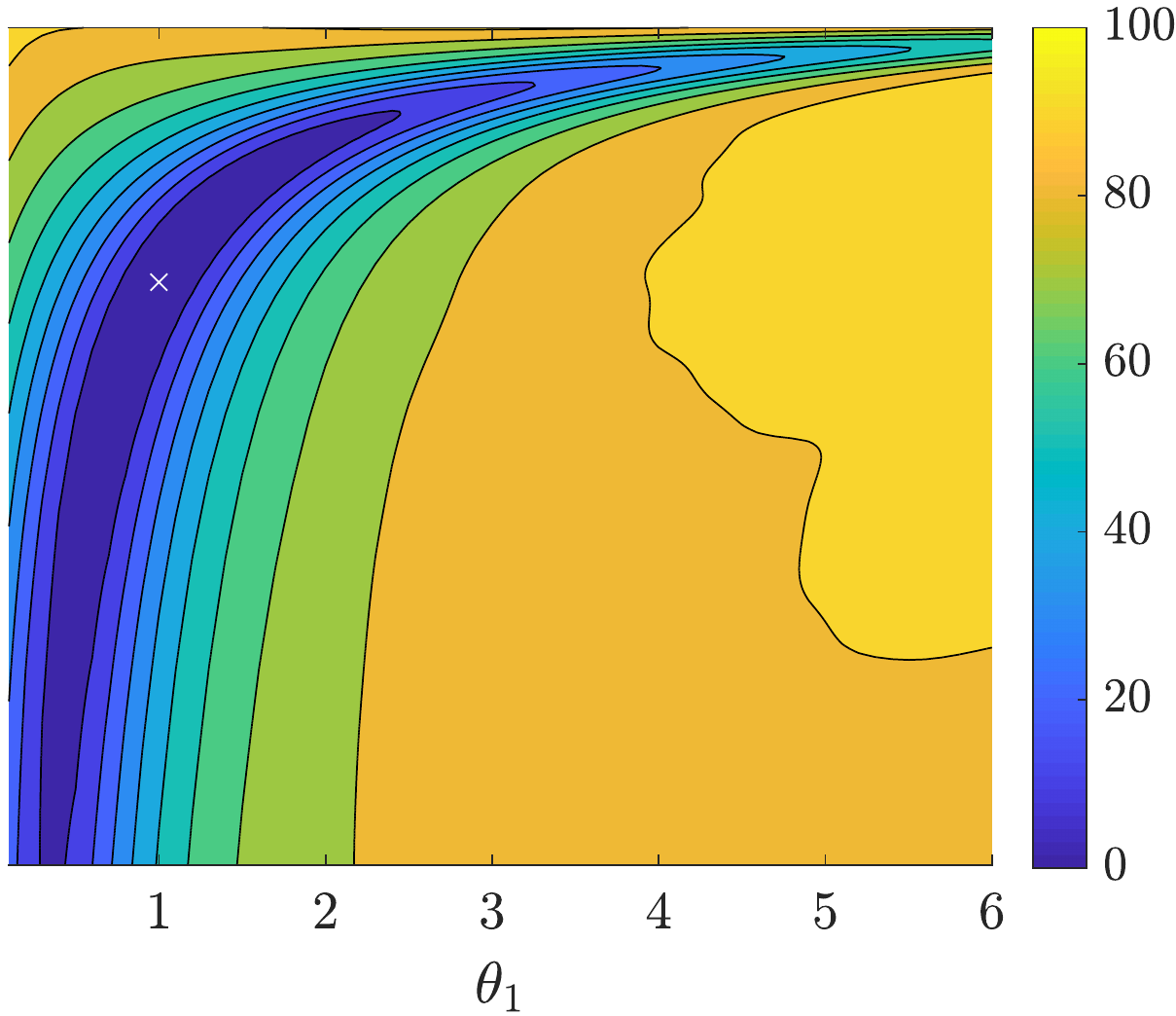}
        \caption{Noise Mixture}
        \label{fig:objCont32}
    \end{subfigure}
    \caption{Comparison of three optimization objectives for the identification of a linear pendulum. The left column uses a least squares objective that neglects process noise, the middle column uses a least squares objective that neglects measurement noise, and the right column is the log marginal likelihood that accounts for both. The rows correspond to the objective functions obtained after 20, 40, and 80 data points are taken at 0.1 second intervals from top to bottom. White crosses indicate true parameters. Neglecting process noise results in many local minima. Neglecting measurement noise results in an objective insensitive to the number of data. The Bayesian approach results in the ideal scenario where the objective becomes steeper in the direction of the minimum as the amount of data increases.}
    \label{fig:objCont}
\end{figure}

\section{General problem setting}\label{sec:problem}

In this section, we describe the probabilistic framework for system identification, the problem statement, and a description of our high-level solution approach.

\subsection{Probabilistic formulation}

In this section, we describe the probabilistic inference problem.

Let $\reals$ denote the set of reals and $\posint$ denote the set of positive integers. Let us define the norm of a vector as $\lVert a \rVert_C^2 = a^TC^{-1}a$. Let $(\Omega, \mathcal{F}, \prob)$ denote a probability space, $\dimx \in \posint$ denote the size of a state space, $\dimy \in \posint$ denote the size of an observation space, and $k \in \posint$ denote a time index corresponding to a time $t_k \geq 0$. Sequential time indices will typically occur with a constant interval $\Delta$ so that $t_k = t_{k-1} + \Delta$. 

We model the dynamical systems as evolving hidden/uncertain states $X_k \in \reals^\dimx$ through a discrete-time dynamical system. We obtain information about these states through a noisy measurement operator providing us data $y_{k} \in \reals^{\dimy}.$ These data can be viewed as realizations of another stochastic process $Y_k$ that is dependent on the hidden states.  

The dynamics and measurement operators are uncertain and the parameters $\theta \in \reals^\dimpar$ for $\dimpar \in \posint$ define a search space over which we will seek to learn the system.  We partition the parameters  $\theta = (\thetdyn, \thetobs, \thetsig, \thetgam)$ into different aspects of the problem including the dynamics model parameters $\thetdyn$, observation model parameters $\thetobs$, process noise parameters $\thetsig$, and observation noise parameters $\thetgam.$ Together these states, observations, and parameters are related through a hidden Markov model describing a discrete-time stochastic process~\cite{Sarkka_2013}
\begin{equation}
\begin{aligned}
  X_k &= \Psi(X_{k-1},\theta_{\Psi}) + \xi_k; &\xi_k \sim \mathcal{N}(0,\Sigma(\theta_{\Sigma})) \\ 
  Y_{k} &= h(X_{k}, \theta_{h}) + \eta_{k};  &\eta_k \sim \mathcal{N}(0,\Gamma(\theta_{\Gamma})), 
  \end{aligned} 
  \qquad \text{for} \quad   k = 1,\ldots, \numObs
   \label{eq:sys}
\end{equation}
where $\Psi: \reals^\dimx \times \reals^\dimpar \to \reals^\dimx$ is the dynamics operator, $\xi_{k}$ is the process noise with uncertain covariance $\Sigma(\thetsig)$, $h: \reals^\dimx \times \reals^\dimpar \to \reals^\dimy$ is the observation/measurement operator, $Y_{k}$ is the predictive stochastic process for the observable, and $\eta_{k} $ is the observation noise with uncertain covariance $\Gamma(\thetgam).$ Finally, we have an additional source of uncertainty corresponding to the initial condition of the states $X_0.$  A visual representation, in the form of a Bayesian network of this model is provided by Figure~\ref{fig:sys}.

Examples of $\Psi$ could include physics-inspired PDE operators~\cite{Wu_2020},  empirical linear models (a matrix), or nonlinear models such as neural networks~\cite{Chen_2018, Tsoulos_2009, Lagaris_1998}. The observation operator $h$ is typically some known sensor model that may or may not have uncertain calibration parameters $\thetobs$. We include the parameters for the observation model to maintain generality.

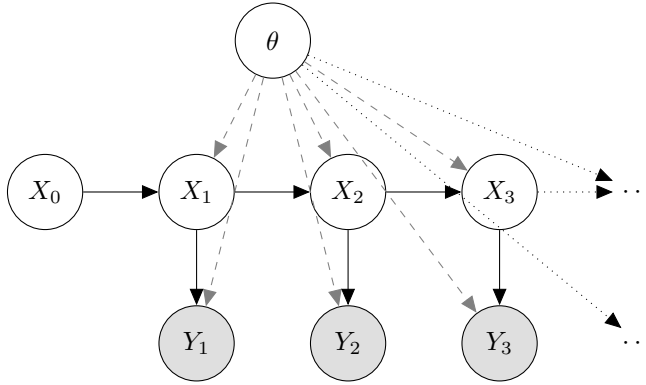
\begin{figure}
\centering
  \begin{tikzpicture}
      \node[obs, minimum width=1cm] at (0, 0) (d1) {$Y_1$};
      \node[obs, minimum width=1cm,right=of d1] (d2) {$Y_2$};
      \node[obs, minimum width=1cm, right=of d2] (d3) {$Y_3$};
      \node[right=of d3] (yn) {$\cdots$};

      \node[latent, minimum width=1cm, above=of d1] (x1) {$X_1$};
      \node[latent, minimum width=1cm, above=of d2] (x2) {$X_2$};
      \node[latent, minimum width=1cm, above=of d3] (x3) {$X_3$};
      \node[latent, minimum width=1cm, left=of x1] (x0) {$X_0$};
      \node[right=of x3] (xn) {$\cdots$};      

      \node[latent, minimum width=1cm, above=of x2, xshift=-1cm] (p1) {$\theta$};
      
      \draw[->, dashed, gray] (p1) -- (d1);
      \draw[->, dashed, gray] (p1) -- (d2);
      \draw[->, dashed, gray] (p1) -- (d3);
      \draw[->,dotted] (p1) -- (yn);      

      \draw[->] (x1) -- (d1);
      \draw[->] (x2) -- (d2);
      \draw[->] (x3) -- (d3);

      \draw[->] (x0) -- (x1);
      \draw[->] (x1) -- (x2);
      \draw[->] (x2) -- (x3);
      \draw[->,dotted] (x3) -- (xn);
      
      \draw[->, dashed, gray] (p1) -- (x1);
      \draw[->, dashed, gray] (p1) -- (x2);
      \draw[->, dashed, gray] (p1) -- (x3);
      \draw[->,dotted] (p1) -- (xn);
  \end{tikzpicture}
\caption{Bayesian network representation of the system identification problem. The data are realizations of the random variables $Y_k$.}
\label{fig:sys}
\end{figure}

System~\eqref{eq:sys} implicitly defines several probability distributions that completely describe our state of knowledge. The first distribution reflects the Markovian propagation dynamics
\begin{align}
    \probd(X_{k} \mid X_{k-1}, \thetdyn, \thetsig) &= \frac{1}{\sqrt{2\pi}^\dimx \lvert \Sigma(\thetsig)\rvert^{\frac{1}{2}}}\exp\left(-\frac{1}{2} e(X_k, X_{k-1},\thetdyn, \thetsig)\right) 
    \label{eq:prob_states} \\
    e(X_k, X_{k-1},\thetdyn, \thetsig) &= \lVert X_k - \Psi(X_{k-1}, \thetdyn) \rVert_{\Sigma(\thetsig)}^2,
    \label{eq:states_error}
\end{align}
where the error function $e$ represents the \textit{misfit}, or model error, of the dynamics under a fixed set of parameters. 

The next distribution reflects the noisy measurement models
\begin{align}
\probd(Y_k \mid X_{k}, \thetobs, \thetgam) &= \frac{1}{\sqrt{2\pi}^\dimy \lvert \Gamma(\thetgam)\rvert^{\frac{1}{2}}}
\exp\left(-\frac{1}{2} r(X_k, \thetobs, \thetgam)\right)
\label{eq:prob_meas} \\
    r(X_k, \thetobs, \thetgam) &= \lVert Y_k - h(X_k, \thetobs) \rVert_{\Gamma(\thetgam)}^2,
    \label{eq:meas_residual}
\end{align}
where the residual $r$ represents the misfit between the states and the observed measurements. Together, along with a prior, these distributions will enable us to concretely form the learning problem, which we establish in the next section.

\subsection{Goals}

In this section, we describe our two objectives: system identification (learning) and prediction/forecasting. 

Our learning objective is to determine a dynamical model $\Psi$. Specifically, this objective requires representing our knowledge about the parameters $\theta$ (or $\thetdyn$) after data are obtained. This knowledge is represented via a conditional distribution over $\theta$ given the observed data. This distribution is given by Bayes' rule 
\begin{equation}
    \probd(\theta \mid \yn) = \probd(\theta) \frac{\like(\theta ; \yn)}{\probd(\yn)},
    \quad \textrm{ where } \yn = \left(y_{1}, \ldots, y_{\numObs}\right),
    \label{eq:target_conditional}
\end{equation}
where the prior is denoted by $\probd(\theta)$ and the marginal likelihood is a function of the unknown parameter
\begin{equation}
    \like(\theta ; \yn) \equiv \probd(Y_1=y_1,\ldots,Y_n=y_\numObs \mid \theta).
\end{equation}
This conditional/posterior distribution captures all the relevant information about our parameters contained in the data.
It will be useful to leverage the sequential/Markovian nature of the process to factorize this likelihood as
\begin{equation}
     \like(\theta ; \yn) = \probd(Y_1=y_1 \mid \theta) \prod_{k=2}^\numObs \probd\left(Y_k=y_{k} \mid \theta, \mathcal{Y}_{k-1}\right) = \like_1(\theta; \mathcal{Y}_1)\prod_{k=2}^\numObs \like_i\left(\theta; \mathcal{Y}_i\right),
     \label{eq:marg_like_frac}
\end{equation}
where we have set $\like_1(\theta; \mathcal{Y}_1) \equiv \probd(Y_1=y_1 \mid \theta)$ and
$\like_k(\theta ; \mathcal{Y}_k) \equiv \probd(Y_k=y_{k} \mid \theta, \mathcal{Y}_{k-1})$ for $k=2,\ldots,\numObs$.

Our second goal is to predict, or forecast,  the system state at some future time $t_k$. This prediction could either be the full posterior predictive distribution $\probd(X_k \mid \yn)$ or some ``best estimate'' $X_k^*$ that can be derived from the posterior to satisfy some optimality conditions~\cite{Berger_1985}.  Furthermore, these two goals (system identification and prediction) are related in that the prediction is obtained by averaging over all possible system parameters, weighted according to the posterior distribution,
\begin{equation}
    \probd(X_k \mid \yn) = \int \probd(X_{k} \mid \theta) \probd(\theta \mid \yn) d\theta.
    \label{eq:pred}
\end{equation}

\subsection{Marginal likelihood computation}
\label{subsec:marg_like_computation}
In this section, we review the formulas for computing the marginal likelihood used in the target distribution~\eqref{eq:target_conditional}. We first present the general case~\cite{Sarkka_2013}. Then, we specialize the general case into special cases with (1) zero process noise (model error is ignored) and (2) noiseless, invertible measurements (measurement error is ignored). The formulation for evaluating the marginal likelihood is provided by the result of Theorem~\ref{th:post}. 
\begin{theorem}[Marginal likelihood (Th. 12.1~\cite{Sarkka_2013})]\label{th:post}
Let $\yk \equiv \left\{y_{i}; i \leq k \right\}$ denote the set of all observations up to time $k$. Let the initial condition be uncertain with distribution $\probd(X_0 \mid \theta).$ Then the marginal likelihood~\eqref{eq:marg_like_frac} is defined recursively in three stages: prediction
\begin{equation}
    \probd( X_{k+1} \mid \theta, \yk) = 
    \frac{\int \exp\left(-\frac{1}{2} e(X_{k+1}, X_k,\thetdyn, \thetsig)\right) \probd(X_{k} \mid \theta,  \yk) dX_k}{\sqrt{2\pi}^\dimx \lvert \Sigma(\thetsig)\rvert^{\frac{1}{2}}}
    \label{eq:predict}
\end{equation}
update,
\begin{equation}
    \probd\left(X_{k+1} \mid \theta, \mathcal{Y}_{k+1} \right) 
    = \probd(X_{k+1} \mid \theta, \yk)      \frac{\exp\left(-\frac{1}{2} r(X_{k+1}, \thetobs, \thetgam)\right)}{\sqrt{2\pi}^\dimy \lvert \Gamma(\thetgam)\rvert^{\frac{1}{2}}\probd(Y_{k+1} \mid \theta, \yk)} \label{eq:update}
\end{equation}
and marginalization,
\begin{equation}
   \like_{k+1}(\theta \mid \mathcal{Y}_{k+1}) = \int \probd(X_{k+1} \mid \theta, \yk)      \frac{\exp\left(-\frac{1}{2} r(X_{k+1}, \thetobs, \thetgam)\right)}{\sqrt{2\pi}^\dimy \lvert \Gamma(\thetgam)\rvert^{\frac{1}{2}}} dX_{k+1} \label{eq:marg}
\end{equation}
for $k = 1,2,\ldots.$
\end{theorem}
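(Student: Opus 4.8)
The plan is to derive the three recursions directly from the conditional-independence structure of the hidden Markov model~\eqref{eq:sys} (equivalently, the Bayesian network of Figure~\ref{fig:sys}), treating $\theta$ as a fixed conditioning variable throughout. Two d-separation facts do essentially all of the work: first, $X_{k+1} \ci \yk \mid (X_k,\theta)$, so that the one-step transition density collapses to $\probd(X_{k+1}\mid X_k,\theta,\yk) = \probd(X_{k+1}\mid X_k,\thetdyn,\thetsig)$ as given by~\eqref{eq:prob_states}; and second, $Y_{k+1} \ci \yk \mid (X_{k+1},\theta)$, so that the emission density collapses to $\probd(Y_{k+1}\mid X_{k+1},\theta,\yk) = \probd(Y_{k+1}\mid X_{k+1},\thetobs,\thetgam)$ as given by~\eqref{eq:prob_meas}. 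Since this is the classical Bayesian filtering recursion (Th.~12.1 of~\cite{Sarkka_2013}), the proof is bookkeeping with Gaussian densities once these two facts are in hand.

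For the prediction step~\eqref{eq:predict}, I would expand $\probd(X_{k+1}\mid\theta,\yk)$ by the law of total probability over $X_k$, marginalizing the joint $\probd(X_{k+1},X_k\mid\theta,\yk) = \probd(X_{k+1}\mid X_k,\theta,\yk)\,\probd(X_k\mid\theta,\yk)$, then applying the first conditional independence and substituting the Gaussian transition kernel from~\eqref{eq:prob_states}. The normalizing constant $\sqrt{2\pi}^\dimx\lvert\Sigma(\thetsig)\rvert^{1/2}$ factors out of the integral since it does not depend on $X_k$, which yields exactly~\eqref{eq:predict}.

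For the update step~\eqref{eq:update}, I would apply Bayes' rule in the state variable with $y_{k+1}$ as the new datum, conditioning everything on $(\theta,\yk)$:
\[
\probd(X_{k+1}\mid\theta,\mathcal{Y}_{k+1}) = \frac{\probd(Y_{k+1}=y_{k+1}\mid X_{k+1},\theta,\yk)\,\probd(X_{k+1}\mid\theta,\yk)}{\probd(Y_{k+1}=y_{k+1}\mid\theta,\yk)}.
\]
The second conditional independence replaces the likelihood factor by the Gaussian emission density~\eqref{eq:prob_meas}, and the denominator depends only on $\theta$ and $\mathcal{Y}_{k+1} = \yk \cup \{y_{k+1}\}$; identifying it with the incremental factor $\like_{k+1}$ in the factorization~\eqref{eq:marg_like_frac} gives~\eqref{eq:update}. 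The marginalization identity~\eqref{eq:marg} is then just the observation that this same denominator equals $\int \probd(Y_{k+1}\mid X_{k+1},\theta,\yk)\,\probd(X_{k+1}\mid\theta,\yk)\,dX_{k+1}$, i.e., the law of total probability applied to the joint above with the emission density substituted.

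Finally, I would close the recursion at the base case: with the convention $\mathcal{Y}_0=\emptyset$, the prior $\probd(X_0\mid\theta)$ plays the role of the filtered density at time $0$, so~\eqref{eq:predict}--\eqref{eq:marg} at $k=0$ reproduce $\probd(X_1\mid\theta)$ and $\like_1(\theta;\mathcal{Y}_1)=\probd(Y_1=y_1\mid\theta)$, and chaining the per-step normalizers recovers $\like(\theta;\yn)=\prod_{k=1}^\numObs \like_k$ exactly as in~\eqref{eq:marg_like_frac}. I do not anticipate a genuine obstacle; the one point requiring care is rigorously justifying the two conditional-independence claims from the graph and checking that they survive the extra conditioning on $\theta$, which is a global parent of every state and observation node. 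Everything else is routine manipulation of Gaussian integrals and normalizing constants.
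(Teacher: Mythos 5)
Your proposal is correct, and it is exactly the standard Bayesian filtering derivation (Chapman--Kolmogorov prediction, Bayes update in the state variable, and the law of total probability for the incremental evidence) that the paper itself does not reproduce but simply defers to by citing Th.~12.1 of~\cite{Sarkka_2013}; your two conditional-independence facts and the chaining of the per-step normalizers to recover~\eqref{eq:marg_like_frac} are precisely the ingredients of that reference's argument. No gap to report.
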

This theorem provides a recursive algorithm for evaluating the marginal likelihood. This recursion requires not only maintaining a standard Bayesian filter for the prediction and update steps, but also keeping track of the marginalized distribution after every observation. Extensions to situations where data are not obtained at every time step is trivial -- for times when no data are obtained, the update step is skipped. 

\subsubsection{Zero process noise}
\label{subsubsec:zero_process_noise}
In the standard parameter estimation problem, e.g. Equation~\eqref{eq:detobj}, we model the dynamics as deterministic where uncertainty enters only through  measurement noise.  In this case, the Markovian property that leads to distribution~\eqref{eq:prob_states} reduces to a Dirac delta function
\begin{equation}
    \probd(X_k\mid X_{k-1},\thetdyn,\thetsig) = \delta_{X_k}(\Psi(X_{k-1},\thetdyn)).
    \label{eq:probx_det}
\end{equation}

The assumption of zero process noise leads to the marginal likelihood given in Theorem~\ref{th:marg_like_determ}.

\begin{theorem}[Marginal likelihood -- zero process noise]\label{th:marg_like_determ}
Let the dynamics model be deterministic. Then, the marginal likelihood~\eqref{eq:marg_like_frac} is defined recursively as 
\begin{equation}
    \like_k(\theta;\yk) = \frac{\exp\left(-\frac{1}{2}\lVert y_k-h(\Psi^k(X_0,\thetdyn),\thetobs)\rVert_{\Gamma(\thetgam)}^2\right)}{\sqrt{2\pi}^\dimy\lvert\Gamma(\thetgam)\rvert^{\frac{1}{2}}} \quad  \text{for } k=1,\ldots, \numObs, 
\end{equation}
where $\Psi^{k}$ denotes $k$ applications of the dynamics model. Moreover the log marginal likelihood becomes
\begin{equation}
    \log\like(\theta;\yn) = \sum_{k=1}^\numObs \left(-\frac{1}{2}\lVert y_k-h(\Psi^k(X_0,\thetdyn),\thetobs)\rVert_{\Gamma(\thetgam)}^2\right) - \frac{\numObs\dimy}{2}\log 2\pi - \frac{\numObs}{2}\log\lvert\Gamma(\thetgam)\rvert. \label{eq:marg_like_det}
\end{equation}
\end{theorem}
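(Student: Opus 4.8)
The plan is to specialize the recursion of Theorem~\ref{th:post} by inserting the degenerate transition kernel~\eqref{eq:probx_det} and then tracking how it propagates through the predict, update, and marginalization steps. The central observation is that a Dirac transition collapses the entire filtering distribution onto a moving point mass, after which every integral in the recursion is evaluated for free by the sifting property.

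First I would record the standing hypothesis that makes the statement well posed: since the dynamics are deterministic, the whole state trajectory is the deterministic image of the initial state, so I take $\probd(X_0 \mid \theta)$ to be a Dirac mass at a point still written $X_0$ (equivalently, $X_0$ is known, or it is one of the components of $\theta$); without this $\Psi^k(X_0,\thetdyn)$ has no meaning. Then I would prove by induction on $k$ that both the one-step-ahead predictive law and the filtered law are the point mass at $\Psi^k(X_0,\thetdyn)$, i.e.
\[
\probd\!\left(X_k \mid \theta, \mathcal{Y}_{k-1}\right) \;=\; \probd\!\left(X_k \mid \theta, \mathcal{Y}_{k}\right) \;=\; \delta_{X_k}\!\left(\Psi^k(X_0,\thetdyn)\right).
\]
The base case is the hypothesis on $\probd(X_0\mid\theta)$. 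For the inductive step, the prediction integral~\eqref{eq:predict} reduces to $\int \delta_{X_{k+1}}(\Psi(X_k,\thetdyn))\,\delta_{X_k}(\Psi^k(X_0,\thetdyn))\,dX_k$, which by sifting is $\delta_{X_{k+1}}(\Psi^{k+1}(X_0,\thetdyn))$; and the update~\eqref{eq:update} multiplies this point mass by the strictly positive factor $\exp(-\tfrac12 r(X_{k+1},\thetobs,\thetgam))/(\sqrt{2\pi}^{\dimy}|\Gamma(\thetgam)|^{\frac12}\,\probd(Y_{k+1}\mid\theta,\mathcal{Y}_k))$ and renormalizes, which returns the same point mass. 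I would make this rigorous either directly in the space of measures, reading~\eqref{eq:predict} as a pushforward under $\Psi(\cdot,\thetdyn)$, or by taking the weak limit $\Sigma(\thetsig)\to 0$ in the Gaussian kernel~\eqref{eq:prob_states} and using that a Gaussian convolved with a Dirac converges weakly to a Dirac.

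With the predictive law identified, the marginalization step~\eqref{eq:marg} becomes $\like_k(\theta;\mathcal{Y}_k) = \int \delta_{X_k}(\Psi^k(X_0,\thetdyn))\, \exp(-\tfrac12 r(X_k,\thetobs,\thetgam))/(\sqrt{2\pi}^{\dimy}|\Gamma(\thetgam)|^{\frac12})\,dX_k$, and sifting evaluates the integrand at $X_k=\Psi^k(X_0,\thetdyn)$; substituting the definition~\eqref{eq:meas_residual} of $r$ yields precisely the claimed closed form for $\like_k$. Finally I would take logarithms in the factorization~\eqref{eq:marg_like_frac}, $\log\like(\theta;\yn) = \sum_{k=1}^{\numObs}\log\like_k(\theta;\mathcal{Y}_k)$, and gather the $\numObs$ identical constants $-\tfrac{\dimy}{2}\log 2\pi - \tfrac12\log|\Gamma(\thetgam)|$ to obtain~\eqref{eq:marg_like_det}.

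The hard part will be purely one of rigor rather than of ideas: \eqref{eq:probx_det} is not an honest density, so the manipulations of~\eqref{eq:predict}--\eqref{eq:marg} must be legitimized in the space of (possibly singular) measures, and one has to check that the update is well defined, i.e. that its normalizer $\probd(Y_{k+1}\mid\theta,\mathcal{Y}_k)$ does not vanish -- which holds because that normalizer is exactly the Gaussian value $\like_{k+1}$ produced by the marginalization step, strictly positive whenever $\Gamma(\thetgam)$ is nonsingular. A secondary point I would flag is the role of $X_0$: if one keeps it genuinely random, the same argument gives $\like(\theta;\yn) = \int \big(\prod_{k=1}^{\numObs}\like_k\big)\,\probd(X_0\mid\theta)\,dX_0$, and the theorem is recovered as the special case of a deterministic initial condition.
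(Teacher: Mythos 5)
Your proposal is correct and follows essentially the same route as the paper's proof: both rest on the observation that deterministic dynamics collapse the state law onto the Dirac mass at $\Psi^k(X_0,\thetdyn)$, so each likelihood factor is just the Gaussian observation density evaluated along that fixed trajectory, and the log-likelihood is the sum of these terms. Your version simply makes the paper's one-line ``apply at each time step'' explicit as an induction through the predict/update/marginalize recursion of Theorem~\ref{th:post}, and your remarks on the deterministic-$X_0$ hypothesis and the nonvanishing normalizer are sensible points of rigor that the paper leaves implicit.
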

\begin{proof}
The proof follows from the fact that a deterministic system must follow a fixed trajectory defined entirely by the parameters. In other words, we have $\probd(\xn \mid \thetdyn, \yn) = \probd(\xn \mid \thetdyn) = \delta_{\Psi(X_0,\thetdyn),\ldots,\Psi^{n}(X_{0},\thetdyn)}(\xn)$. As a result, the measurement model can be written as a function of the parameters 
\begin{equation*}
    \probd(Y_k \mid X_{k}, \theta) = \probd(Y_{k} \mid \Psi^k(x_0,\thetdyn), \thetobs, \thetgam) = \probd(Y_k \mid \theta).
\end{equation*}
This distribution is Gaussian with mean $h(\Psi^k(x_0, \thetdyn), \thetobs)$ and covariance $\Gamma(\thetgam).$ Applying these same facts at each time step completes the proof.
\end{proof}

\subsubsection{Noiseless and invertible measurements}

In this section, we consider the ramifications on the posterior of assuming no measurement noise. In the next section we will show that several least squares optimization approaches correspond to this case.

Consider an invertible observation operator so that the states are uniquely determined $X_k = h^{-1}(Y_k)$. Using this assumption in System~\eqref{eq:sys} leads to a Markovian system for the system observables
\begin{equation}
    Y_{k+1} = h\left(\Psi\left(h^{-1}(Y_k), \thetdyn\right) + \xi_k, \thetobs\right)  \quad \textrm{for } k = 1,\ldots,\numObs-1,
    \label{eq:sys_noiseless}
\end{equation}
where $\xi_k \sim  \mathcal{N}(0, \Sigma(\thetsig)).$

This assumption yields the marginal likelihood given in Theorem~\ref{th:marg_like_noiseless} below.
\begin{theorem}[Marginal likelihood -- noiseless, invertible observations]\label{th:marg_like_noiseless}
Let $h$ be an invertible operator and the measurements be noiseless. Then, the marginal likelihood~\eqref{eq:marg_like_frac} is defined recursively as 
\begin{equation}
  \like_k(\theta ; \yk)  = \lvert  \nabla  h^{-1}(y_k) \rvert \frac{\exp\left(-\frac{1}{2} \lVert h^{-1}(y_{k}) - \Psi\left(h^{-1}\left(y_{k-1}\right), \thetdyn \right)\rVert_{\Sigma(\thetsig)}^2\right)}{\sqrt{2\pi}^{\dimx}\lvert \Sigma(\thetsig)\rvert^{\frac{1}{2}}}
\end{equation}
for $k=2,\ldots, \numObs$ and
\begin{equation}
\begin{aligned}
    \log\like_1(\theta; \mathcal{Y}_1) &=
    \log \int \exp\left( \lVert h^{-1}(y_1) - \Psi(X_0; \thetdyn) \rVert_{\Sigma(\thetsig)}^2\right) \probd(X_0 \mid \theta) dX_0 - \\
    &\qquad \frac{\dimx}{2}\log 2\pi -\frac{1}{2} \log \lvert \Sigma(\thetsig)\rvert.
    \end{aligned}
\end{equation}
Together, the log marginal likelihood becomes
\begin{equation}
    \begin{aligned}
    \log \like(\theta ; \yn) &=  \sum_{k=2}^{\numObs} \left( \log \lvert  \nabla  h^{-1}(y_k) \rvert  - \frac{1}{2}\lVert h^{-1}(y_{k}) - \Psi\left(h^{-1}\left(y_{k-1}\right), \thetdyn \right)\rVert_{\Sigma(\thetsig)}^2   \right) - \\ &\qquad \frac{\numObs\dimx}{2}\log 2\pi -  \frac{\numObs}{2}\log \lvert \Sigma(\thetsig)\rvert  +  \log\like_1(\theta; \mathcal{Y}_1).
    \end{aligned}
    \label{eq:log_marg_like}
\end{equation}
\end{theorem}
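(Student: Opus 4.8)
The plan is to exploit the fact that, under the stated hypotheses, the observed sequence $\{Y_k\}$ is itself a Markov chain --- precisely the one in Equation~\eqref{eq:sys_noiseless} --- so that each factor $\like_k(\theta;\yk) = \probd(Y_k = y_k \mid \theta, \ykk)$ reduces to a one-step transition density that can be written down explicitly by a change of variables. First I would record the consequence of invertibility: since $h(\cdot,\thetobs)$ is a diffeomorphism (which in particular forces $\dimy = \dimx$, so $\nabla h^{-1}$ is a square Jacobian), observing $y_k$ is equivalent to observing $X_k = h^{-1}(y_k)$ exactly. Hence conditioning on $\yk$ is the same as conditioning on $X_1 = h^{-1}(y_1),\dots,X_k = h^{-1}(y_k)$, and the Markov structure of~\eqref{eq:sys} collapses $\probd(Y_k = y_k \mid \theta, \ykk)$ to $\probd\big(Y_k = y_k \mid \theta,\, X_{k-1} = h^{-1}(y_{k-1})\big)$ for $k \geq 2$.

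Next, for $k \geq 2$ I would evaluate this density by pushing the Gaussian transition kernel through $h$. Conditioned on $X_{k-1} = h^{-1}(y_{k-1})$, the dynamics in~\eqref{eq:sys} give $X_k \sim \mathcal{N}\big(\Psi(h^{-1}(y_{k-1}),\thetdyn),\,\Sigma(\thetsig)\big)$, whose density is~\eqref{eq:prob_states}; and since $Y_k = h(X_k,\thetobs)$ is a deterministic invertible transformation of $X_k$, the change-of-variables formula yields $\probd\big(Y_k = y_k \mid \theta, X_{k-1} = h^{-1}(y_{k-1})\big) = \lvert \nabla h^{-1}(y_k)\rvert\, \probd\big(X_k = h^{-1}(y_k) \mid \theta, X_{k-1} = h^{-1}(y_{k-1})\big)$, which is exactly the claimed closed form for $\like_k$. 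I would present this pushforward argument as the primary proof, but I would also note that the same formula drops out of specializing the recursion of Theorem~\ref{th:post} to the noiseless limit: there the measurement density becomes the Dirac mass $\delta\big(y_k - h(X_k,\thetobs)\big) = \lvert \nabla h^{-1}(y_k)\rvert\,\delta_{h^{-1}(y_k)}(X_k)$, so the update~\eqref{eq:update} collapses the filtering law to a point mass at $h^{-1}(y_k)$, the prediction~\eqref{eq:predict} fed this point mass returns the Gaussian above (now in the variable $X_{k+1}$), and the marginalization~\eqref{eq:marg} integrates the point mass against it to leave the Jacobian times that Gaussian evaluated at $h^{-1}(y_{k+1})$.

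The case $k=1$ needs separate handling because the ``previous state'' is the uncertain initial condition $X_0$ with prior $\probd(X_0 \mid \theta)$ rather than a point mass, so the $X_0$-integral does not disappear. Running the same prediction-then-change-of-variables step gives $\like_1(\theta;\mathcal{Y}_1) = \lvert \nabla h^{-1}(y_1)\rvert \int \probd\big(X_1 = h^{-1}(y_1) \mid X_0,\theta\big)\,\probd(X_0 \mid \theta)\,dX_0$ with $\probd(X_1 = \cdot \mid X_0,\theta)$ the Gaussian~\eqref{eq:prob_states}, and taking the logarithm produces the stated $\log\like_1$ (together with the $k=1$ Jacobian term). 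Finally I would assemble~\eqref{eq:log_marg_like} by taking the log of the factorization~\eqref{eq:marg_like_frac}, $\log\like(\theta;\yn) = \log\like_1(\theta;\mathcal{Y}_1) + \sum_{k=2}^{\numObs}\log\like_k(\theta;\yk)$, and collecting the $\numObs$ copies of the normalization constant $-\tfrac{\dimx}{2}\log 2\pi - \tfrac12\log\lvert\Sigma(\thetsig)\rvert$.

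The main obstacle is making the noiseless-measurement limit rigorous, since the measurement density is genuinely degenerate. The clean route --- and the one I would take in the final write-up --- is to avoid $\Gamma(\thetgam) \to 0$ altogether and argue directly via the pushforward of the Gaussian transition kernel under the diffeomorphism $h$, so that the only nontrivial analytic ingredient is the standard change-of-variables formula. If one instead specializes Theorem~\ref{th:post}, one must justify the delta-function identity $\delta\big(y - h(x,\thetobs)\big) = \lvert \nabla h^{-1}(y)\rvert\,\delta_{h^{-1}(y)}(x)$ and verify that the filtering distributions are point masses for every $k \geq 1$. A secondary point to state explicitly is the regularity assumption behind ``invertible'': $h(\cdot,\thetobs)$ is a $C^1$ diffeomorphism of $\reals^\dimx$ with nonvanishing Jacobian determinant, without which the factors $\lvert \nabla h^{-1}(y_k)\rvert$ are not even well defined.
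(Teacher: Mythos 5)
Your proposal is correct and follows essentially the same route as the paper's (very terse) proof: treat the observables as the Markov chain of Equation~\eqref{eq:sys_noiseless} and obtain each one-step transition density $\like_k$ by pushing the Gaussian kernel through $h$ via the change-of-variables formula, then sum the logarithms. Your added care with the $k=1$ term (where your derivation correctly retains a $\lvert\nabla h^{-1}(y_1)\rvert$ factor and the $-\tfrac12$ in the exponent, which the printed statement omits) and with the regularity needed for $\lvert\nabla h^{-1}\rvert$ goes beyond the paper's write-up but does not change the argument.
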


\begin{proof}
The proof is trivial by noticing that the Markovian system for the observables~\eqref{eq:sys_noiseless} defines a sampling distribution using the change of variables formula
\begin{equation*}
    \probd(Y_k\mid\theta) = \lvert\nabla h^{-1}(Y_k)\rvert\frac{\exp\left(-\frac{1}{2}\lVert h^{-1}(Y_k)-\Psi(h^{-1}(Y_{k-1}),\thetdyn)\rVert_{\Sigma(\thetsig)}^2\right)}{\sqrt{2\pi}^\dimx\lvert\Sigma(\thetsig)\rvert^{\frac{1}{2}}}.
\end{equation*}
The given result is obtained by using this sampling distribution as the likelihood.
\end{proof}

As we intuitively expected, there is no marginalization over states under this assumption because the learning problem effectively ``resets'' after every data point. After the reset, the states are at their true value, and optimization progresses to ensure that the residual of propagation between true values is small. This is exactly the same methodology that inspires the least squares regression based approaches such as DMD and SINDy. In fact, we will show that special assumptions on $h$ and $\Psi$ recover these least squares approaches.

\begin{remark}[Data on initial condition]
If the initial condition is treated as beginning when the data are obtained, then the log likelihood for the first data point becomes independent of the parameters and we can set it to an arbitrary constant.
\end{remark}

\subsection{Decision making}
\label{subsec:decision_making}
Whereas traditional system ID and ML approaches define the problem through an optimization objective, the Bayesian approach separates learning and decision making. In effect, it provides a way of generating new optimization objectives and interpreting existing ones. Here, we briefly comment on the fact that this separation comes in the form of a two step procedure: (1) computing the posterior and (2) extracting a goal-oriented estimator through the specification of a loss function. For detailed discussion of these topics we refer the reader to~\cite{Berger_1985}.

First note that we have considered $\theta$ to contain all uncertain parameters in the problem. For prediction, however, it is standard to make predictions into the future using deterministic models based on $\Psi$. As a result, we can partition the parameters $\theta = \left(\theta_\Psi, \theta_h, \theta_{\Sigma}, \theta_{\Gamma}\right)$ into those that correspond to the dynamics, observations, process noise, and measurement noise, respectively. 
Next we define the \textit{posterior predictive} distribution of the states as an average over all possible values of the dynamics parameters conditioned on the observations
\begin{equation}
    \probd(X_k \mid \yn)  = \int \probd(X_k \mid \theta_{\Psi}) \probd(\theta_{\Psi} \mid \yn) d\theta_{\Psi},
\end{equation}
where we will use a deterministic prediction that discards the process noise
\begin{equation}
    \probd(X_k \mid \theta_{\Psi}) = \delta_{\Psi^{k}(x_0, \thetdyn)}(X_{k}).
\end{equation}
This restriction is not explicitly necessary, but it is representative of how learned models are used in practice.

Finally, we can extract several estimators to use as ``point estimates'' from the posterior. For example, the mean estimator, which corresponds to the optimal estimator for the squared loss~\cite{Berger_1985},
\begin{equation}
    X_k^{\avg} = \ee_{\thetdyn \mid \yn} \left[\probd(X_{k} \mid \yn)\right],
\end{equation}
or the MAP estimator,
\begin{equation}
    X_k^{\map} = \argmax_{\tilde{X}_k} \ \probd( \tilde{X}_k \mid \yn).
\end{equation}

Note that these estimators do not, in general, have a 1-1 correspondence with the approaches that use the mean, MAP, and median of the \textit{parameters} rather than the states, i.e., the mean estimator
\begin{equation}
    \theta^{\tavg} = \ee_{\theta \mid \yn}\left[\probd(\theta \mid \yn)\right],
\end{equation}
or the MAP estimator
\begin{equation}
    X_k^{\tmap} = \Psi^{k}(x_0, \thetdyn^*); \quad \thetdyn^* =  \argmax_{\theta} \probd(\theta \mid \yn).
\end{equation}
In other words, we do not require a fixed estimator for the model to have a point estimate of the prediction. Indeed, the most likely dynamics may not actually correspond to the most likely states for nonlinear models.

\section{Analysis of existing approaches}
\label{sec:existing_approaches}

In this section, we analyze two common state-space system ID approaches that have recently garnered some success. These approaches are representative of those that ignore measurement noise (or account for it by heuristic ``denoising''). We seek to demonstrate that many least squares approaches can be interpreted within the framework of Equation~\eqref{eq:sys}.  Our choice of dynamic mode decomposition (DMD) and sparse identification of nonlinear dynamics (SINDy) are representative of algorithms that use least squares and/or regularization. Our main results are that DMD can be interpreted as a maximum likelihood estimate and SINDy as a MAP estimate  under zero-noise and invertible observation operator assumptions. 

\subsection{Dynamic mode decomposition (DMD)}
\label{subsec:dmd}
Dynamic mode decomposition (DMD) is a data-driven method for system identification that is used to identify the `dynamic modes' of a dynamical system~\cite{schmid_2010}. These modes reveal characteristics such as unstable growth modes, resonance, and spectral properties~\cite{Proctor_2014}.  DMD is favorable when the system at hand is high dimensional but has some hidden low-dimensional structure, as is the case in many fluids problems. DMD first organizes a series of measurements at regular time intervals into two matrices
\begin{align}
    \begin{array}{lr}
        Y = \begin{bmatrix}y_1 & y_2 & \dots & y_{\numObs-1} \end{bmatrix}; &
        Y' = \begin{bmatrix}y_2 & y_3 & \dots & y_\numObs \end{bmatrix},
    \end{array}
\end{align}
and then seeks a linear operator $A$ which maps the observables from one time step to the Next i.e., $Y'=AY$. To find $A$, one simply minimizes the Frobenius norm of $AY-Y'$ by solving the least squares problem
\begin{align}
   A = \argmin_{\tilde{A}} \sum_{k=2}^{\numObs}\lVert y_{k} - \tilde{A}y_{k-1}\rVert^2.
    \label{eq:dmd_ls}
\end{align}
The solution is given by $A = Y'Y^\dagger$, where $\dagger$ denotes the pseudo-inverse.

The method given above may at first appear only applicable to linear systems, but~\cite{Rowley_2009} showed that in the nonlinear case, the approximated operator $A$ and its corresponding modes are approximations to the linear but infinite-dimensional Koopman operator and Koopman modes respectively, thus revealing its applicability to nonlinear systems.

Next we show the least squares procedure for DMD can also be \textit{derived} directly from the general probabilistic system~\eqref{eq:sys} under certain assumptions.
\begin{theorem}[DMD as a maximum likelihood of system~\eqref{eq:sys}]\label{th:DMD}
Assume a linear model $\Psi(X_k, \thetdyn) = \thetdyn X_k$; identity observation operator $h = I$; noiseless measurements $\Gamma(\thetgam) = 0$; and identity process noise $\Sigma(\thetsig) = I.$ Then, the maximum marginal likelihood estimator corresponding to System~\eqref{eq:sys} is equivalent to the least squares objective of the DMD problem~\eqref{eq:dmd_ls}.

\end{theorem}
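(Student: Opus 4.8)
The plan is to apply Theorem~\ref{th:marg_like_noiseless} directly, since the stated hypotheses ($h = I$ is invertible and the measurements are noiseless) are exactly its assumptions, and then to simplify the resulting log marginal likelihood~\eqref{eq:log_marg_like} using the remaining specializations $\Psi(X_k,\thetdyn) = \thetdyn X_k$ and $\Sigma(\thetsig) = I$. First I would substitute $h = I$, which gives $h^{-1} = I$, so that $\nabla h^{-1}(y_k) = I$ and $\log\lvert\nabla h^{-1}(y_k)\rvert = 0$ for every $k$; and $\Sigma(\thetsig) = I$, which makes the weighted norm $\lVert\cdot\rVert_{\Sigma(\thetsig)}^2$ collapse to the Euclidean norm $\lVert\cdot\rVert_2^2$ and makes $\log\lvert\Sigma(\thetsig)\rvert = 0$. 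Plugging these into~\eqref{eq:log_marg_like} reduces the log marginal likelihood to $-\tfrac12\sum_{k=2}^{\numObs}\lVert y_k - \thetdyn y_{k-1}\rVert_2^2$ plus the term $\log\like_1(\theta;\mathcal{Y}_1)$ and additive constants (such as $-\tfrac{\numObs\dimx}{2}\log 2\pi$) that do not depend on $\thetdyn$.

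Next I would dispose of the first-observation term. Invoking the Remark following Theorem~\ref{th:marg_like_noiseless}, if the initial condition is taken to coincide with the first measurement then $\log\like_1(\theta;\mathcal{Y}_1)$ is independent of $\thetdyn$ and may be absorbed into the constant; alternatively one simply notes that this term contributes only the $k=1$ summand and involves no propagation of a previous state by $\thetdyn$. Either way, maximizing the log marginal likelihood over $\thetdyn$ is equivalent to
\begin{equation*}
    \argmax_{\thetdyn} \ \log\like(\theta;\yn) \;=\; \argmin_{\thetdyn} \ \sum_{k=2}^{\numObs}\lVert y_k - \thetdyn y_{k-1}\rVert_2^2,
\end{equation*}
which is precisely the DMD least squares problem~\eqref{eq:dmd_ls} under the identification $A = \thetdyn$; in particular the maximizer is the DMD operator $A = Y'Y^{\dagger}$.

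The bulk of the argument is thus routine bookkeeping, and I expect the only genuinely delicate point to be the handling of the initial condition and the term $\like_1$: the DMD objective sums over $k = 2,\ldots,\numObs$, matching the recursive part of~\eqref{eq:log_marg_like} exactly, so one must be explicit that the $k=1$ contribution either drops out (per the Remark) or is parameter-free. A secondary point worth stating cleanly is that no prior on $\thetdyn$ has been introduced, which is what makes this a \emph{maximum likelihood} (rather than MAP) correspondence, and that the equivalence is one of maximizers/minimizers, not of objective values, since the constants and the $\like_1$ term are discarded.
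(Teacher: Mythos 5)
Your proposal is correct and follows essentially the same route as the paper's proof: a direct application of Theorem~\ref{th:marg_like_noiseless} with $h=I$ and $\Sigma(\thetsig)=I$, discarding the parameter-independent $\like_1$ term by treating the first measurement as the initial condition, and identifying $A \equiv \thetdyn$ so that maximizing the log marginal likelihood coincides with minimizing the DMD least squares objective~\eqref{eq:dmd_ls}. Your additional care about the $k=1$ term and the maximizer-versus-objective-value distinction matches the paper's (more terse) reasoning.
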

\begin{proof}
This result uses a straightforward application of Theorem~\ref{th:marg_like_noiseless}. Without loss of generality, we use the fact that the first measurement is of the initial condition, and therefore we can ignore $\like_1.$ Here, we have an identity observation operator, and therefore the inverse and Jacobian are also the identity. The dynamics are linear and unknown so we can write $A  \equiv \thetdyn$. Together, these facts require that the log marginal likelihood~\eqref{eq:log_marg_like} becomes
\begin{equation}
\log \like(\theta ; \yn) =  -\sum_{k=2}^{\numObs} \frac{1}{2}\lVert y_{k} - A y_{k-1} \rVert^2 -\frac{\numObs\dimx}{2}\log 2\pi,
\end{equation}
which is our stated result. Clearly, the maximizer of this function is equivalent to the minimizer of~\eqref{eq:dmd_ls}.
\end{proof}

While the invertible measurement operator is not a restrictive assumption because all DMD cares about is mapping observables and not underlying states, Theorem~\ref{th:DMD} shows why DMD may not be appropriate for cases where the observations are noisy. This fact has been recognized in the literature and several procedures for rectifying this issue have been proposed.  For instance,~\cite{Hemati_2017} showed that total least squares is a more appropriate algorithm to identify $A$ when measurement noise is present, a method known as \textit{total DMD} (TDMD). For a full analysis of the total least squares problem, see~\cite{Golub_1980,Huffel_1989}. We will empirically compare TDMD to our approach in Section~\ref{sec:experiments}, where we see that it also performs worse than the posterior predictive mean. Future work will attempt to determine the assumptions that TDMD makes in the context of System~\eqref{eq:sys}.

In \cite{Takeishi_2017}, another connection between the Bayesian approach to DMD was developed that infers the Koopman modes and eigenfunctions of the Koopman operator directly, rather than learning the dynamical operator itself.  That work showed that when the measurements are noiseless, the MLE of their Bayesian model, TDMD, and DMD all provide the same estimate.  In contrast, here we have provided our result in terms of the underlying hidden state dynamics rather than explicitly assuming observation dynamics. 

One benefit of the analysis in our context is that our use of an underlying state-space model makes the framework valid even when the observations cannot be written using a Markovian (zero-lag) model as in Equation~\eqref{eq:sys_noiseless}, which was required for the approach developed in~\cite{Takeishi_2017}. In fact, this result can be interpreted to indicate that zero-lag DMD is most effective if the observation operator is invertible.

\subsection{Regularized regression for nonlinear models}
\label{subsec:sindy}
Least squares optimization can also be used for identifying nonlinear systems by searching in a linear subspace. In these cases, it is often advantageous to add regularization to seek parsimonious solutions. One such approach that uses a sparsity enhancing regularization is the method of 
 sparse regression or sparse identification of nonlinear dynamics (SINDy)~\cite{Brunton_2016}.  
 
 These approaches organize a library of candidate functions (linear and nonlinear) into a matrix. They then aim to approximate the time derivative in the span of this library. For instance,
\begin{align}
    \dot{x} = f(x) \approx \begin{bmatrix}1 & x & x^2 & \dots & x^\dimpar\end{bmatrix}
    \begin{bmatrix}\theta_0 \\ \theta_1 \\ \vdots \\ \theta_\dimpar\end{bmatrix}.
    \label{eq:vec}
\end{align}
This example uses monomial candidate functions, but any basis (wavelets, orthogonal polynomials, empirical bases) can be used. 

Suppose that the general dictionary of terms is given by $\Xi:\reals^{\dimx} \to \reals^{\dimx}$ so that the deterministic portion of some continuous-time autonomous dynamics can be written as a linear system with respect to the parameters/coefficients of the functions in the dictionary $\dot{x} = \Xi(x)\thetdyn.$ If direct data were available on the states and derivatives, one might then try to solve a (regularized) linear least squares problem for the parameters
\begin{equation}
    \thetdyn = \argmin_{\tilde{\theta}} \sum_{k=1}^\numObs \lVert \dot{x}_k - \Xi(x_k) \tilde{\theta} \rVert_2^2 + \lambda \lVert \tilde{\theta} \rVert,
\end{equation}
where $\lambda$ is a regularization weight and the norm can be chosen by the user. If the $L_1$ norm is chosen, this becomes a sparse regression problem. 

Practical applications, however, do not have data on the derivative of each state $\dot{x}_i$. As a result, various numerical approximations can be made, and this is the approach taken by the SINDy algorithm. Here, we will consider one type of numerical approximation to the derivative, but our analysis can be extended to others. If a forward-difference approximation to the time derivative is taken, then the SINDy objective function is 
\begin{equation}
    \thetdyn = \argmin_{\tilde{\theta}} \sum_{k=2}^\numObs \left \lVert \frac{x_k - x_{k-1}}{\Delta t} - \Xi(x_{k-1}) \tilde{\theta} \right \rVert_2^2 + \lambda \lVert \tilde{\theta} \rVert.
    \label{eq:sindy}
\end{equation}
Notice that this approach requires direct observation of the states. Next we show that it is also equivalent to the maximum \textit{a posteriori} of our target conditional distribution under more strict assumptions.

\begin{theorem}[SINDy as a maximum \textit{a posteriori} estimate of system~\eqref{eq:sys}]\label{th:sindy}
Let $\Xi(x): \reals^\dimx \to \reals^\dimx$ denote a library of candidate functions for continuous time drift dynamics. Let $\Psi(x; \thetdyn)$ denote the resulting discrete-time operator that uses a forward-Euler integration scheme
\begin{equation}
    \Psi(X, \thetdyn) = X + \Delta t \Xi(X) \thetdyn.
\end{equation}
Furthermore, assume an identity observation operator $h=I$; noiseless measurements $\Gamma(\thetgam) = 0$; identity process noise $\Sigma(\thetsig)=I$; and a Laplace prior $\probd(\thetdyn) \propto \exp\left(-\tilde{\lambda}|\thetdyn|\right).$ Then, the MAP estimate of the conditional distribution given in Equation~\eqref{eq:target_conditional} is equivalent to the SINDy estimator obtained by minimizing~\eqref{eq:sindy}.

\end{theorem}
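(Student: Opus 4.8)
The plan is to specialize Theorem~\ref{th:marg_like_noiseless} to the stated hypotheses and then fold in the Laplace prior. First I would note that $h=I$ is trivially invertible with $h^{-1}=I$, so $\nabla h^{-1}(y_k)=I$ and $\log\lvert\nabla h^{-1}(y_k)\rvert=0$ for every $k$; moreover the measurements are noiseless, so the hypotheses of Theorem~\ref{th:marg_like_noiseless} hold and the states coincide with the observations, $x_k=y_k$. Substituting $\Sigma(\thetsig)=I$ (hence $\lvert\Sigma(\thetsig)\rvert=1$ and $\lVert\cdot\rVert_{\Sigma(\thetsig)}^2=\lVert\cdot\rVert_2^2$) into the log marginal likelihood~\eqref{eq:log_marg_like} collapses it to
\begin{equation*}
\log\like(\theta;\yn) = -\sum_{k=2}^{\numObs}\tfrac12\lVert y_k - \Psi(y_{k-1},\thetdyn)\rVert_2^2 - \tfrac{\numObs\dimx}{2}\log 2\pi + \log\like_1(\theta;\mathcal{Y}_1),
\end{equation*}
and by the remark following Theorem~\ref{th:marg_like_noiseless}, treating the first datum as the initial condition makes $\log\like_1$ independent of $\thetdyn$, i.e. an additive constant. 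This is also what pins the sum to start at $k=2$, matching~\eqref{eq:sindy}.

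Next I would insert the forward-Euler propagator $\Psi(y_{k-1},\thetdyn)=y_{k-1}+\Delta t\,\Xi(y_{k-1})\thetdyn$, so that $\lVert y_k-\Psi(y_{k-1},\thetdyn)\rVert_2^2 = \lVert (y_k-y_{k-1}) - \Delta t\,\Xi(y_{k-1})\thetdyn\rVert_2^2 = \Delta t^2\,\lVert\tfrac{y_k-y_{k-1}}{\Delta t} - \Xi(y_{k-1})\thetdyn\rVert_2^2$. Since the remaining blocks of $\theta$ (namely $\thetobs$, $\thetgam$, $\thetsig$) are fixed by the assumptions, only $\thetdyn$ is free, and Bayes' rule~\eqref{eq:target_conditional} gives $\log\probd(\theta\mid\yn)=\log\like(\theta;\yn)+\log\probd(\thetdyn)+\mathrm{const}$, with the Laplace prior contributing $\log\probd(\thetdyn)=-\tilde\lambda\lvert\thetdyn\rvert+\mathrm{const}$. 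Collecting terms, the MAP estimate is
\begin{equation*}
\thetdyn^* = \argmax_{\thetdyn}\left(-\frac{\Delta t^2}{2}\sum_{k=2}^{\numObs}\left\lVert\frac{y_k-y_{k-1}}{\Delta t}-\Xi(y_{k-1})\thetdyn\right\rVert_2^2 - \tilde\lambda\lvert\thetdyn\rvert\right).
\end{equation*}

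Finally I would multiply the objective by the positive constant $2/\Delta t^2$ (which does not move the maximizer) and flip the sign to convert to a minimization, yielding $\argmin_{\thetdyn}\sum_{k=2}^{\numObs}\lVert (y_k-y_{k-1})/\Delta t-\Xi(y_{k-1})\thetdyn\rVert_2^2+(2\tilde\lambda/\Delta t^2)\lvert\thetdyn\rvert$; identifying $x_k=y_k$ and setting $\lambda=2\tilde\lambda/\Delta t^2$ recovers~\eqref{eq:sindy} exactly. The argument is essentially bookkeeping once Theorem~\ref{th:marg_like_noiseless} is available; the only points requiring care are (i) disposing of the $\like_1$ term via the initial-condition remark so that no spurious $\thetdyn$-dependence survives, and (ii) tracking the $\Delta t$ factors so that the relationship $\lambda = 2\tilde\lambda/\Delta t^2$ between the prior weight and the SINDy regularization weight is stated correctly — there is no genuinely hard step.
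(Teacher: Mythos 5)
Your proposal is correct and takes essentially the same route as the paper's proof: specialize Theorem~\ref{th:marg_like_noiseless} with $h=I$ and $\Sigma(\thetsig)=I$, dispose of $\like_1$ via the initial-condition remark, insert the forward-Euler propagator, add the Laplace log-prior, and absorb constants into the identification of $\lambda$. The only discrepancy is the scale factor pulled out of the squared norm — you obtain $\Delta t^2$ (hence $\lambda=2\tilde\lambda/\Delta t^2$) where the paper writes $\Delta t$ (hence $\lambda=2\tilde\lambda/\Delta t$); your bookkeeping is actually the correct algebra, and since $\lambda$ is a free identification this does not change the conclusion.
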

\begin{proof}
This proof is again a straightforward application of Theorem~\ref{th:marg_like_noiseless}. Recall that the data are taken on the initial condition, and note that we have $Y_k = X_k.$  The log-marginal likelihood~\eqref{eq:log_marg_like} is then
\begin{align}
    \log \like(\theta; \yn) &= - \frac{1}{2} \sum_{k=2}^\numObs\left  \lVert y_k -  \left(y_{k-1} + \Delta t \Xi(y_{k-1}) \thetdyn\right) \right \rVert_{2}^2 - \frac{\numObs\dimx}{2}\log 2\pi \\ 
                            &=  -\frac{\Delta t}{2} \sum_{k=2}^\numObs  \left \lVert \frac{y_k - y_{k-1}}{\Delta t} - \Xi(y_{k-1})\thetdyn \right \rVert_2^2 - \frac{\numObs\dimx}{2}\log 2\pi.
\end{align}
Then we can drop the parameter-independent term and add the log prior to obtain a posterior that is proportional to 
\begin{align}
    \log \probd(\theta; \yn) &\propto -\frac{\Delta t}{2} \sum_{k=2}^\numObs \left \lVert \frac{y_k - y_{k-1}}{\Delta t} - \Xi(y_{k-1})\thetdyn \right \rVert_2^2  - \tilde{\lambda} \lvert \thetdyn \rvert  \\
    &= -\frac{\Delta t}{2}\left( \sum_{k=2}^\numObs \left \lVert \frac{y_k - y_{k-1}}{\Delta t} - \Xi(y_{k-1})\thetdyn \right \rVert_2^2  + \frac{2\tilde{\lambda}}{\Delta t} \lvert \thetdyn \rvert \right).
\end{align}
Maximizing the posterior is equivalent to minimizing the term in the parentheses. By setting $\lambda \equiv \frac{2\tilde{\lambda}}{\Delta t}$, we see that this is the exact form of the SINDy objective~\eqref{eq:sindy}.
\end{proof}

\section{Algorithm and computational complexity}
\label{sec:algorithm}
In this section, we describe an approximate marginal MCMC approach that has recently been introduced and analyzed in parallel by several different fields~\cite{Erazo_2018, Noh_2019, Drovandi_2019, Khalil_2015}. This approach is fundamentally based on approximately evaluating the marginal likelihood described in Theorem~\ref{th:post}. 

\subsection{Algorithm}
Theorem~\ref{th:post} provides a recursive approach to evaluate the marginal likelihood that avoids computation of a high-dimensional integral, but this theorem still requires the evaluation of lower-dimensional integrals. In the linear case, the solution to these recursive integrals can be found using the Kalman filter, however no solution is available for general nonlinear systems.

When no closed-form solution exists for these integrals, nonlinear filtering techniques can be introduced. These can include ensemble Kalman filtering~\cite{Evensen_1994}, Gaussian filtering (including cubature Kalman filter~\cite{Arasaratnam_2009} and unscented Kalman filter~\cite{Julier_1997}), and particle filtering~\cite{Gordon_1993}. Of these filters, only the particle filter has been proven to enable an exact pseudomarginal MCMC scheme~\cite{Andrieu_2010}. The other schemes approximate the prediction, update, and marginalization equations --  yielding a (generally) biased estimate of the posterior. Nevertheless, they are often more computationally tractable and have empirically shown good performance. 

These algorithms embed these filters within the accept-reject step of Metropolis-Hastings MCMC scheme, as shown in Algorithm~\ref{alg:pMCMC}. We slightly modify the UKF-MCMC scheme of~\cite{Erazo_2018} by using delayed-rejection adaptive Metropolis MCMC~\cite{Haario_2006} instead of the standard Metropolis-Hastings MCMC. Specifically, the log posterior enters these schemes during the computation of the likelihood portion of the posterior
\begin{align}
    \alpha = \text{min}\left(1, \frac{\hat{\like}(\theta^*;\yn)\probd(\theta^*)}{\hat{\like}(\theta^{(k-1)};\yn)\probd(\theta^{(k-1)})}\frac{\pi(\theta^{(k-1)})}{\pi(\theta^*)}  \right),
\label{eq:marg_acc}
\end{align}
where $\pi(\theta)$ is the proposal distribution and $\hat{\like}(\theta;\yn)$ is the likelihood estimator. As we mentioned above, in the linear case we use a Kalman filter to exactly evaluate the marginal likelihood ($\hat{\like}(\theta;\yn)\equiv\like(\theta;\yn)$) . This algorithm is shown in Algorithm~\ref{alg:margpost_KF}. In the nonlinear case,  we approximate each distribution to be Gaussian and approximate the marginal posterior using an unscented Kalman filter (UKF) as shown in Algorithm~\ref{alg:margpost_UKF}.  In the UKF algorithm, $\alpha$ and $\kappa$ are parameters that determine the spread of the sigma points around the mean, $\beta$ is a parameter used for incorporating prior information on the distribution of $x$, and the notation $[\cdot]_i$ denotes the $i$-th row of the matrix~\cite{Sarkka_2013}.  

\begin{algorithm}
  \begin{algorithmic}[1]
    \REQUIRE  Prior distribution $\probd(\theta)$ \\
    \quad \quad UKF-based likelihood estimator $\hat{\like}(\theta; \yn)$ \\    
    \quad \quad Proposal distribution $\pi(\theta)$\\
    \quad \quad Initial sample $\theta^{(0)}$
    \ENSURE Samples from stationary distribution $\probd(\theta \mid \yn)$
    \STATE Compute $\hat{z}^{(0)} = \hat{\like}(\theta^{(0)};\yn)$
    \FOR {$k = 1$ to $N$}
    \STATE $\theta^{*} \sim \pi $ \quad Sample from proposal
    \STATE $z^{*} = \hat{\like}(\theta^{*};\yn)$ \quad Compute estimated likelihood
    \STATE Compute acceptance probability
    \begin{equation} 
      \alpha = \min\left(1, \frac{z^{*} \probd(\theta^{*})}{z^{(k-1)} \probd(\theta^{(k-1)})} \frac{\pi(\theta^{(k-1)})}{\pi(\theta^{*})}\right)
      \label{eq:ARprob}
    \end{equation}
    \STATE Accept $\theta^{(k)} =\theta^{*}$ and $z^{(k)} = z^{*}$ with probability $\alpha$; otherwise $\theta^{(k)} =\theta^{(k-1)}$ and $z^{(k)} = z^{(k-1)}$
    \ENDFOR
  \end{algorithmic}
\caption{Approximate marginal MCMC for Bayesian inference}
\label{alg:pMCMC}
\end{algorithm}

\subsection{Computational complexity}
We will show in Section~\ref{sec:experiments} that this approach yields more robust estimators than competing system ID approaches by accounting for measurement noise; however, this robustness will be at the cost of slightly increased computational complexity.
In this section, we assess the cost of the algorithm both in the linear case where the Kalman filter is used and the nonlinear case where the UKF is used by counting the number of floating-point operations (flops) required by each algorithm.  

For this analysis, addition, subtraction, multiplication, and division of two floating point numbers and the logarithm of one floating point number all count as one flop.  The multiplication of an $m\times n$ matrix by an $n\times p$ matrix then counts as $mp(2n-1)$ flops because each of the $mp$ entries of the product matrix requires $n$ multiplications and $n-1$ additions.\footnote{We only consider the naive matrix-multiplication scheme, not the asymptotically more optimal approaches such as Strassens algorithm.}  Similarly, the multiplication of an $m\times n$ matrix by an $n\times 1$ vector requires $n(2n-1)$ flops.  Additionally, we approximate the cost of a Cholesky decomposition, matrix inversion, and determinant performed on an $n\times n$ matrix all to be $n^3/3$ flops.  Furthermore, the complexity of these algorithms strongly depends on the complexity of the dynamical and measurement models used, which will vary from problem to problem.  For the sake of generality, we define the computational complexity of the dynamical model $\Psi$ and measurement model $h$ to be denoted as $F$ and $H$ respectively.  Clearly in the linear case, these variables will not be needed as the dynamical and measurement models are matrices, and the number of flops can be calculated without loss of generality.  The number of flops for each algorithm will be given in terms of the problem dimensions, so recall the following notation: $\dimx$ the dimension of the state, $\dimy$ the dimension of the measurements, $\dimpar$ the number of parameters, and $\numObs$ the total number of measurements available.

Our analysis focuses entirely on the computation of the marginal likelihood, which is the dominant cost of the MCMC algorithm. The complexity of the rest of the algorithm will depend on the complexity of the MCMC algorithm and prior selected by the user, but is typically orders of magnitude lower than the likelihood computation. In the following analysis, we provide results for the Kalman filtering algorithm, the unscented Kalman filtering algorithm, their prediction and update subcomponents, DMD, and sparse regression. Table~\ref{tab:numOps} shows the number of different types of operations required by each algorithm. Table~\ref{tab:numFlops} shows the number of flops for each algorithm where the computation of the regularization term in sparse regression is excluded.  Note that although the mean and covariance of the marginal likelihood are computed in the update step of the Bayesian algorithms, the computation of the log of this distribution is excluded from this step, and is instead included only in the total.  Also, the 18 flops outside the parentheses in the UKF total count comes from the formation of the weights, which is required only once at the beginning of the algorithm.  

In determining the number of flops used in DMD, we counted the number of flops needed to solve the normal equation $A=Y'Y^T(YY^T)^{-1}$ where $Y,Y'\in\mathbb{R}^{\dimy\times\numObs-1}$.  Similarly, sparse regression was considered to be the computation $\Theta=(\Xi^T\Xi)^{-1}\Xi^T\dot{X}$, where $\Xi\in\mathbb{R}^{n-1\times p/m}$ and $\dot{X}\in\mathbb{R}^{n-1\times m}$.  In practice, this computation is performed multiple times with an increasingly small $\Xi$ matrix, but for this analysis, only one iteration of the optimization procedure is considered.  To execute TDMD, a singular value decomposition (SVD) of the concatenated matrix $\begin{bmatrix}Y^T & Y'^T\end{bmatrix}\in\mathbb{R}^{n-1\times 2m}$ is first performed, which has computational complexity on the order of $\mathcal{O}(m^2n+n^2m+m^3)$.  The solution of the total least squares problem is then given by $A=-V_1V_2^T(V_2V_2^T)^{-1}$. Let $r$ be the rank of matrix $\begin{bmatrix}Y^T & Y'^T\end{bmatrix}$.  Then, $V_1\in\mathbb{R}^{m\times 2m-r}$ is a matrix composed of the first $m$ rows of the last $2m-r$ right singular vectors, and $V_2\in\mathbb{R}^{m\times 2m-r}$ is a matrix composed of the last $m$ rows of the last $2m-r$ right singular vectors.  The computational complexity of this least squares problem is then $\frac{19}{3}m^3-2m^2r-2m^2$.  Since $m=d$ in the case of DMD the total computational complexity is on the order $\mathcal{O}(d^3 + d^2n + n^2d)$.  Thus the added cost of including measurement noise is on the order $\mathcal{O}(n^2d)$.

\begin{table}[]
\centering
\caption{Tally of matrix and vector operations of algorithms~\ref{alg:margpost_KF},~\ref{alg:margpost_UKF}, DMD, and SINDy. VEW and MEW are element-wise vector and matrix operations respectively such as addition, subtraction, and element-wise multiplication and division.  MV is a matrix-vector or vector-vector multiplication, and MM is matrix-matrix multiplication.  Inv is a matrix inversion, Det a determinant, and Chol a Cholesky decomposition.}
\label{tab:numOps}
\begin{tabular}{|l|l|l|l|l|l|l|l|}
\hline
\textbf{Algorithm} & \textbf{VEW} & \textbf{MEW} & \textbf{MV} & \textbf{MM} & \textbf{Inv} & \textbf{Det} & \textbf{Chol} \\ \hline \hline
KF Prediction      & $0$          & $1$          & $1$         & $2$         & $0$          & $0$          & $0$           \\ \hline
KF Update          & $2$          & $2$          & $3$         & $6$         & $1$          & $0$          & $0$           \\ \hline
KF Total           & $4n$         & $3n$         & $6n$        & $8n$        & $2n$         & $n$          & $0$           \\ \hline
\hline
UKF Prediction     & $4d$         & $8$          & $0$         & $1$         & $0$          & $0$          & $1$           \\ \hline
UKF Update         & $4d+2$       & $14$         & $1$         & $5$         & $1$          & $0$          & $1$           \\ \hline
UKF Total          & $(8d+4)n$    & $22n$        & $3n$        & $6n$        & $2n$         & $n$          & $2n$          \\ \hline
\hline
DMD                & $0$          & $0$          & $0$         & $3$         & $1$          & $0$          & $0$           \\ \hline
\hline
Sparse Regression  & $0$          & $0$          & $0$         & $3$         & $1$          & $0$          & $0$           \\ \hline
\end{tabular}
\end{table}

\begin{table}[]
\centering
\caption{Flop count of algorithms~\ref{alg:margpost_KF},~\ref{alg:margpost_UKF}, DMD, and SINDy}
\label{tab:numFlops}
\begin{tabular}{|l|l|}
\hline
\textbf{Algorithm} & \textbf{Flop Count}                                                                 \\ \hline \hline
KF Prediction      & $4d^3+d^2-d$                                                                        \\ \hline
KF Update          & $2d^3 + \frac{1}{3}m^3 +6d^2m+4dm^2+-d^2-m^2+3dm-1$                                 \\ \hline \hline
KF Total           & $n(6d^3 + m^3 +6d^2m+4dm^2+m^2+3dm-d+3m+8)$                                         \\ \hline \hline
UKF Prediction     & $\frac{13}{3}d^3+17d^2+4d+2+(2d+1)F$                                                \\ \hline
UKF Update         & $\frac{1}{3}d^3+\frac{1}{3}m^3+6d^2m+8dm^2+9d^2+4m^2+13dm+$          \\ & \qquad $2d+6m+2+(2d+1)H$ \\\hline 
UKF Total          & $n\bigg(\frac{14}{3}d^3+m^3+6d^2m+8dm^2+26d^2+6m^2+13dm+$ \\                     & \qquad $6d+9m+13+(2d+1)(F+H)\bigg)+18$ \\ \hline \hline
DMD                & $\frac{7}{3}m^3+4m^2n-7m^2$                                                               \\ \hline \hline
Sparse Regression  & $\frac{1}{3}\frac{p^3}{m^3}+4\frac{p^2n}{m^2}-5\frac{p^2}{m^2}-\frac{pn}{m}+2pn+\frac{p}{m}-3p$ \\ \hline
\end{tabular}
\end{table}

The computational costs of the Bayesian algorithms are on the order $\mathcal{O}(n(d^3+m^3))$.  Typically the dimension $\dimy$ of the observations is small, so this algorithm is primarily limited by the dimension $\dimx$ of the state vector. Furthermore, the dimension $\dimpar$ of the parameter vector only affects the evaluation of the prior, which is usually chosen so as to be easy to compute.  Therefore, this algorithm is most efficient for problems where the state dimension is low and the parameter dimension is high, such as in nonlinear regression problems.

\section{Numerical experiments}\label{sec:experiments}

In this section, we provide a set of empirical results that demonstrate a lack of robustness amongst methods that do not account for all three sources of uncertainty. We then show that our proposed approach is able to perform well under a greater variety of experimental conditions.  The conditions of each experiment are designed to highlight and exaggerate the specified limitation of some specific methods. We will show that in many cases only small changes to the setting, for instance a slightly larger noise or slower sampling frequency, can yield significant difference in learning with these existing methods -- demonstrating their lack of robustness.

Our evaluations of the methodology examine two quantities: reconstruction errors and prediction/forecasting errors. Reconstruction error compares how well the learned parameter is able to match the trajectory from which the data were generated. This is essentially training error, and used more to verify that the algorithms are working properly. Prediction/forecasting compares our estimate to some trajectory that is not contained in the data. These trajectories could be a continuation of the system into the future from the last point at which data were taken, or it could be starting the estimated dynamics at a different initial condition.  This comparison is of greater interest because it tests the extrapolatory power of the learned dynamics.

\subsection{Algorithmic settings}
To perform the following experiments, MATLAB 2019b was used. For our MCMC algorithm, we selected the delayed rejection adaptive Metropolis (DRAM) algorithm~\cite{Haario_2006}.  The tuning parameters of this algorithm are $n_0$ the number of samples to draw before beginning the AM algorithm, and $\gamma$ the scaling factor used by DR to scale the second-tier proposal covariance. In this paper, we used $n_0=200$ and $\gamma=0.01$ for each experiment.  Also throughout the algorithm, whenever a covariance matrix was calculated, a nugget $\varepsilon I$ was added where $\varepsilon=10^{-10}$ to help ensure positive definiteness. Furthermore, the algorithm requires selection of a starting sample and initial proposal covariance; we used the MAP point $\theta^\tmap$ as our initial sample $\theta^{(0)}$, and the inverse Hessian of the negative log posterior evaluated at $\theta^\tmap$ to be the initial covariance of our proposal distribution:
\begin{align}
    \pi_0(\theta) = \mathcal{N}\left(\theta^\tmap, \left(-\frac{\partial^2\log\probd(\theta^\tmap;\theta|\yn)}{\partial\theta^2}\right)^{-1}\right).
\end{align}
Both of these values were found using MATLAB's \texttt{fminunc} function.  For nonlinear systems, we must additionally select parameters $\alpha$, $\kappa$, and $\beta$ for the UKF.  In this paper, we followed a common choice of parameter selection where $\alpha=10^{-3}$, $\kappa = 0$, and $\beta=1$.

Unless otherwise specified, an improper uniform prior is used for all dynamics model parameters $\theta \in \thetdyn$ 
\begin{align}
    p(\theta)=\mathcal{U}(-\infty, \infty),
\end{align}
and half normal priors are specified on the variance parameters $\theta \in \thetsig$ and $\theta \in \thetgam$ as suggested in~\cite{Gelman_2006}
\begin{align}
    p(\theta)=\text{half-}\mathcal{N}(0,1).
\end{align}

The code used to implement the Bayesian algorithms can be found on the author's GitHub \url{https://github.com/ngalioto}.  To execute DMD, MATLAB's right matrix division operator `/' was used, which returns the least squares solution. TDMD was performed using a script taken from MATLAB file exchange~\cite{Houtzager_2019} that solves the total least squares problem.  Lastly, SINDy was run using code from~\cite{Brunton_2016}, which utilizes code from~\cite{Chartrand_2011} to compute the total variation regularized derivatives.

\subsection{Linear pendulum, linear model}
\label{subsec:linpend_param}

In this section, we consider learning a linear model under an identity observation operator $h=I$ when the truth model is also linear. We show that that the proposed probabilistic approach is more robust to sparse observations and measurement noise than the least squares-based DMD and TDMD.

Consider the linear model~\eqref{eq:pend_conts_param} for which the exact propagator is
\begin{align}
    x_k = 
    \text{exp}\Bigg(\begin{bmatrix}0 & 1 \\ -\frac{g}{L} & 0\end{bmatrix}\Delta t\Bigg)
    x_{k-1}, \qquad  x_0=\begin{bmatrix}0.1 \\ -0.5\end{bmatrix}
    \label{eq:pend_conts}
\end{align}
where $g=9.81$ is the acceleration due to gravity and $L=1$ is the length of the pendulum.  

We are learning an unknown linear model $A(\thetdyn)$ and assume that the process noise and measurement noise is also uncertain. Under this setting, System~\eqref{eq:sys} becomes
\begin{align}
\begin{split}
    x_{k} = A(\thetdyn)x_{k-1} + \xi_k, \quad \xi_k\sim\mathcal{N}(0,\Sigma(\thetsig)) \\
    y_k = x_k + \eta_k, \quad \eta_k\sim\mathcal{N}(0,\Gamma(\thetgam)),
\end{split}
\qquad  \text{for } k = 1,\ldots,\numObs,
\end{align}
where
\begin{align}
\begin{split}
    \begin{array}{lcr}
    A(\thetdyn) = \begin{bmatrix} \theta_1 & \theta_2 \\ \theta_3 & \theta_4 \end{bmatrix}, &
    \Sigma(\thetsig) = \theta_{5}I_{2 \times 2}, &
    \Gamma(\thetgam) = \theta_{6}I_{2 \times 2}.
    \end{array}
\end{split}
\label{eq:linpend_param}
\end{align}

Because this setup is precisely the one corresponding to DMD, we seek to compare the performance of our approach to DMD and TDMD. Our comparison takes the form of average performance over 500 different realizations of the data sets for different combinations of training data sizes $n$ and true measurement noise standard deviation $\sigma$. The data points are spread out over a simulation period of four seconds, so increasing $n$ indicates increasing density of data per time.  

The results, shown in Figure~\ref{fig:MSE_contours}, provide (log base 10) ratios of the expected error of the posterior predictive mean (computed with 1000 posterior samples) to the (T)DMD estimators. The squared errors were calculated only at the times of observations, and the largest MSE from each data set for each algorithm was discarded to prevent biasing from outliers.   We see that the biggest gains in using the probabilistic Bayesian approach come in the low noise regime.  At first this seems surprising, but in the low noise regime, this is likely the result of the scale of the errors being so small.  As the noise increases, we see the ratio increasing even though we'd expect DMD to break down much more quickly than the Bayesian approach.  The reason this occurs is because DMD predictions decay to zero after a certain level of noise (shown in Figure~\ref{fig:pred_DMDcontour}), effectively placing an upper bound on the MSE of the algorithm. Regardless, the contour plots show that the Bayesian algorithm outperforms both DMD and TDMD at every measurement frequency and noise pair considered.

\begin{figure}
    \centering
    \begin{subfigure}{0.33\linewidth}
        \includegraphics[height=30mm]{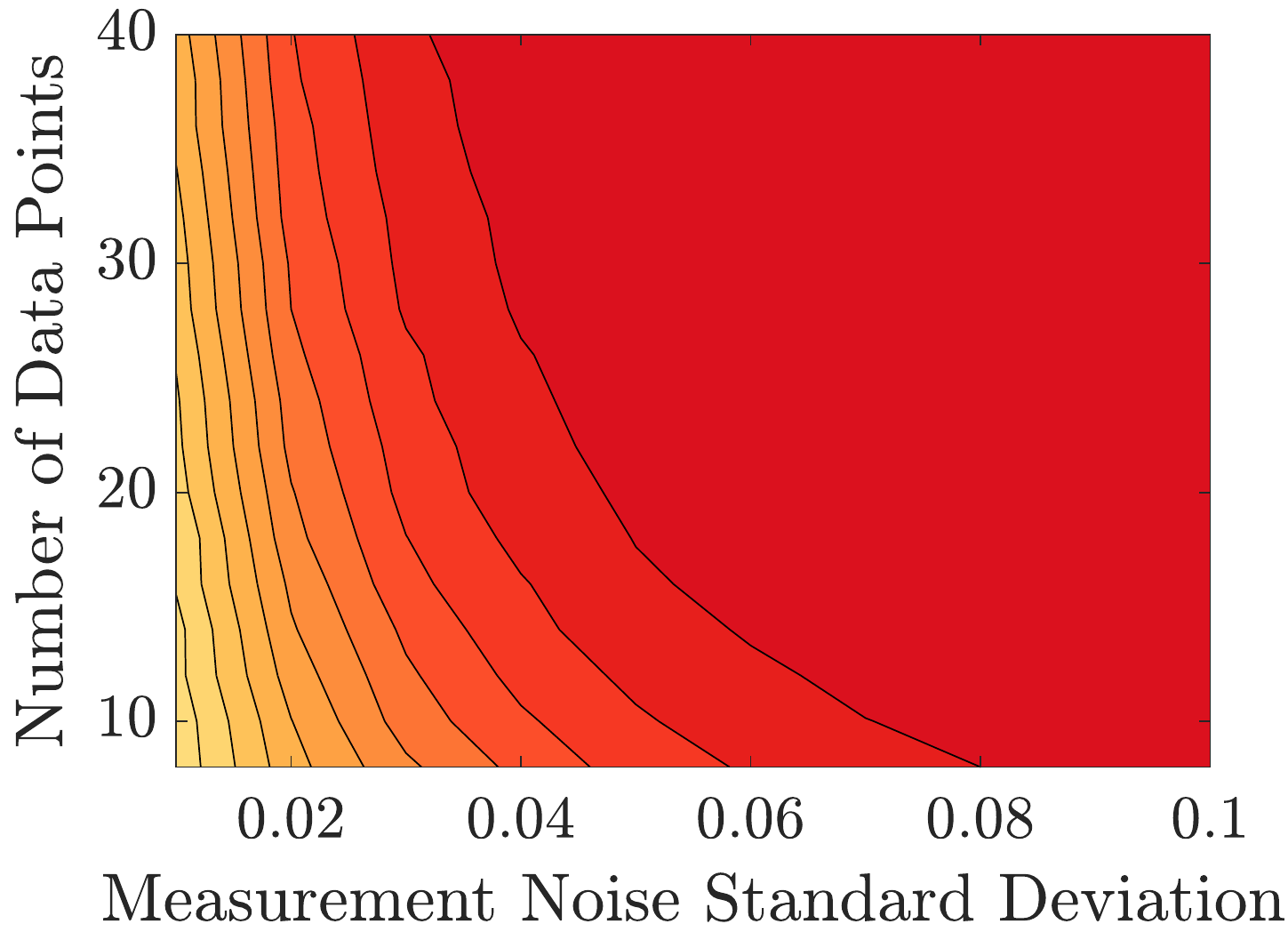}
        \caption{DMD Prediction MSE}
        \label{fig:pred_DMDcontour}
    \end{subfigure}
    \begin{subfigure}{0.32\linewidth}
        \includegraphics[height=30mm]{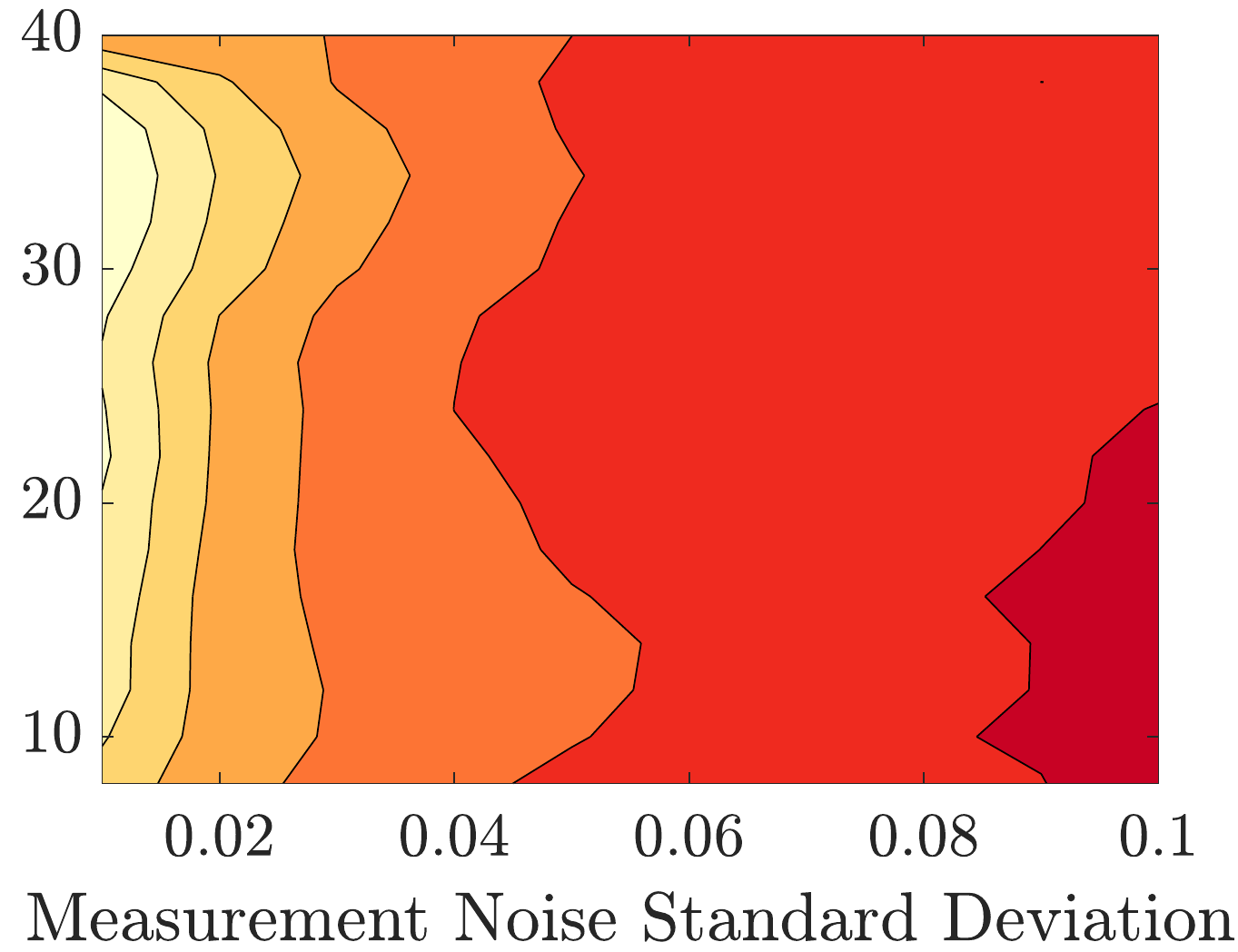}
        \caption{Bayes/DMD Prediction}
        \label{fig:pred_ratioxmeanDMD}
    \end{subfigure}
    \begin{subfigure}{0.32\linewidth}
        \includegraphics[height=30mm]{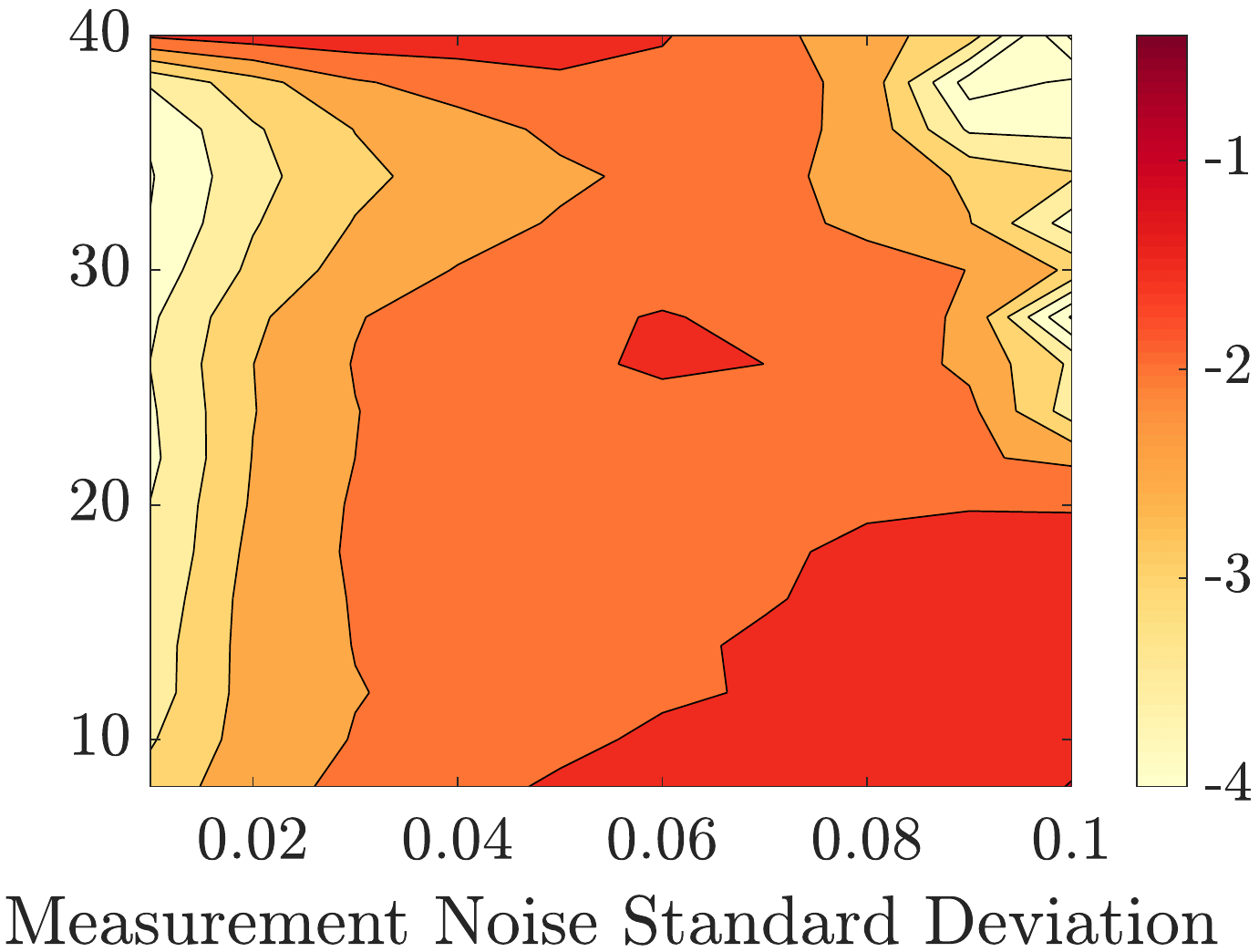}
        \caption{Bayes/TDMD Prediction}
        \label{fig:pred_ratioxmeanTDMD}
    \end{subfigure}
    \caption{Log base 10 ratio of the MSE obtained by the proposed Bayesian approach to that obtained by (T)DMD for the linear pendulum model. In all cases, this value is less than zero signifying that our proposed approach outperforms (T)DMD in all cases considered.  Also observe in the high noise regime, TDMD can begin to lose stability.}
    \label{fig:MSE_contours}
\end{figure}

Next we provide a detailed look at two specific points on these contour plots to demonstrate the mechanism by which DMD/TDMD decline. The first case is a low-noise/sparse-data case of $\sigma=10^{-2}$ and $\numObs=8$, and the second case is for a higher noise case $\sigma=10^{-1}$ with more data $\numObs=40.$ 

The reconstruction results for each state are compared in Figure~\ref{fig:recon_linear}. The prediction (forecasting) results for just the second state are shown in Figure~\ref{fig:predict_linear}. The shaded area represents the region between the 97.5th and 2.5th quantiles of the Bayesian posterior. In the low noise case, we see that all three algorithms perform essentially equally -- though the DMD-based approaches slightly underestimate the amplitude. In other words, even in the case for which DMD was designed to perform well, the Bayesian approach performs slightly better. In the high noise case, we see that the TDMD predictions become completely out of phase with increasingly small amplitude, and the DMD estimator smooths out the data too much and rapidly converges to zero.  Not only does the Bayesian approach provide the most accurate estimate, but it also gives a quantification of the certainty of its estimate in the form of its posterior, which (T)DMD is unable to provide.
 
 \begin{figure}
    \centering
    \begin{subfigure}{0.43\linewidth}
        \includegraphics[width=\linewidth]{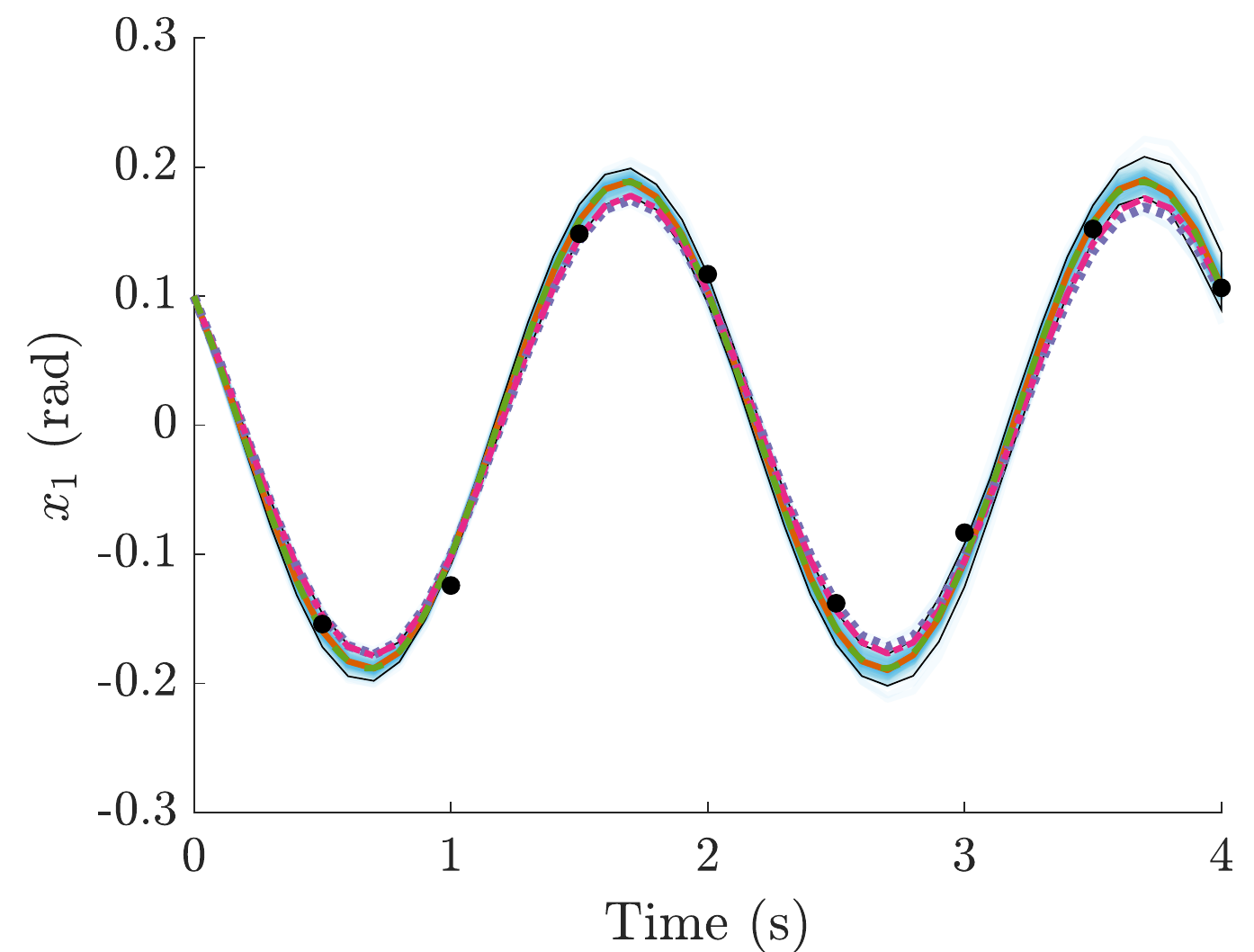}
        \caption{$x_1, \sigma=10^{-2}, \numObs=8$}
        \label{fig:8_1reconx1}
    \end{subfigure}
    \begin{subfigure}{0.56\linewidth}
        \centering
        \includegraphics[width=\linewidth]{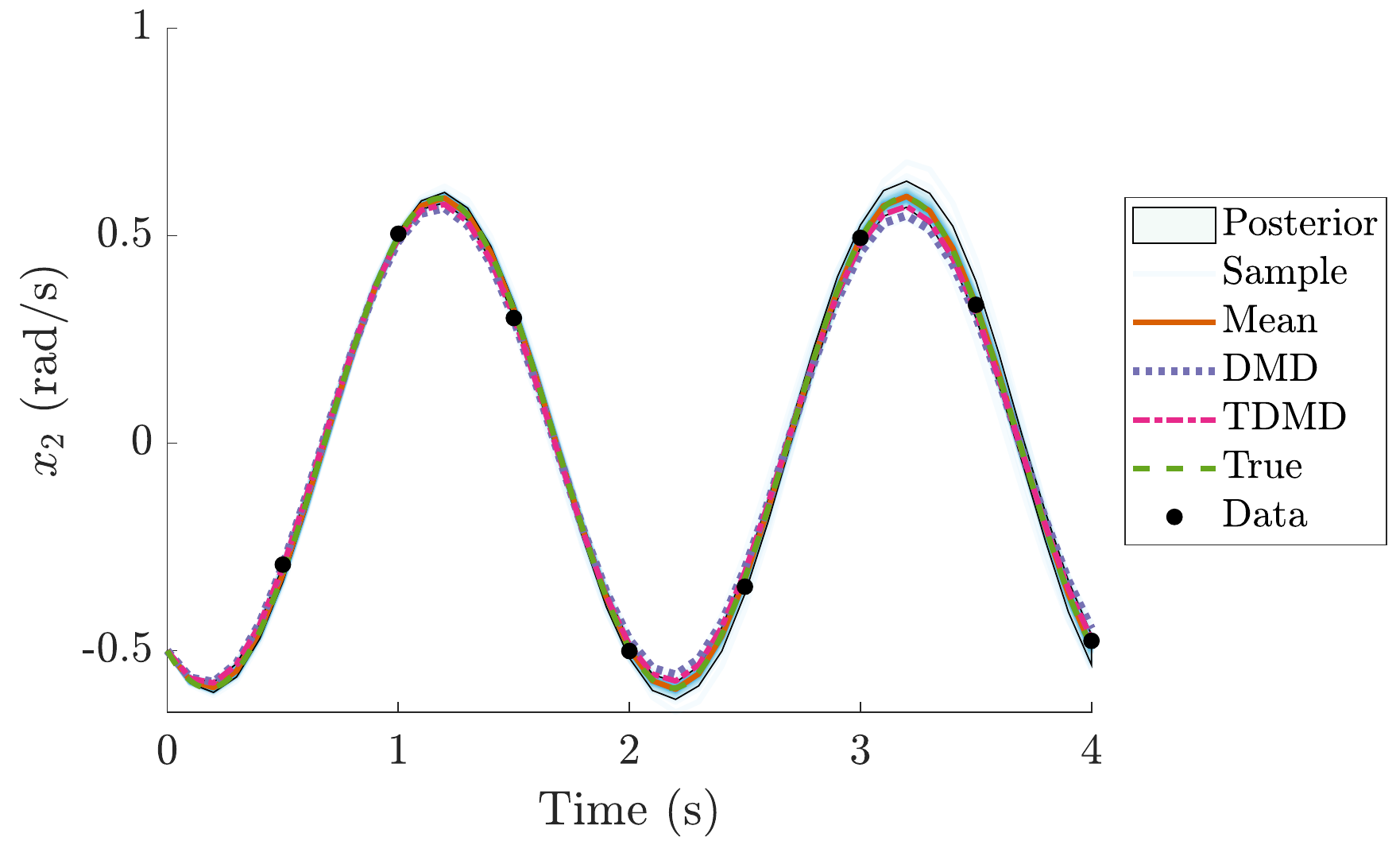}
        \caption{$x_2, \sigma=10^{-2}, \numObs=8$}
        \label{fig:8_1reconx2}
    \end{subfigure}
    \begin{subfigure}{0.43\linewidth}
        \centering
        \includegraphics[width=\linewidth]{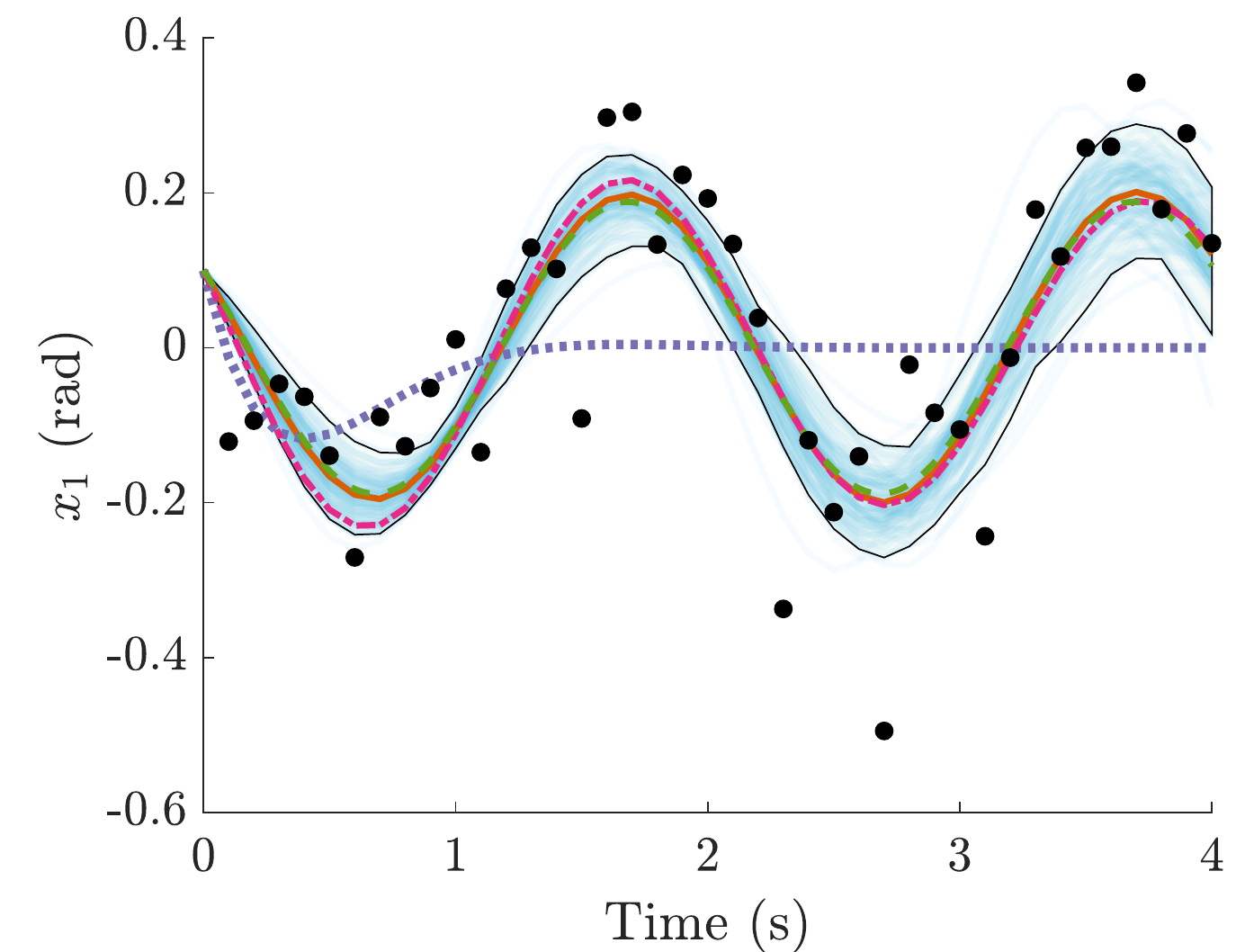}
        \caption{$x_1, \sigma=10^{-1}, \numObs=40$}
        \label{fig:40_10reconx1}
    \end{subfigure}
    \begin{subfigure}{0.56\linewidth}
        \centering
        \includegraphics[width=\linewidth]{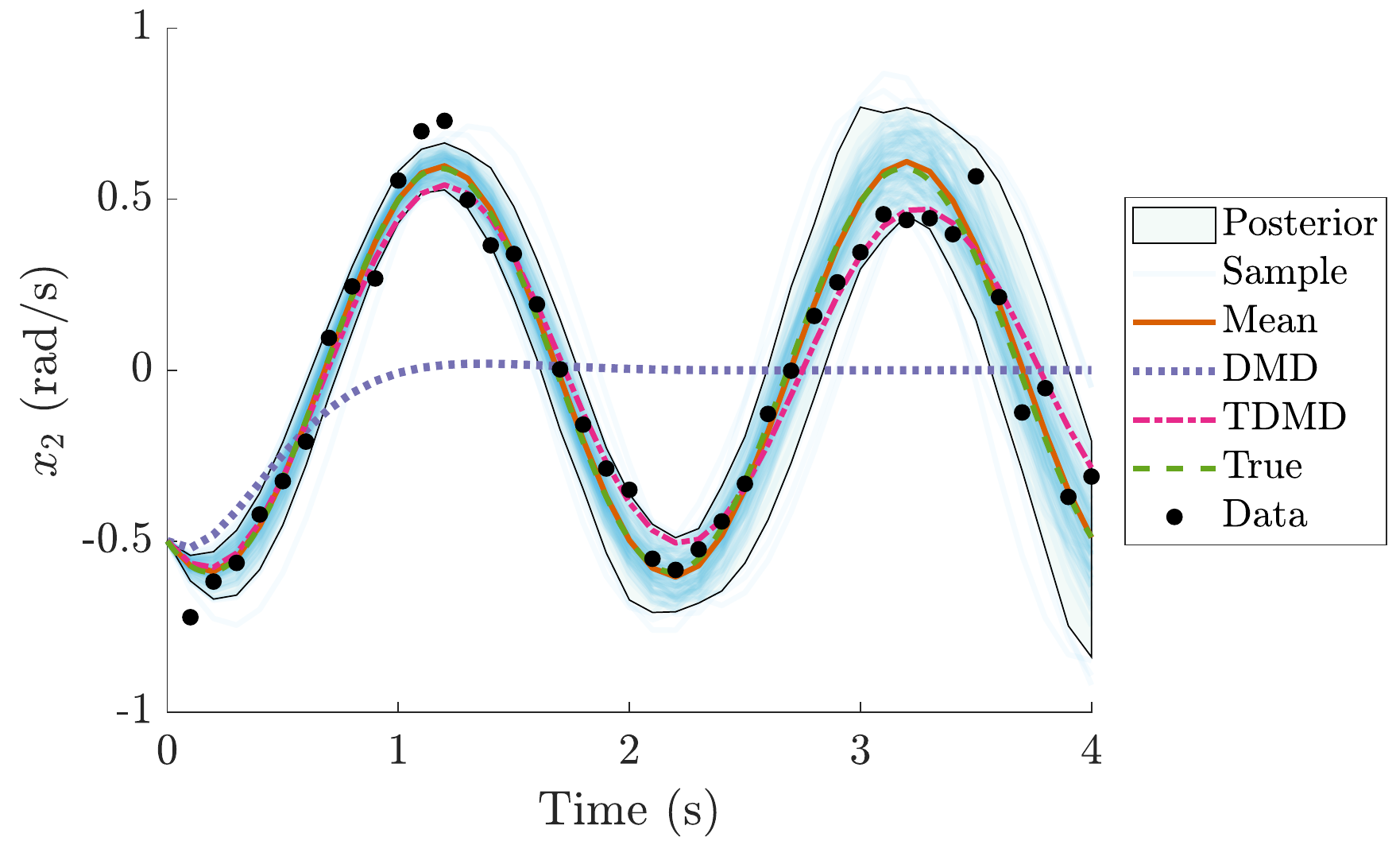}
        \caption{$x_2, \sigma=10^{-1}, \numObs=40$}
        \label{fig:40_10reconx2}
    \end{subfigure}
    \caption{Comparison of reconstruction error amongst the Bayesian and (T)DMD algorithms for different levels of noise and measurements for the linear pendulum truth model. Top row corresponds to a low-noise/sparse-data case and the bottom row corresponds to a high-noise/dense-data case. Left column corresponds to the first state (angular position) and right column corresponds to the second state (angular velocity). For low-noise, the algorithms perform similarly; however, the (T)DMD approaches underestimate the amplitude. For the high-noise case, DMD fails and TDMD misfits the amplitude. The Bayesian approach is able to recognize greater uncertainty for the high-noise case.}
    \label{fig:recon_linear}
\end{figure}
 
 \begin{figure}
    \centering
    \begin{subfigure}{0.43\linewidth}
        \centering
        \includegraphics[width=\linewidth,clip,trim=0 0 100 0]{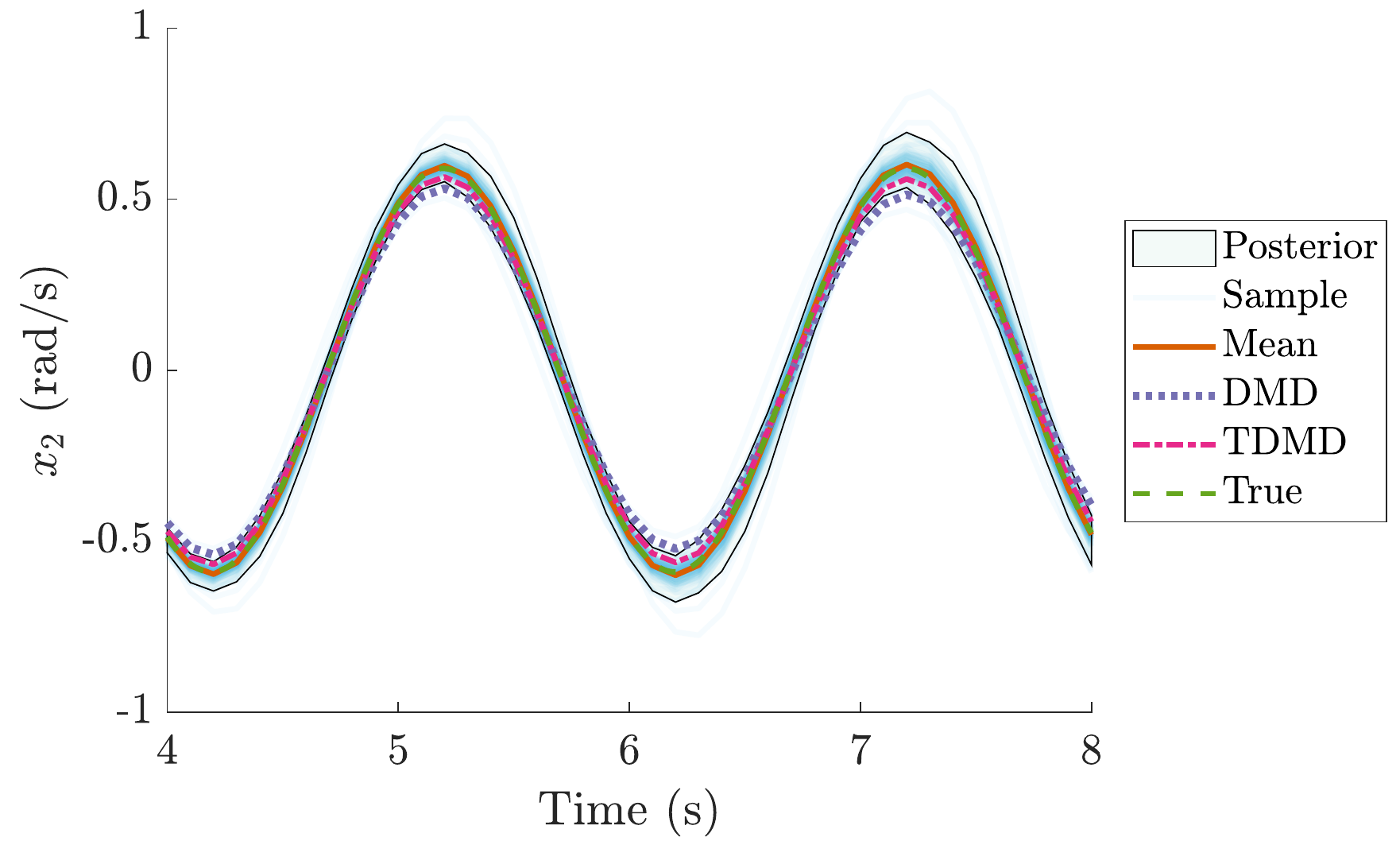}
        \caption{$x_2, \sigma=10^{-2}, \numObs=8$}
        \label{fig:8_1predictx2}
    \end{subfigure}
    \begin{subfigure}{0.56\linewidth}
        \centering
        \includegraphics[width=\linewidth]{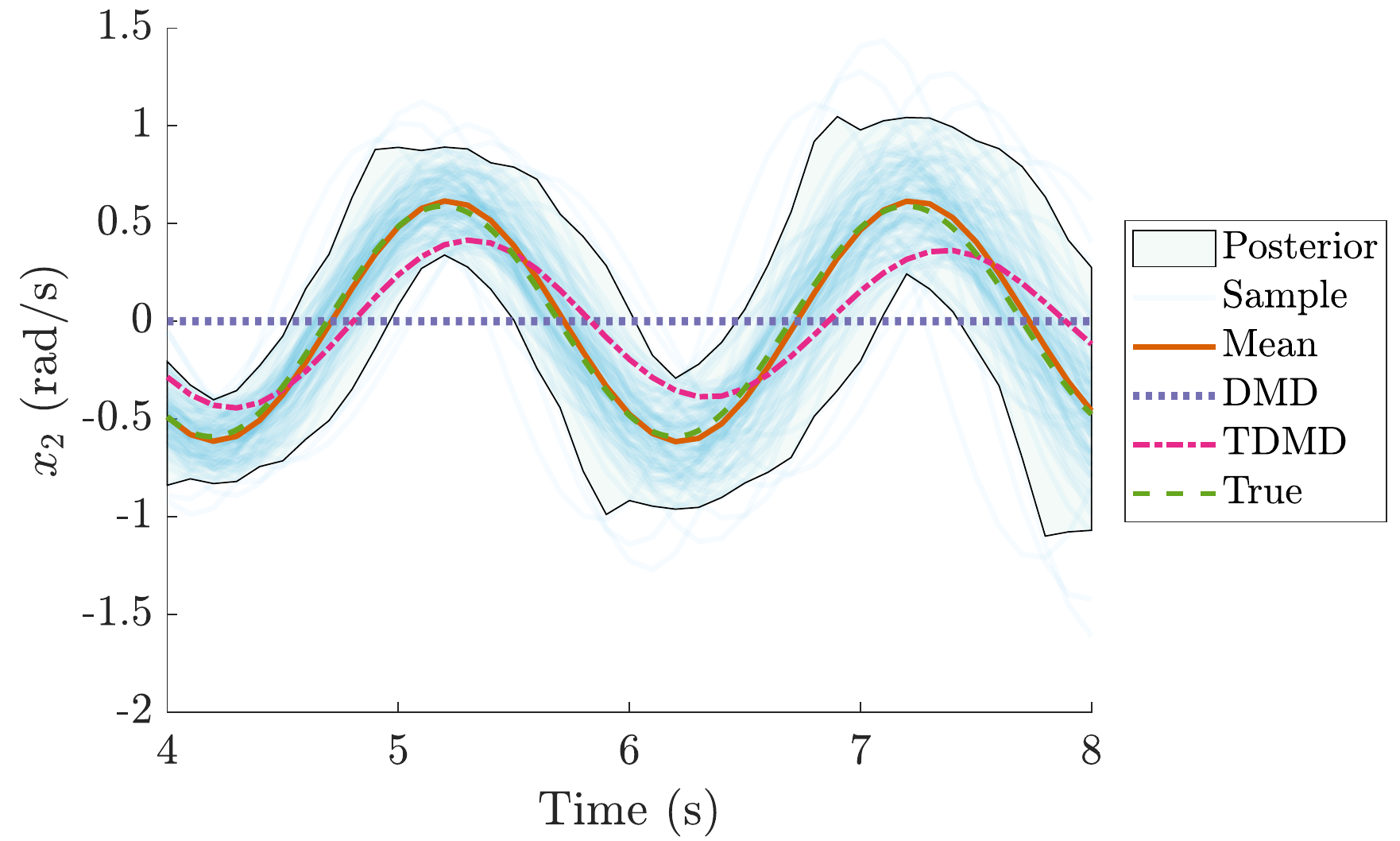}
        \caption{$x_2, \sigma=10^{-1}, \numObs=40$}
        \label{fig:40_10predictx2}
    \end{subfigure}
    \caption{Comparison of prediction error amongst the Bayesian and (T)DMD algorithms for different levels of noise and measurements for the linear pendulum truth model. Left panel corresponds to a low-noise/sparse-data case and the right panel corresponds to a high-noise/dense-data case. Both panels show the angular velocity of the pendulum. For low-noise, the algorithms perform similarly. For the high-noise case, DMD fails and TDMD can be seen to be out of phase and have a smaller amplitude. The Bayesian approach is able to recognize greater uncertainty for the high-noise case.}
    \label{fig:predict_linear}
\end{figure}

Figure~\ref{fig:eigs} shows the estimated eigenvalues of the system by the Bayesian and (T)DMD algorithms.  In the low noise case, Figure~\ref{fig:8_1eig} shows that the Bayesian approach is slightly more accurate than the (T)DMD approaches, though they all perform well. For the high noise case, Figure~\ref{fig:40_10eig} shows that DMD is unable to provide a reasonable estimate of the eigenvalues. TDMD gives a close estimate, but the estimated eigenvalues are too far in the left-hand plane, causing the gradual decay seen in Figure~\ref{fig:predict_linear}. The Bayesian estimate lies almost exactly on top of the truth.  

\begin{figure}
\centering
    \begin{subfigure}{0.45\textwidth}
    \centering
    \includegraphics[width=0.9\linewidth]{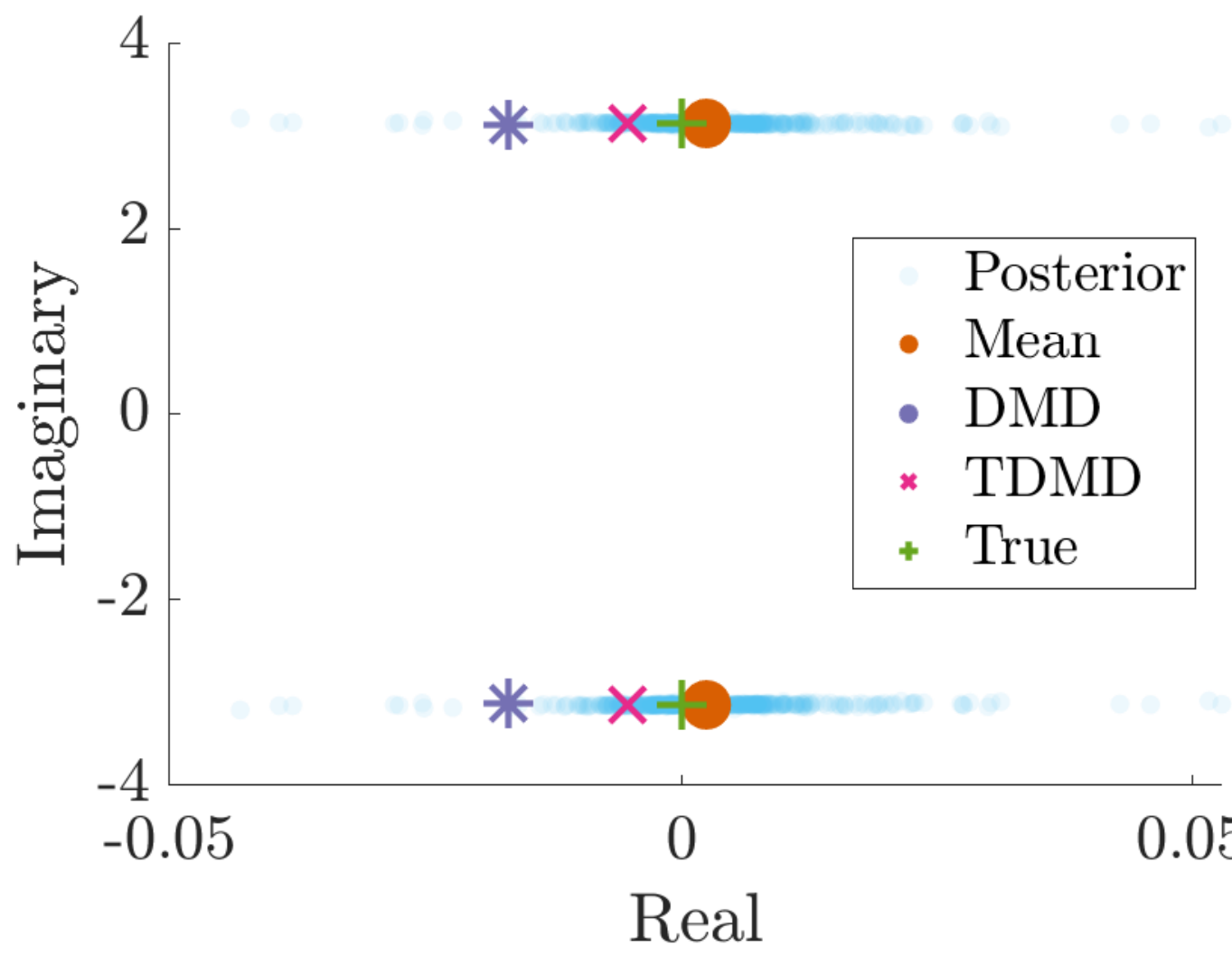}
    \caption{Lower noise/measurement frequency}
    \label{fig:8_1eig}
    \end{subfigure}
    \qquad 
    \begin{subfigure}{0.45\textwidth}
    \centering
    \includegraphics[width=0.9\linewidth]{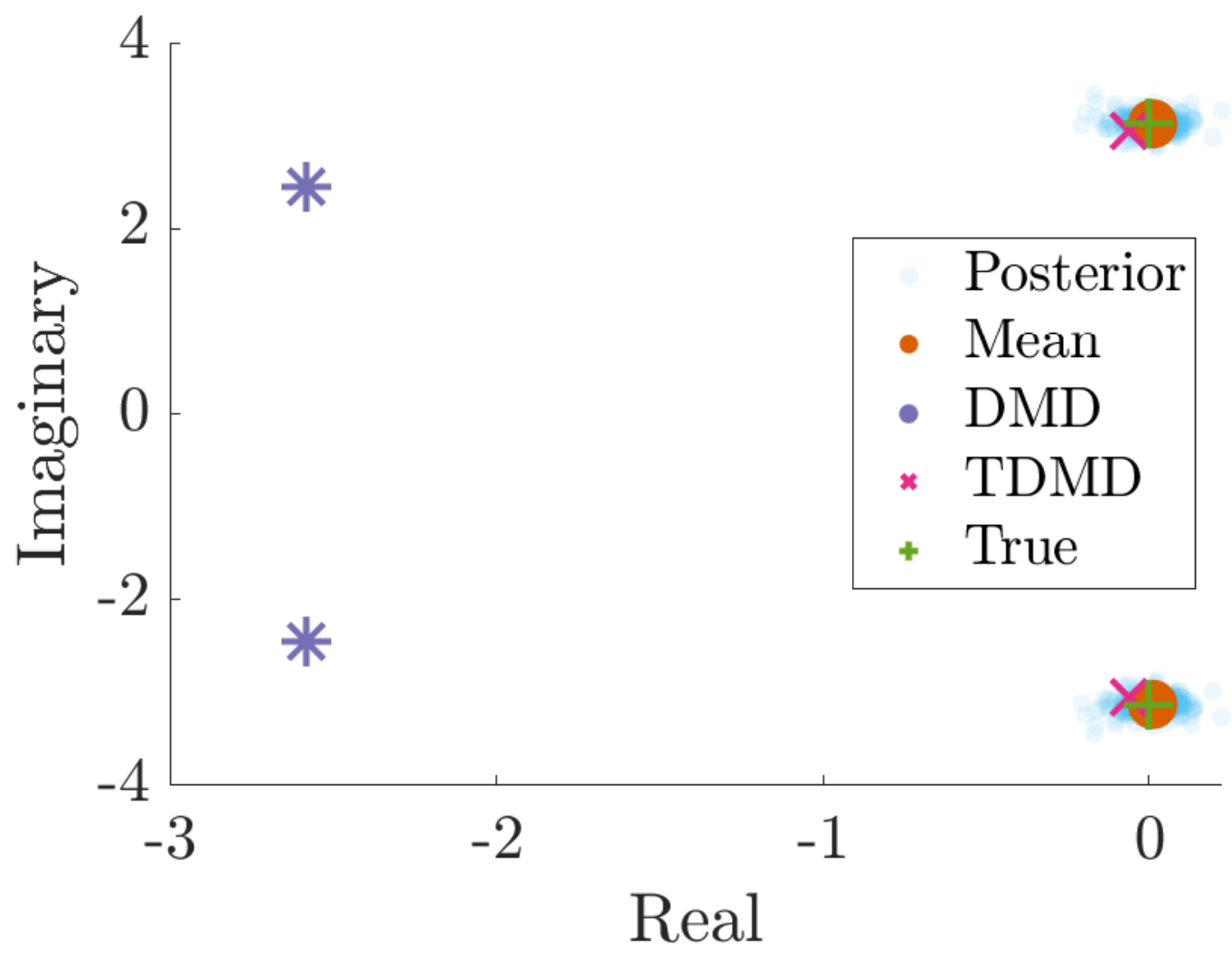}
    \caption{Higher-noise/measurement frequency}
    \label{fig:40_10eig}
    \end{subfigure}
    \caption{Eigenvalue distributions for the estimators of the linear pendulum. All three algorithms come very close to learning the true eigenvalues in the low noise case, but Bayes is able to outperform the other two in both the high and low noise cases. DMD achieves significant error when the data are noisy. The mean value here represents the mean of the eigenvalues.}
    \label{fig:eigs}
\end{figure}

Finally, Figure~\ref{fig:varHistLin} shows the marginal and joint distributions of the process and measurement noise variances for these two cases.  The process noise is very close to zero because we are using a linear model for a linear system, and thus the system learns that the dynamics can be captured exactly. These plots also indicate that we have learned the measurement noise, as the mode aligns closely with the true value shown in red.  Note also that the joint distribution in this figure shows that the two noise variances are negatively correlated, conveying the fact that the estimator does not yet have enough data to determine if the model is off and the measurements are accurate, or if the model is accurate and the measurements are noisy. As more data come in, however, one of these scenarios can usually be ruled out and the distribution becomes unimodal.

\begin{figure}
    \begin{subfigure}{0.49\linewidth}
        \centering
        \includegraphics[width=\linewidth]{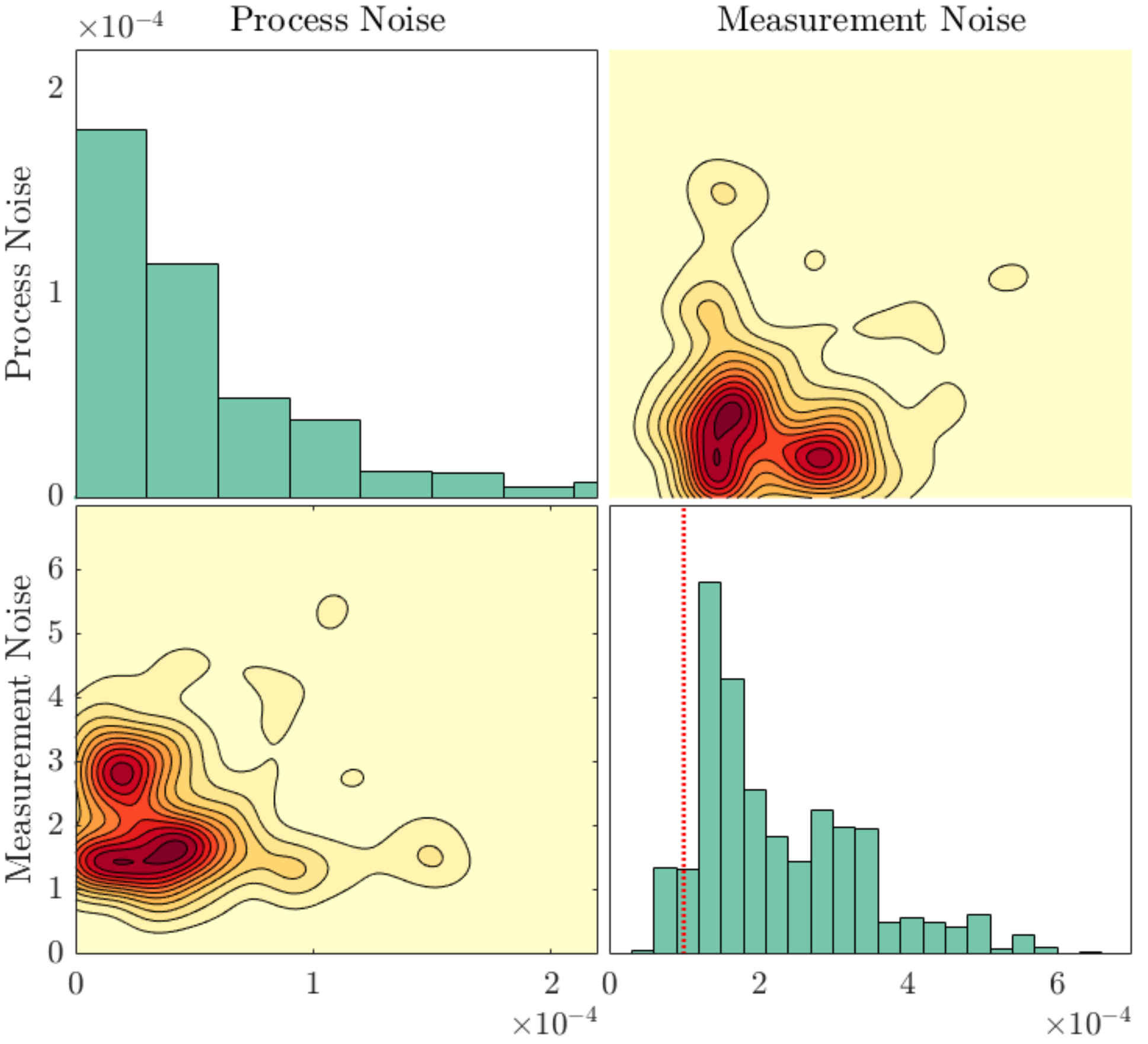}
        \caption{Lower noise/measurement frequency}
        \label{fig:nlscatterHist8_1}
    \end{subfigure}
    \begin{subfigure}{0.49\linewidth}
        \centering
        \includegraphics[width=\linewidth]{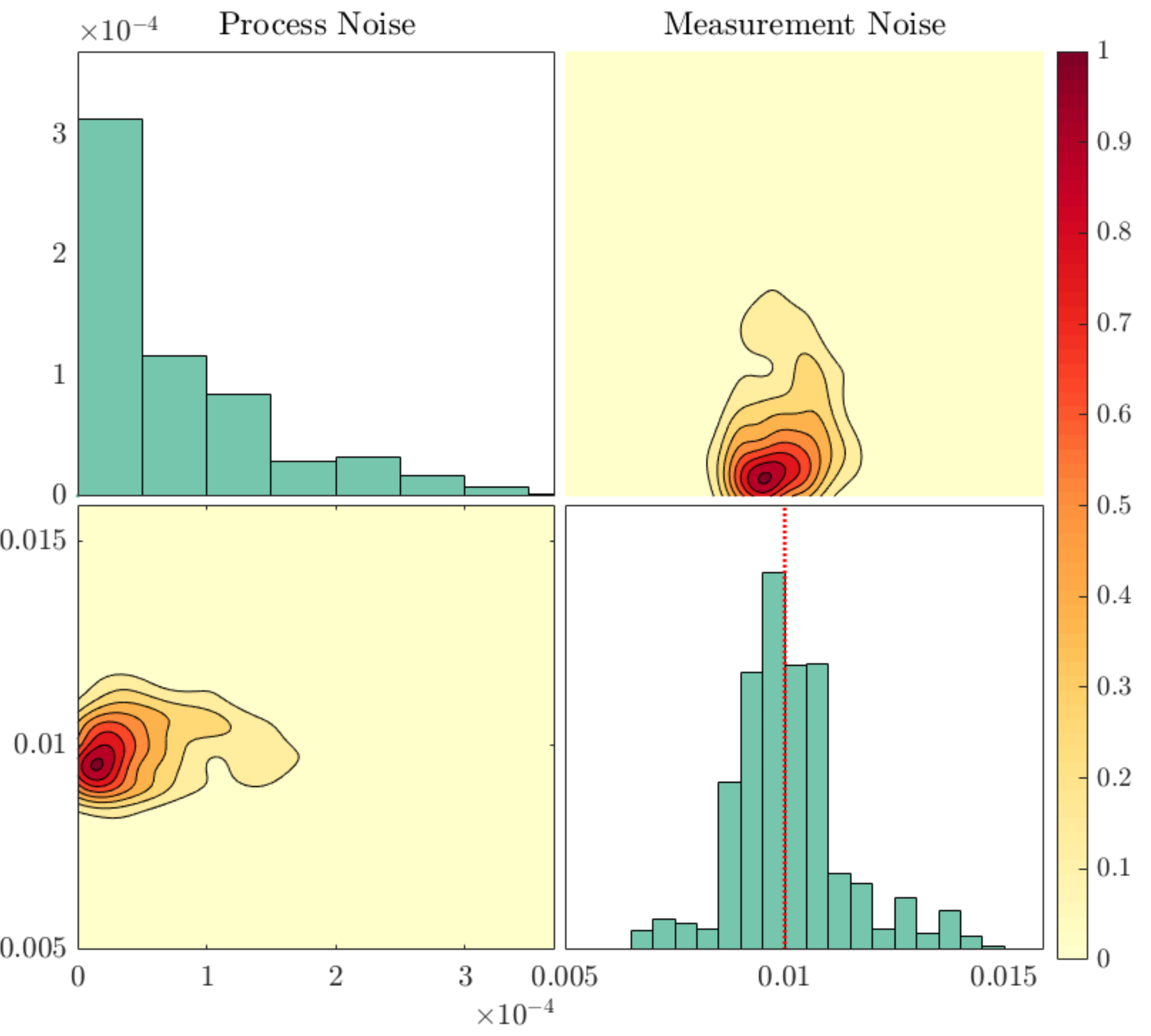}
        \caption{Higher-noise/measurement frequency}
        \label{fig:scatterHist40_10}
    \end{subfigure}
    \caption{Marginal and joint posterior distributions of the process and measurement noise variance parameters during the recovery of the linear pendulum.  In the left panel, 8 measurements are not enough for the Bayesian estimator to unambiguously determine the measurement noise, but its best guess (the mode) aligns with the truth.  On the right, we see that 40 measurements are enough to define a distinct mode within the joint distribution, which also aligns with the truth.}
    \label{fig:varHistLin}
\end{figure}

\subsection{Nonlinear pendulum, linear model}

Next we consider a problem where the model class within which we are learning does not encompass the true underlying dynamical system. This is the most realistic situation that would be encountered in practice, and avoids the so-called ``inverse crime''~\cite{Colton_1992, Wirgin_2004}.

Consider a nonlinear pendulum
\begin{align}
    \begin{bmatrix}\dot{x}_1\\\dot{x}_2\end{bmatrix} = 
    \begin{bmatrix}x_2 \\ -\frac{g}{L}\text{sin}(x_1) \end{bmatrix},
    \quad x_0=\begin{bmatrix}2.5 \\ 0\end{bmatrix}
\end{align}
to be the truth model. We have changed the initial condition to ensure that we are operating in the nonlinear regime.

The learning setup is identical to that provided in Section~\ref{subsec:linpend_param}; we learn a linear model, and the same validation experiments are performed. These experimental results are shown in Figure~\ref{fig:nlMSE_contours}.  We are able to clearly see here that, although the three algorithms are comparable in the low noise regime, the strength of the Bayesian approach increases with the measurement noise.  A discussion on why (T)DMD may outperform the mean estimator from the Bayesian approach in the low noise regime is provided later in Section~\ref{subsubsec:diagnostics}.

\begin{figure}
    \centering
    \begin{subfigure}{0.49\linewidth}
        \includegraphics[height=45mm]{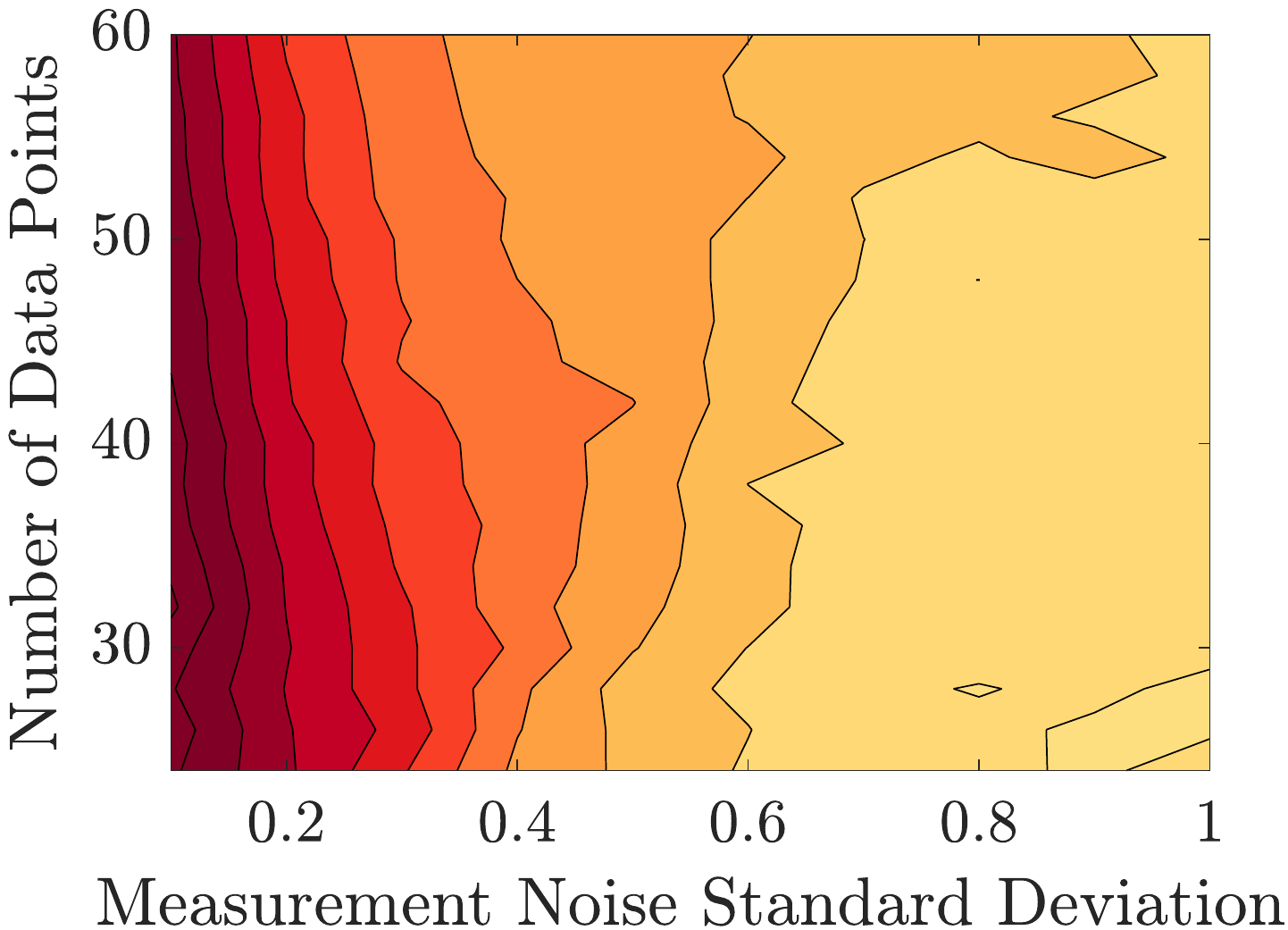}
        \caption{Bayes to DMD Prediction Ratio}
        \label{fig:nlpred_ratioxmeanDMD}
    \end{subfigure}
    \begin{subfigure}{0.49\linewidth}
        \includegraphics[height=45mm]{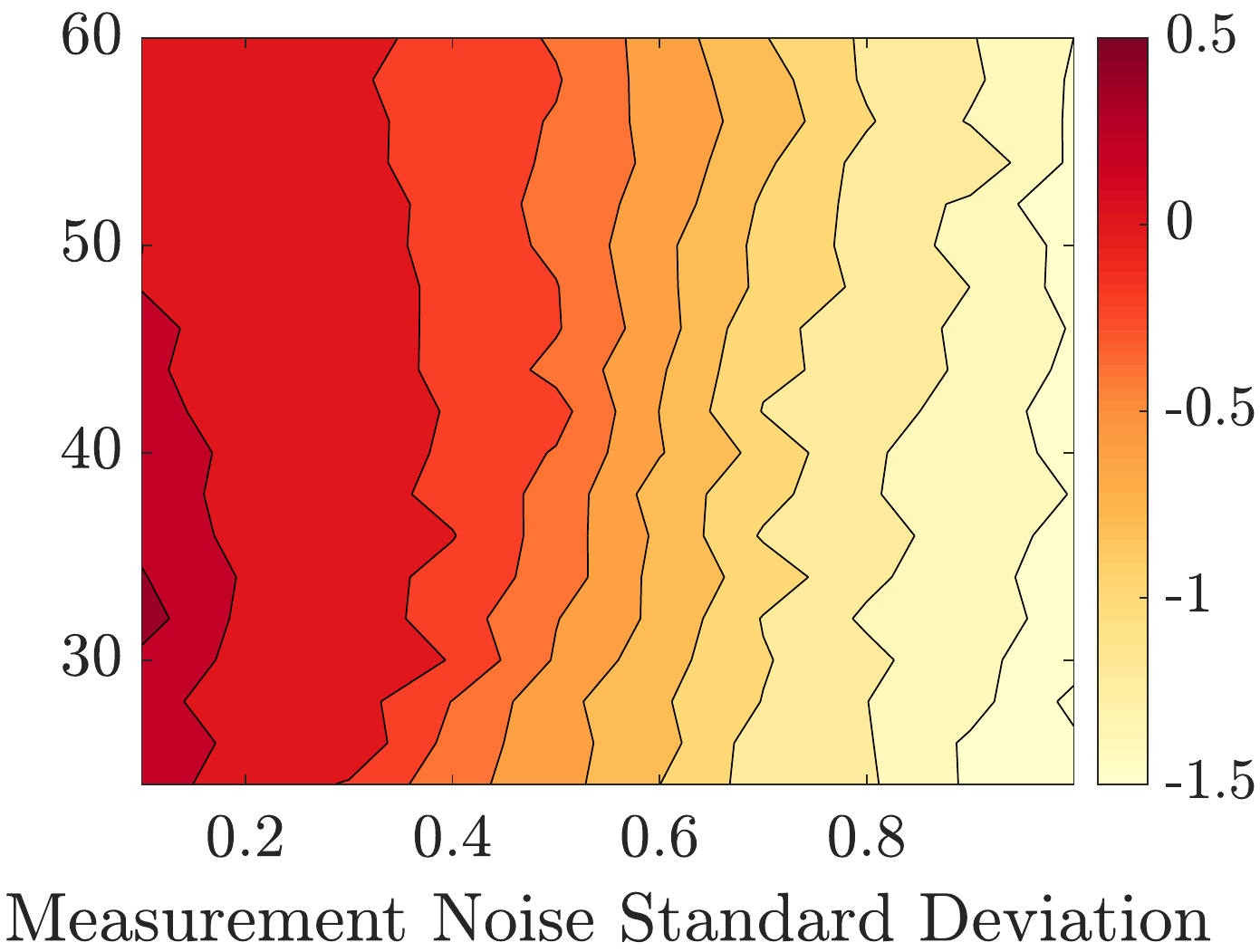}
        \caption{Bayes to TDMD Prediction Ratio}
        \label{fig:nlpred_ratioxmeanTDMD}
    \end{subfigure}
    \caption{The experiment conducted here is the same as described in Figure~\ref{fig:MSE_contours}, but this time with the nonlinear pendulum. Note that as the measurement noise increases, the log ratios decrease, reflecting the robustness of the Bayesian approach in the face of noisy measurements.  A detailed explanation for the low noise regime where it appears (T)DMD outperforms Bayes is given in Section~\ref{subsubsec:diagnostics}.}
    \label{fig:nlMSE_contours}
\end{figure}

We again present more detailed results for two representative cases. Both cases have $n=24$ data points, but the first case is a low noise case of $\sigma=10^{-1}$ and the second case is a higher noise case of $\sigma=1$.

The resulting reconstructions are shown in Figure~\ref{fig:recon_nonlinear}, and the predictions are given in Figure~\ref{fig:predict_nonlinear}. Note that the variances of the posterior distributions in both cases grow much more quickly than in either of the linear pendulum examples as a consequence of increased model uncertainty (process noise).  The posterior distribution can therefore be used to qualitatively assess not only how informative the data are, but also how appropriate the chosen model is for the system at hand.  In the low noise case, the performances of the three estimates are virtually indistinguishable, once again demonstrating that even in systems that are ideal for (T)DMD, there is no loss of performance when using the Bayesian estimator. In the high noise case, DMD struggles with noisy measurements and settles on quickly decaying to zero, similar to what we observed in the linear case.  TDMD, on the other hand, comes closer but is noticeably out of phase with the truth.  The Bayesian approach is able to reconstruct the signal very closely, at least within the constraints imposed by using a linear model.

\begin{figure}
    \centering
    \begin{subfigure}{0.43\linewidth}
        \includegraphics[width=\linewidth]{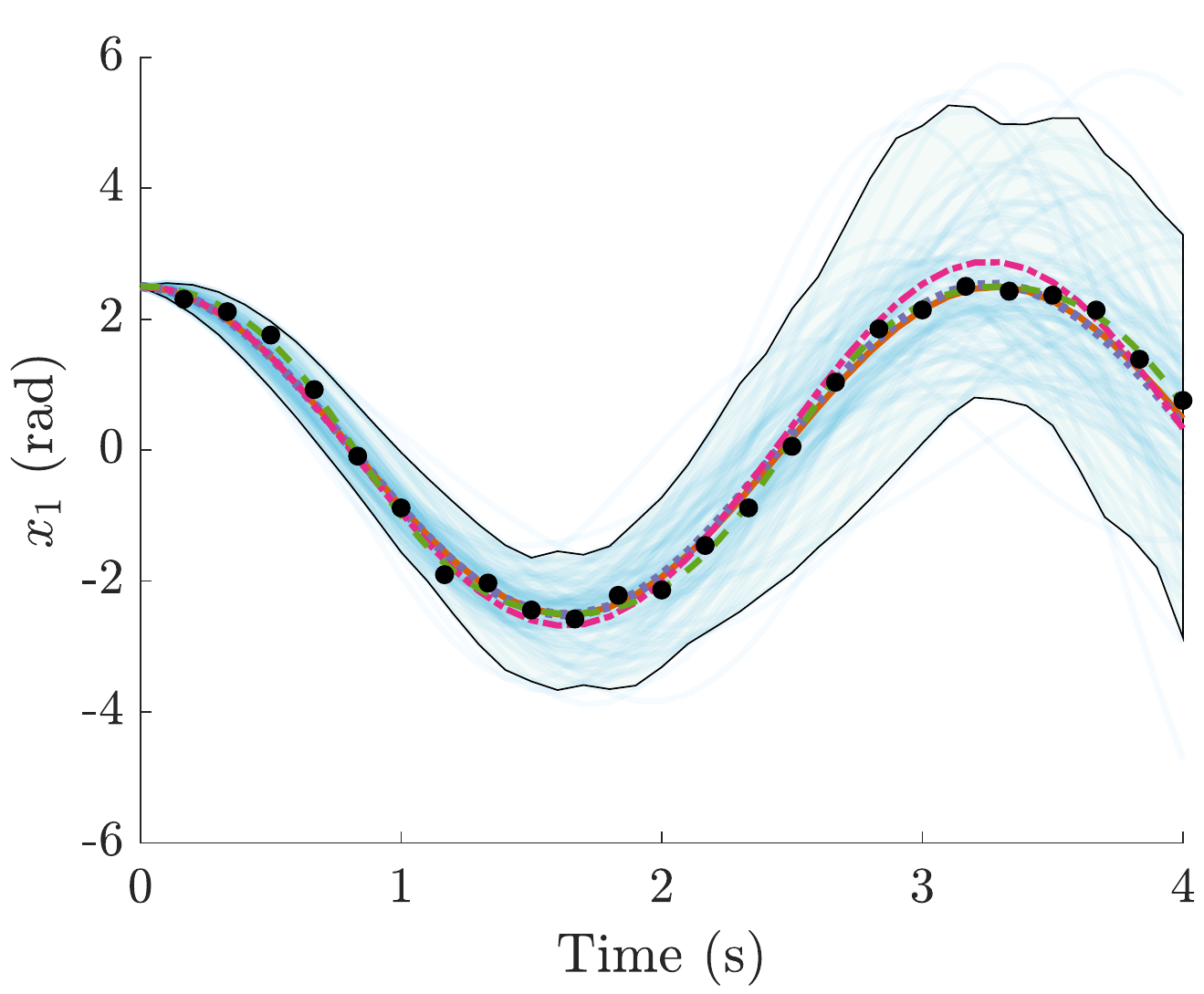}
        \caption{$x_1$, $n=24$, $\sigma=10^{-1}$ }
        \label{fig:nl24_1reconx1}
    \end{subfigure}
    \begin{subfigure}{0.56\linewidth}
        \includegraphics[width=\linewidth]{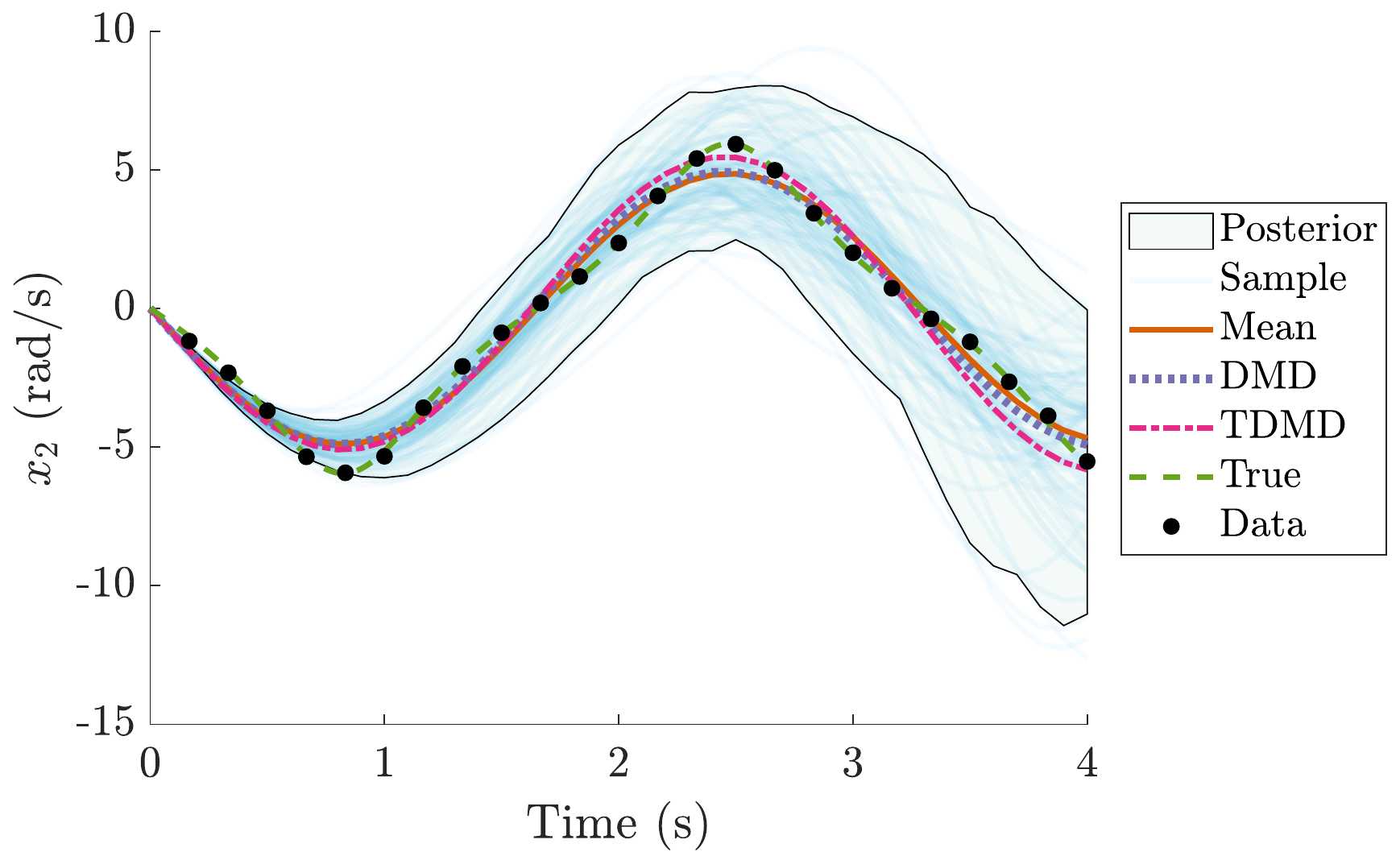}
        \caption{$x_2$, $n=24$, $\sigma=10^{-1}$}
        \label{fig:nl24_1reconx2}
    \end{subfigure}
    \begin{subfigure}{0.43\linewidth}
        \includegraphics[width=\linewidth]{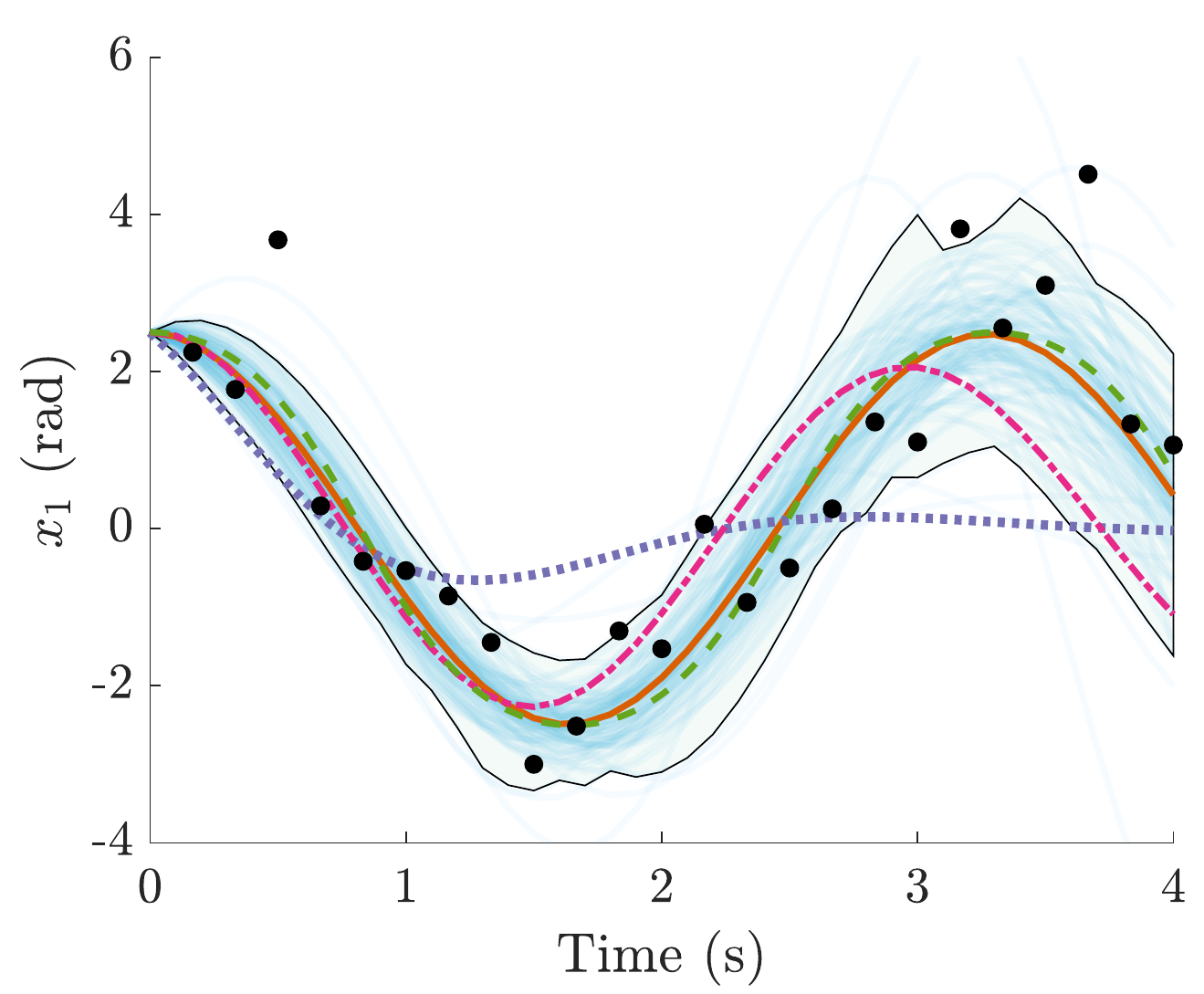}
        \caption{$x_1$, $n=24$, $\sigma=1$}
        \label{fig:nl24_10reconx1}
    \end{subfigure}
    \begin{subfigure}{0.56\linewidth}
        \includegraphics[width=\linewidth]{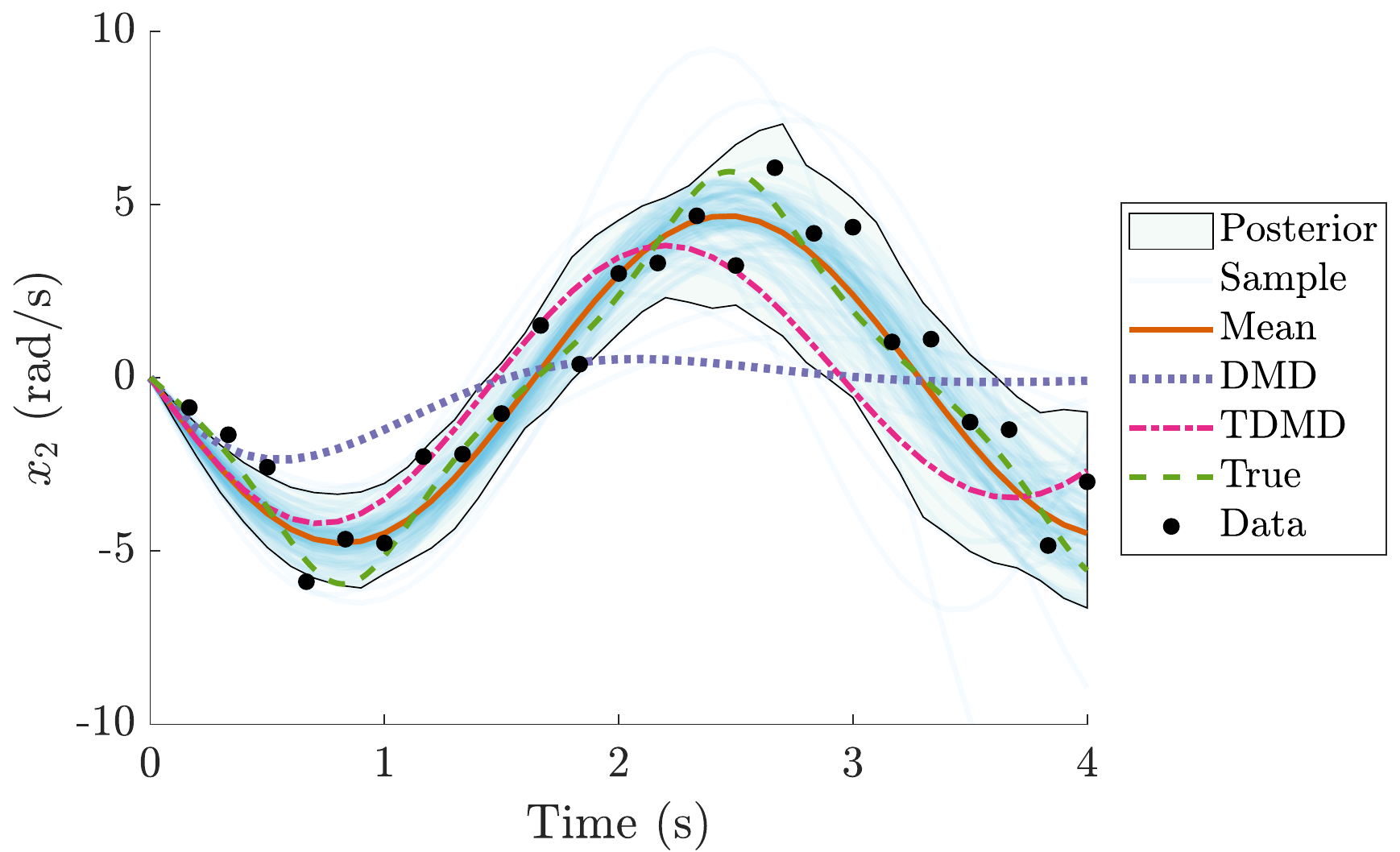}
        \caption{$x_2$, $n=24$, $\sigma=1$}
        \label{fig:nl24_10reconx2}
    \end{subfigure}
    \caption{Reconstruction performance for low-noise (top row) and high-noise (bottom row) data sets for the nonlinear pendulum using a linear model. All three estimates capture the truth closely in the low noise case, but only the Bayesian algorithm performs well (it is in phase and approximately the correct amplitude) for the high noise case. Furthermore, it reflects the additional uncertainty resulting from the simplistic linear model through its posterior distribution as compared to the results in Figure~\ref{fig:recon_linear}.  In the high noise case, DMD fails as it did for the linear pendulum, and TDMD underestimates both the period and amplitude of the pendulum's oscillations.}
    \label{fig:recon_nonlinear}
\end{figure}

\begin{figure}
    \centering
    \begin{subfigure}{0.43\linewidth}
        \includegraphics[width=\linewidth,clip,trim=0 0 100 0]{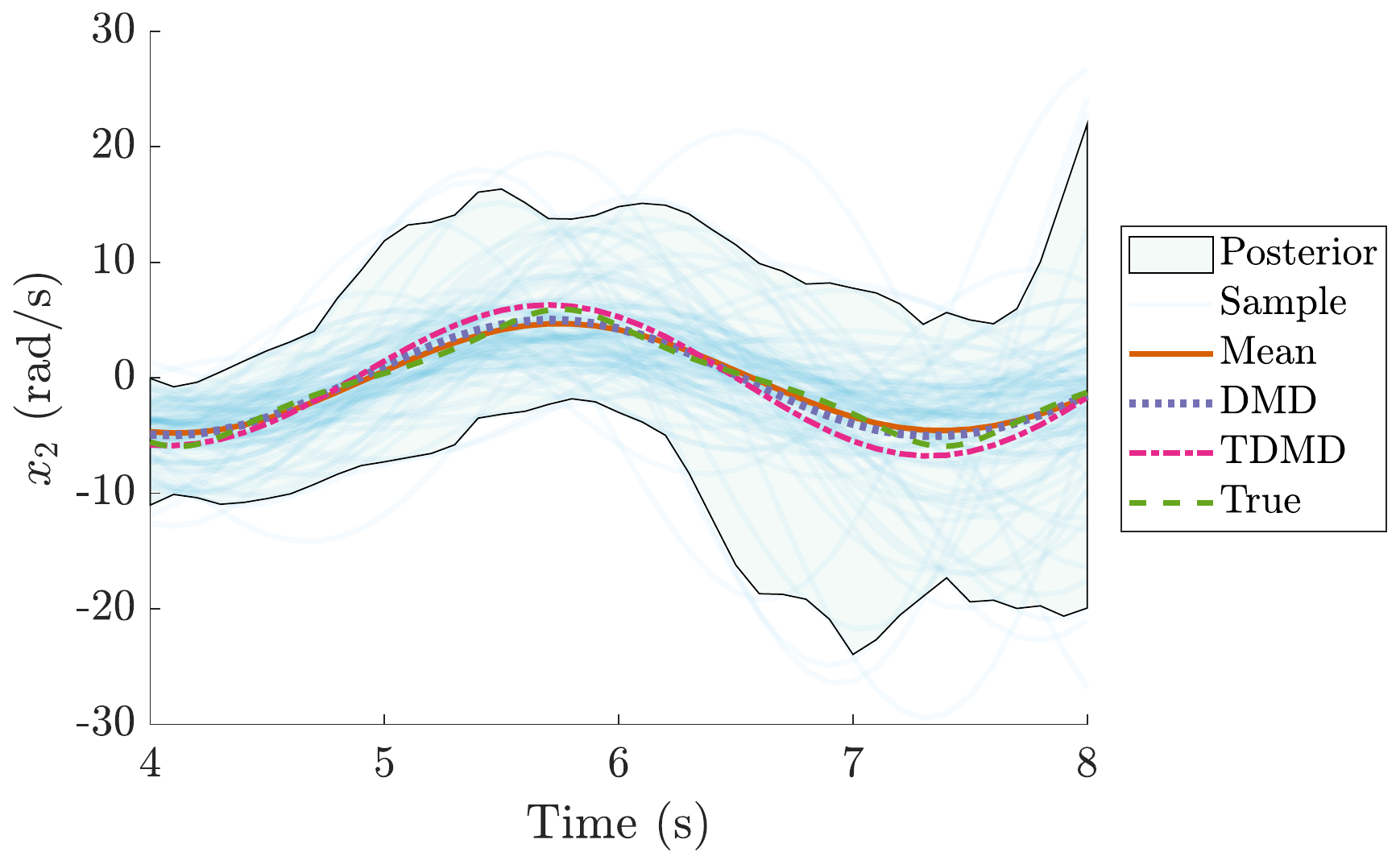}
        \caption{$x_2$ Prediction}
        \label{fig:nl24_1predictx2}
    \end{subfigure}
    \begin{subfigure}{0.56\linewidth}
        \includegraphics[width=\linewidth]{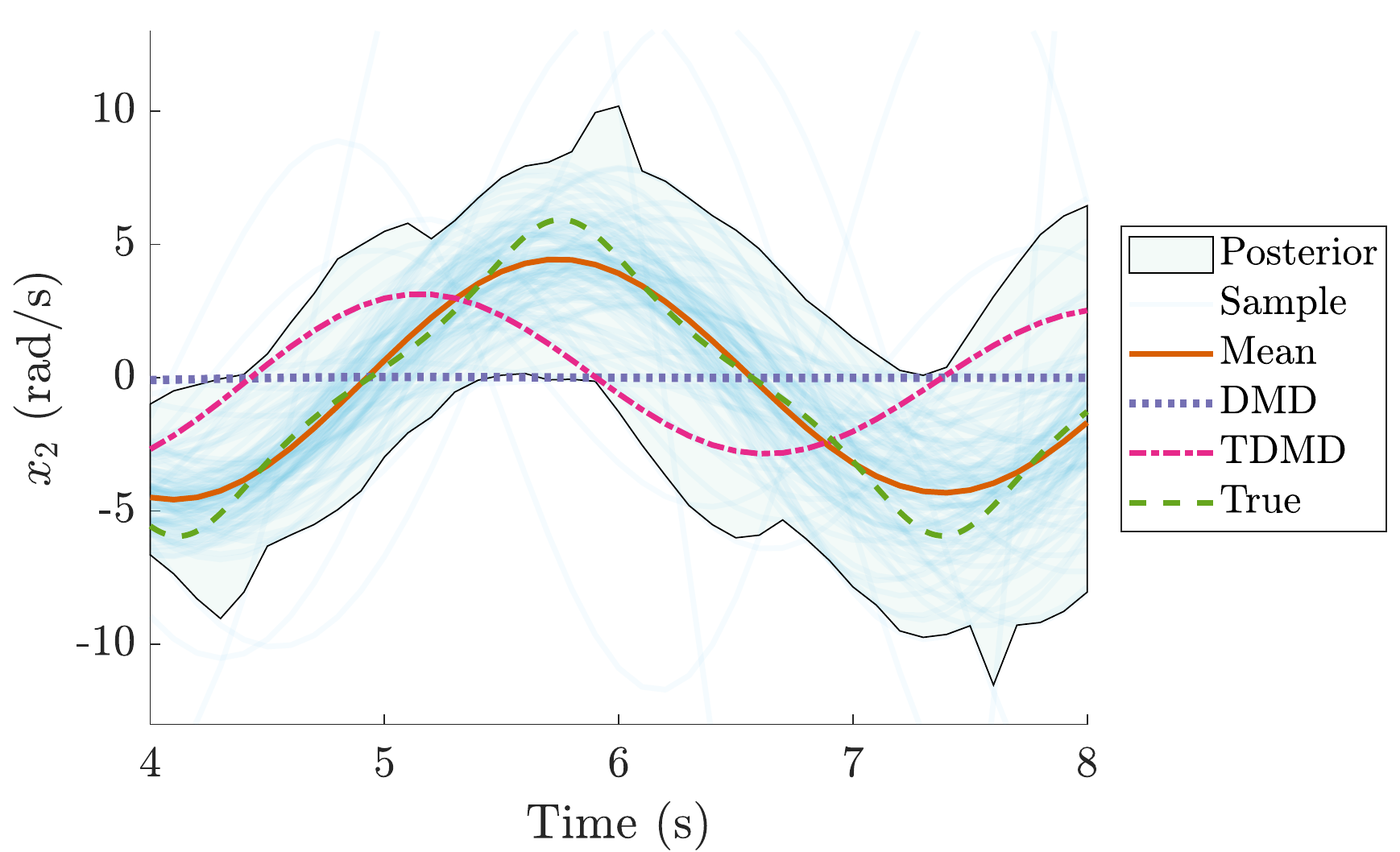}
        \caption{$x_2$ Prediction}
        \label{fig:nl24_10predictx2}
    \end{subfigure}
    \caption{Comparison shown here is the same as in Figure~\ref{fig:predict_linear}, but this time for a nonlinear pendulum truth model. In the low noise case, the estimates are all visually aligned with the truth. In the high noise case, DMD fails and TDMD falls out of phase, but the Bayesian algorithm remains robust and produces an accurate estimate.}
    \label{fig:predict_nonlinear}
\end{figure}

Next we investigate what the Bayesian approach learns for the process and measurement noise in the case where there is a model error. The  marginal and joint posterior distributions for both measurement noise cases are shown in Figure~\ref{fig:varHistNL}.   We observe that in the low noise case~\ref{fig:nlscatterHist24_1}, the joint distribution is bimodal.  The smaller mode corresponds to a model with low process noise and high measurement noise, and the larger mode corresponds to a model with high process noise and low measurement noise. Bayes has effectively uncovered that the data can be explained in one of two ways: either the model fits the true system well, but the data are very noisy, or the measurement noise is low and the model is not capable of properly capturing the dynamics.  In this case, the latter is true and is also the option that Bayes found to be much more likely.  For the high noise case~\ref{fig:nlscatterHist24_10}, the joint distribution is unimodal, conveying the possibility of only one process-measurement noise pairing.  Once again, the modes of both the measurement noise marginal distributions align closely with the truth shown in red.  Finally,  we see that the process noise magnitudes in both cases are much larger than those seen in the linear pendulum examples (Figure~\ref{fig:varHistLin}) as a consequence of trying to capture nonlinear dynamics with a linear model.

\begin{figure}
    \centering
    \begin{subfigure}{0.50\linewidth}
        \centering
        \includegraphics[width=\linewidth]{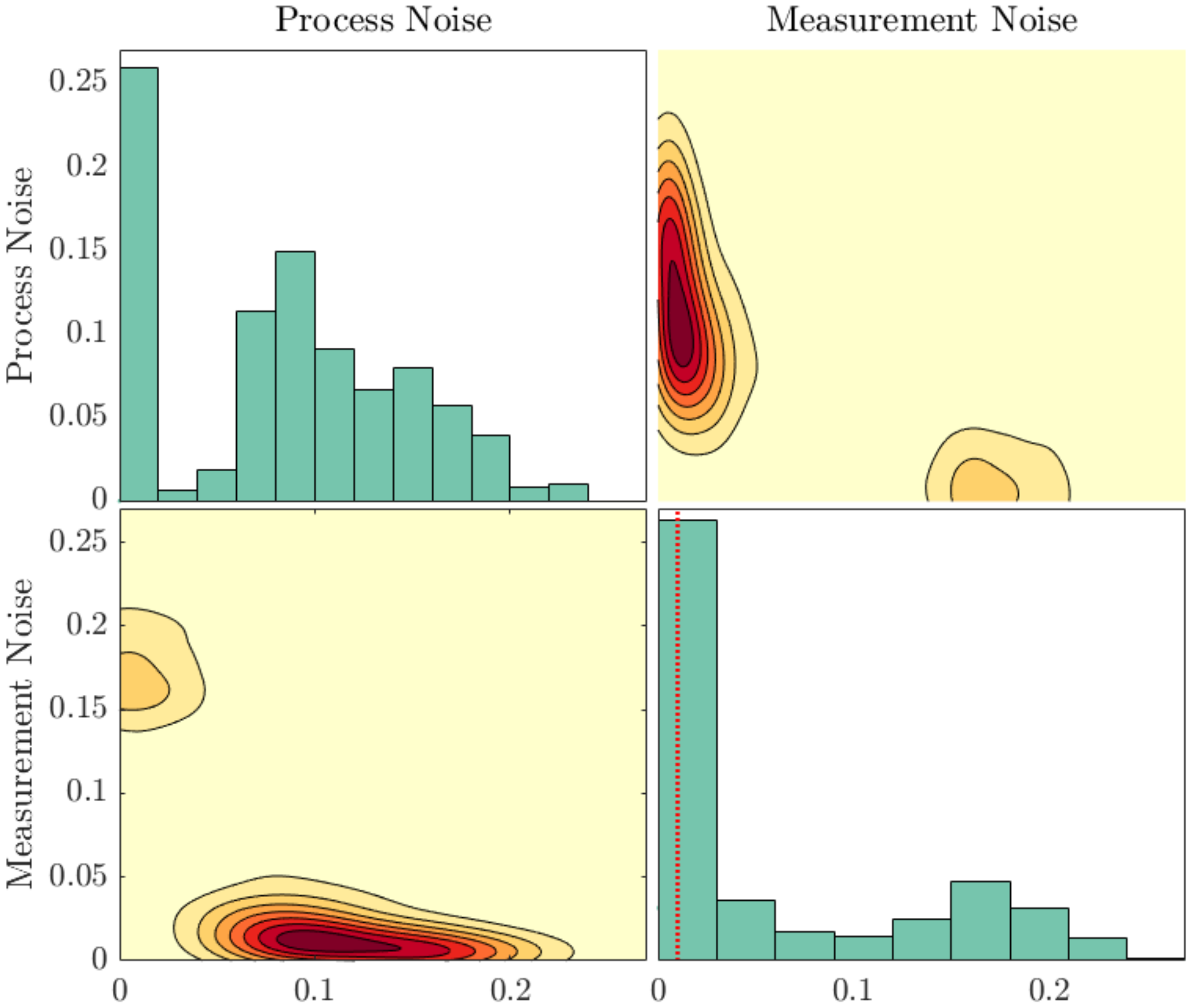}
        \caption{True measurement noise $\sigma= 0.1$}
        \label{fig:nlscatterHist24_1}
    \end{subfigure}
    \begin{subfigure}{0.48\linewidth}
        \centering
        \includegraphics[width=\linewidth]{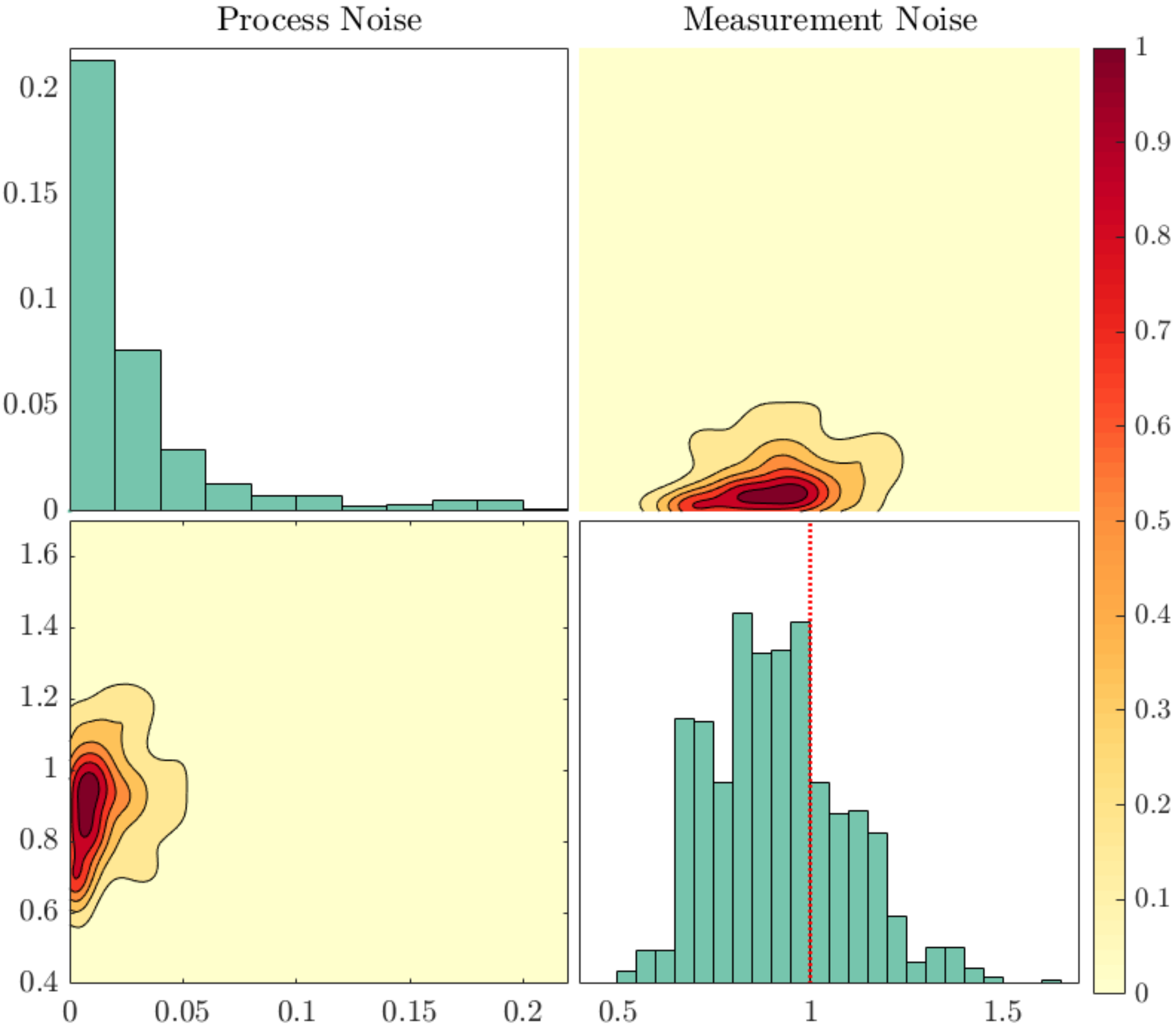}
        \caption{True measurement noise $\sigma=1.0$}
        \label{fig:nlscatterHist24_10}
    \end{subfigure}
    \caption{Marginal and joint posterior distributions of the process and measurement noise variance parameters during the recovery of the nonlinear pendulum. In the left panel, the joint distribution is bimodal, offering two possible models with the true case being strongly preferred.  In the right panel, all of the distributions are unimodal and in alignment with the truth.}
    \label{fig:varHistNL}
\end{figure}

\subsubsection{Discussion on diagnostics}
\label{subsubsec:diagnostics}
One of the strengths of the Bayesian approach is that it separates the learning stage from the decision making stage, so if the initial decision rule yields an unsatisfactory estimate, one can go back and analyze the posterior distribution to devise an improved decision rule.  It was noted earlier in Figure~\ref{fig:nlMSE_contours} that the average MSE of (T)DMD is lower than that of the average MSE of the Bayesian estimator over 500 data sets when the measurement noise is low.  This observation likely implies that there is a better decision rule that can be used to achieve performance at least as strong as DMD.  To understand how to best select a point from the posterior to be our estimate, we first look at the posterior over the states. Figure~\ref{fig:nlbadMean26_1} shows samples from the posterior predictive distribution for a single data set containing $n=26$ measurements with noise standard deviation of $\sigma=0.1$. The mean deviates from the truth near the peaks and valleys of the trajectory between about 2.5 and 4 seconds. This is the same location in which the posterior appears to be significantly spread in possible predictions. This presence of significant outliers is a result of the bimodal noise distribution previously discussed. Furthermore, it is clear that the mean is not a good estimator in the case of bimodal distributions; however, we see that there exists a mode in alignment with the truth.  Upon this realization, we can then craft a decision rule that selects this mode rather than the mean for improved performance. In this case, the mode-based rule would result in the Bayesian approach being 1.3 times better than the TDMD estimator. Moreover, this entire analysis can be done \textit{a posteriori}, and therefore uses no additional assumptions or requirements on our approach.

\begin{figure}
    \centering
    \includegraphics[width=0.5\linewidth]{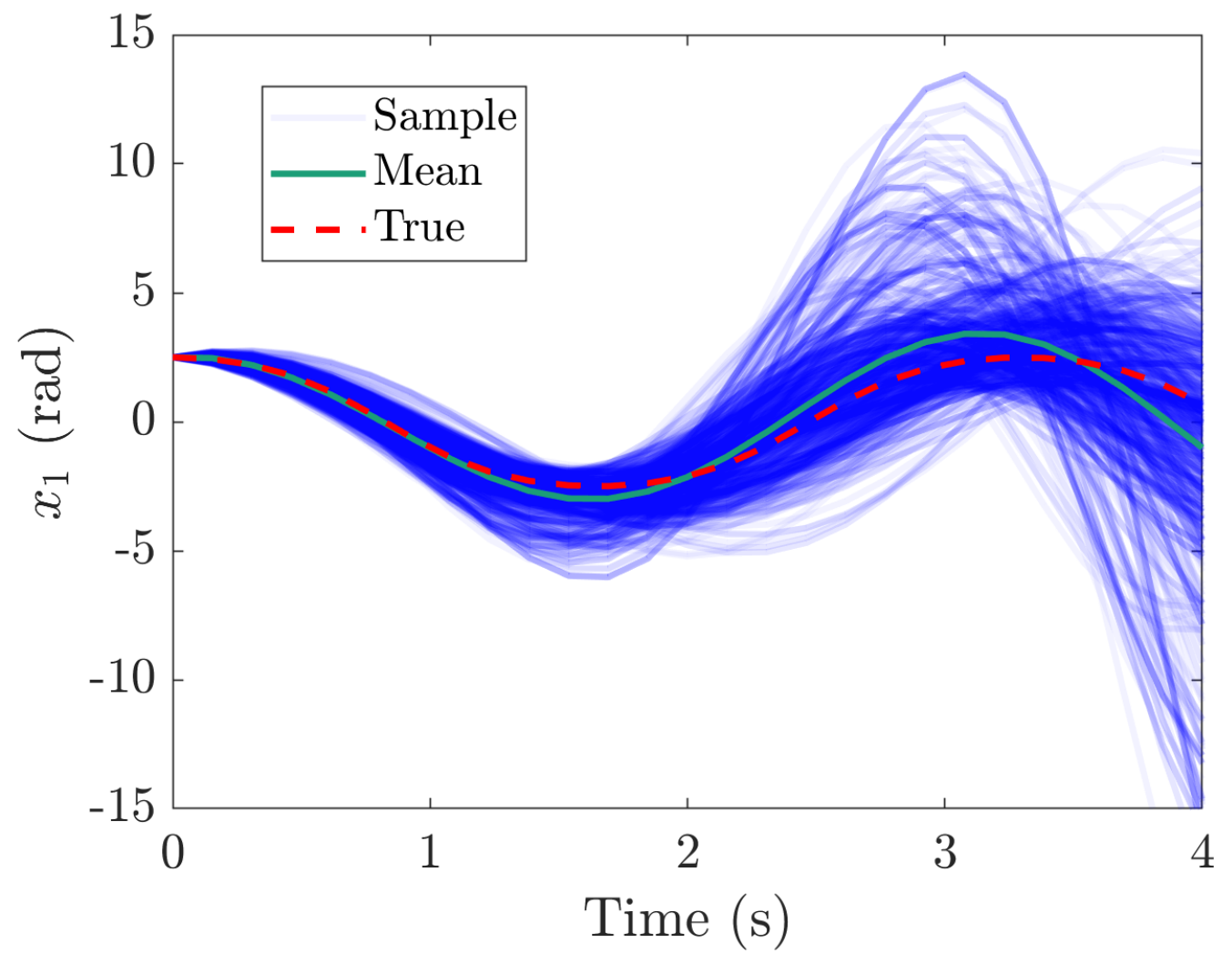}
    \caption{Posterior samples from a data set with $n=26$ data points with noise standard deviation of $\sigma=0.1$.  This data set produced the worst mean estimate out of the 500 with respect to MSE.  This figure illustrates that the mean deviates from the truth at the extrema of the curve where samples are skewed toward larger magnitudes.  Using a decision rule that selects the mode here would give a much better estimate.}
    \label{fig:nlbadMean26_1}
\end{figure}

We also note that the effect this has on the MSE ratio appears more strongly in this nonlinear case for two reasons.  The first reason is that the higher process noise due to the model error and low measurement noise can create a bimodal distribution because of the alternate possibility of a good model with noisy data as shown earlier.  The second reason is that the ratio of process noise to measurement noise is higher than that in the linear case. As we have shown in Theorem~\ref{th:DMD}, the (T)DMD approaches effectively assume the existence of process noise but no measurement noise. In cases where the linear and nonlinear models are mismatched this becomes a better assumption.

In summary, for cases where the model error can be significant, a non-mean estimator should be extracted from the Bayesian posterior. This estimator should be chosen by considering the bimodality of the learned process/measurement noise estimator, and can often be the peak of one of the modes. If this is done (it is an \textit{a posteriori} procedure), we have seen that it yields improved performance compared to (T)DMD.

\subsection{Optimal estimators and the Van der Pol oscillator}
\label{subsec:vanderpol}

Next we consider learning a sparse representation of a nonlinear system so that we can compare the Bayesian algorithm directly to SINDy. Here, it will once again be shown that factoring the process and measurement noise into our estimator will allow it to be robust even for noisy measurements.

Consider the nonlinear Van der Pol oscillator 
\begin{align}
    \begin{bmatrix}\dot{x}_1 \\ \dot{x}_2 \end{bmatrix} = 
    \begin{bmatrix} x_2 \\ \mu(1-x_1^2)x_2 - x_1 \end{bmatrix}, \qquad 
    x_0=\begin{bmatrix}0 \\ 2\end{bmatrix},
\end{align}
where $\mu=3$.  In this case, we use the SINDy algorithm rather than DMD to account for the nonlinear dynamics. For both the Bayesian and SINDy algorithms we therefore consider a subspace of right hand sides that is spanned by a set of candidate functions. We choose monomial candidates up to third degree and their interacting terms. As a result, each algorithm seeks to learn  20 dynamics parameters (10 for each state). The Bayesian algorithm is additionally tasked with learning the covariance matrices parameterized as follows:
\begin{equation}
    \Sigma(\thetsig) = \begin{bmatrix}\theta_{21} & 0 \\ 0 & \theta_{22}\end{bmatrix} \quad
    \Gamma(\thetgam) = \theta_{23}I_{2\times2}.
\end{equation}
The priors on the dynamics parameters are Laplace distributions with zero mean and on the variance parameters are once again half-normal distributions.

We consider two cases: one where SINDy shows strong performance, and one in which SINDy struggles, and we show that the Bayesian algorithm yields an accurate estimate in both cases.  The case in which SINDy excels is frequent and low noise data.  Here, $n=2,000$ measurements were taken over the course of 20 seconds with measurement noise standard deviation of $\sigma=10^{-3}$.  In the opposite case, we collect only $n=200$ measurements over 20 seconds with measurement noise standard deviation of $\sigma=2.5\times 10^{-1}$.

The reconstructions from these experiments are shown in Figure~\ref{fig:recon_vanderpol}, predictions are given in Figure~\ref{fig:predict_vanderpol}, and the phase plots over 200 seconds are given in Figure~\ref{fig:phase_vanderpol}.  Here, the mode represents the mode of the posterior predictive distribution.  In the low noise case, we see that the Bayesian algorithm and SINDy both capture the dynamics very closely.  We see that SINDy agrees slightly more closely with the trajectory as a result of its hard threshold regularization.  Note that the posterior in this case is very small because the high number of data points and low measurement noise gives us high certainty in our estimate.  In the high noise case, we see that SINDy gives a similar result to what DMD gave when the measurements were noisy: the trajectory immediately flatlines.  When the data are noisy like this, the procedure for SINDy is to denoise the data using total variation (TV) regularization~\cite{Chartrand_2011} before executing the algorithm.  However, the increased timestep between data makes it difficult to accurately denoise the data, and when the TV regularization is performed, SINDy ends up giving an unstable estimate. The Bayesian approach, however, is still able to identify the dynamics of the Van der Pol system.  The posterior in this high noise case is wider, signifying that the estimate holds more uncertainty than the low noise and frequent measurements case.

\begin{figure}
    \centering
    \begin{subfigure}{0.43\linewidth}
        \includegraphics[width=\linewidth]{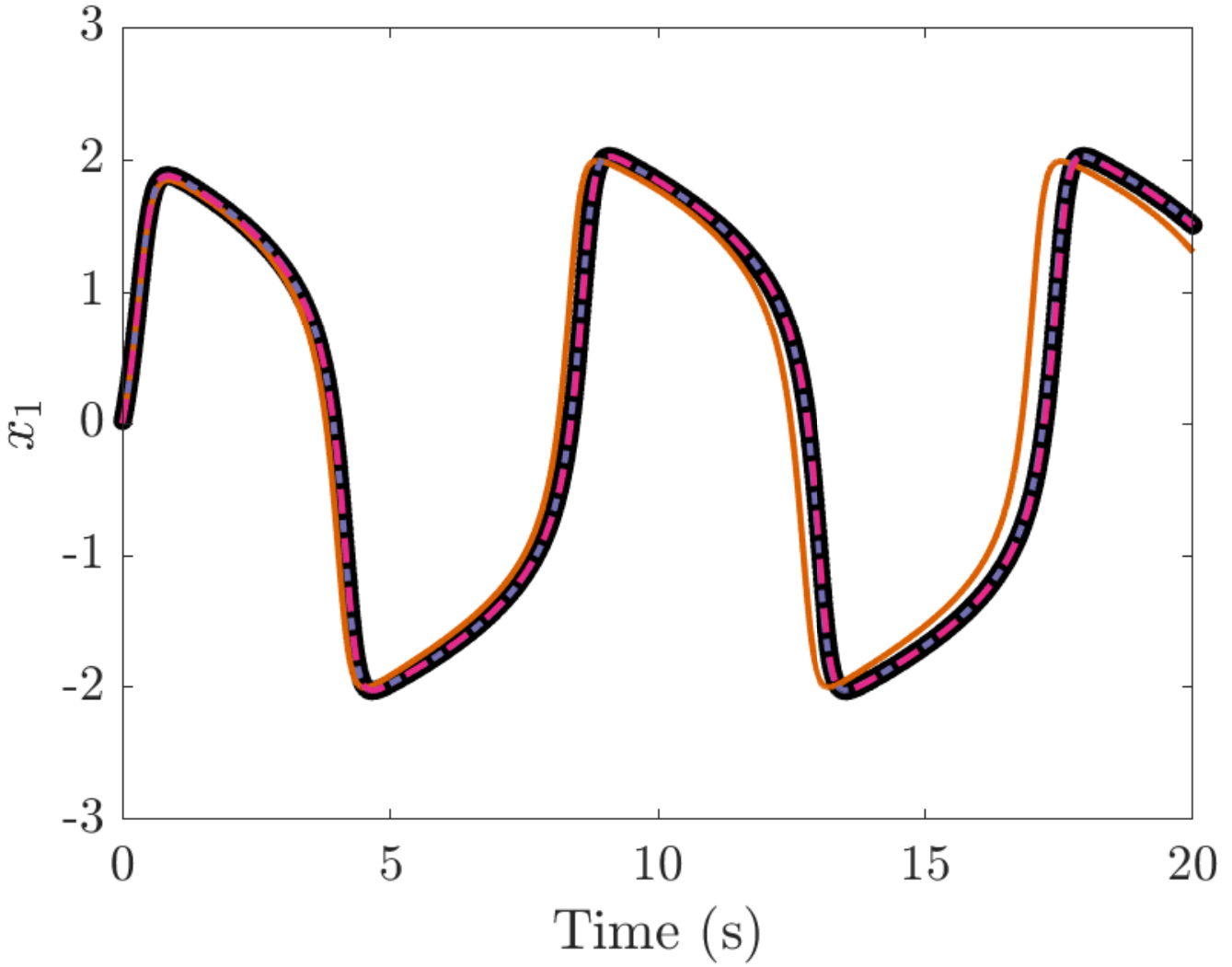}
        \caption{Reconstruction of $x_1$}
        \label{fig:vanderpolReconLowx1}
    \end{subfigure}
    \begin{subfigure}{0.56\linewidth}
        \includegraphics[width=\linewidth]{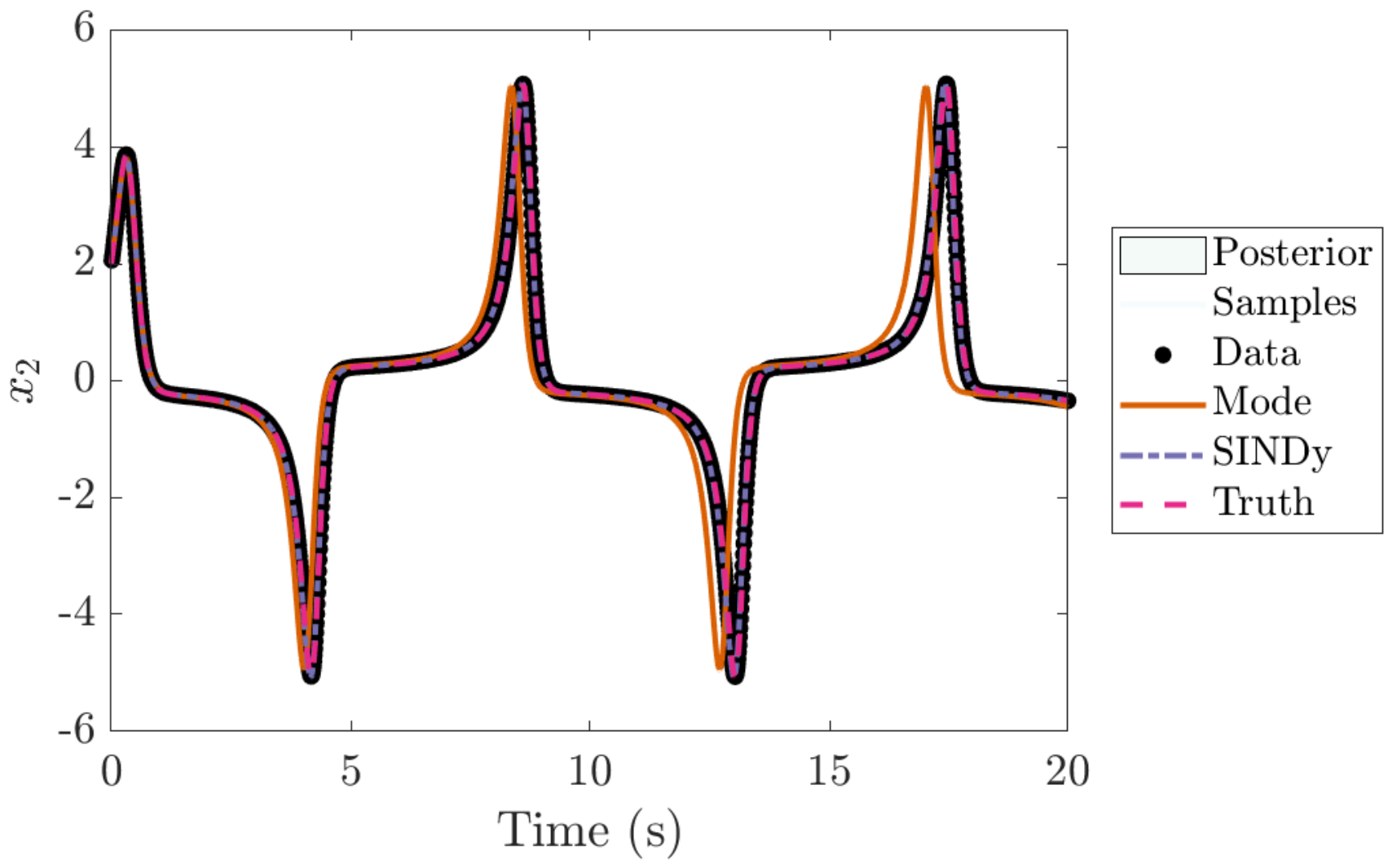}
        \caption{Reconstruction of $x_2$}
        \label{fig:vanderpolReconLowx2}
    \end{subfigure}
    \begin{subfigure}{.43\linewidth}
        \includegraphics[width=\linewidth]{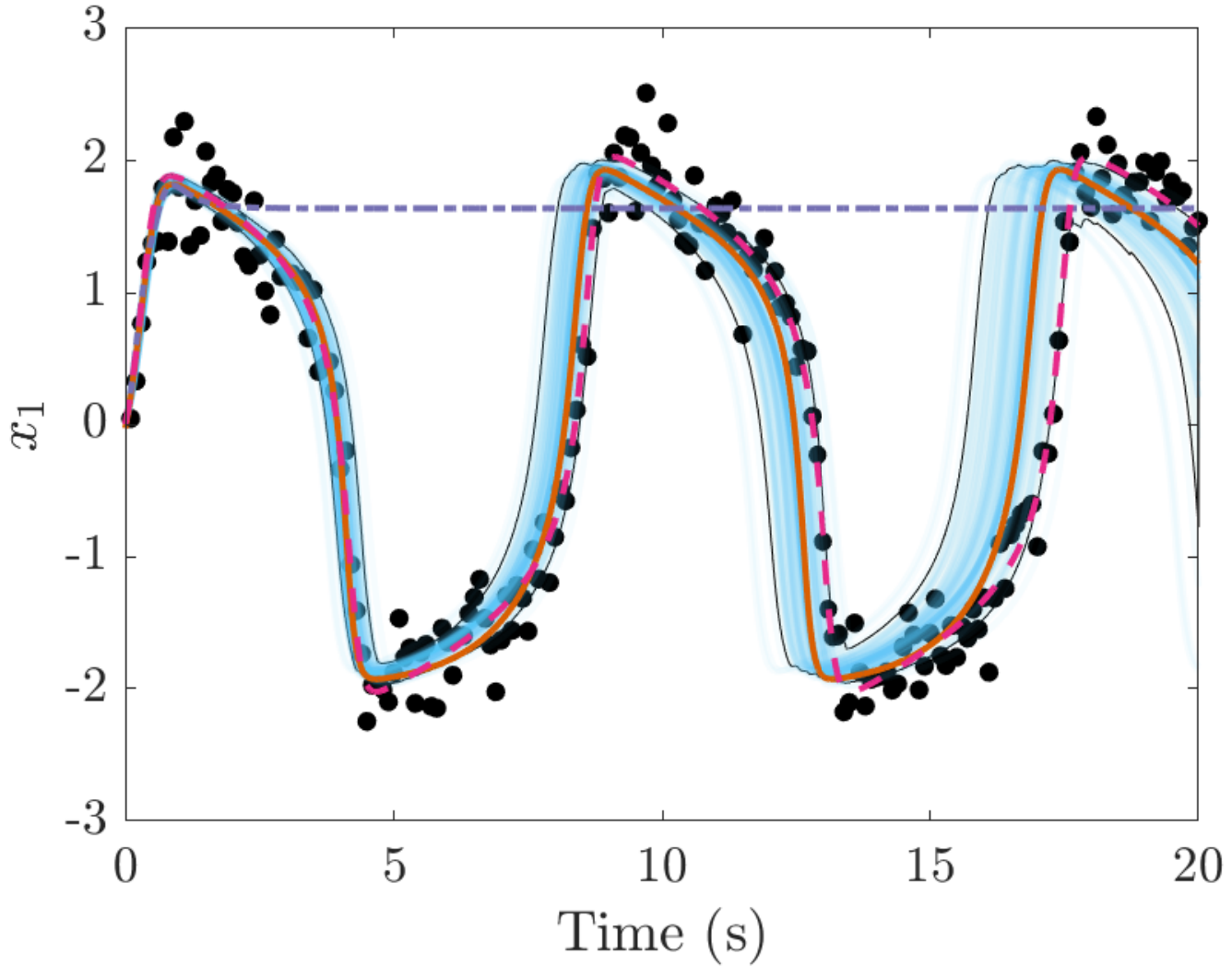}
        \caption{Reconstruction of $x_1$}
        \label{fig:vanderpolReconHighx1}
    \end{subfigure}
    \begin{subfigure}{.56\linewidth}
        \includegraphics[width=\linewidth]{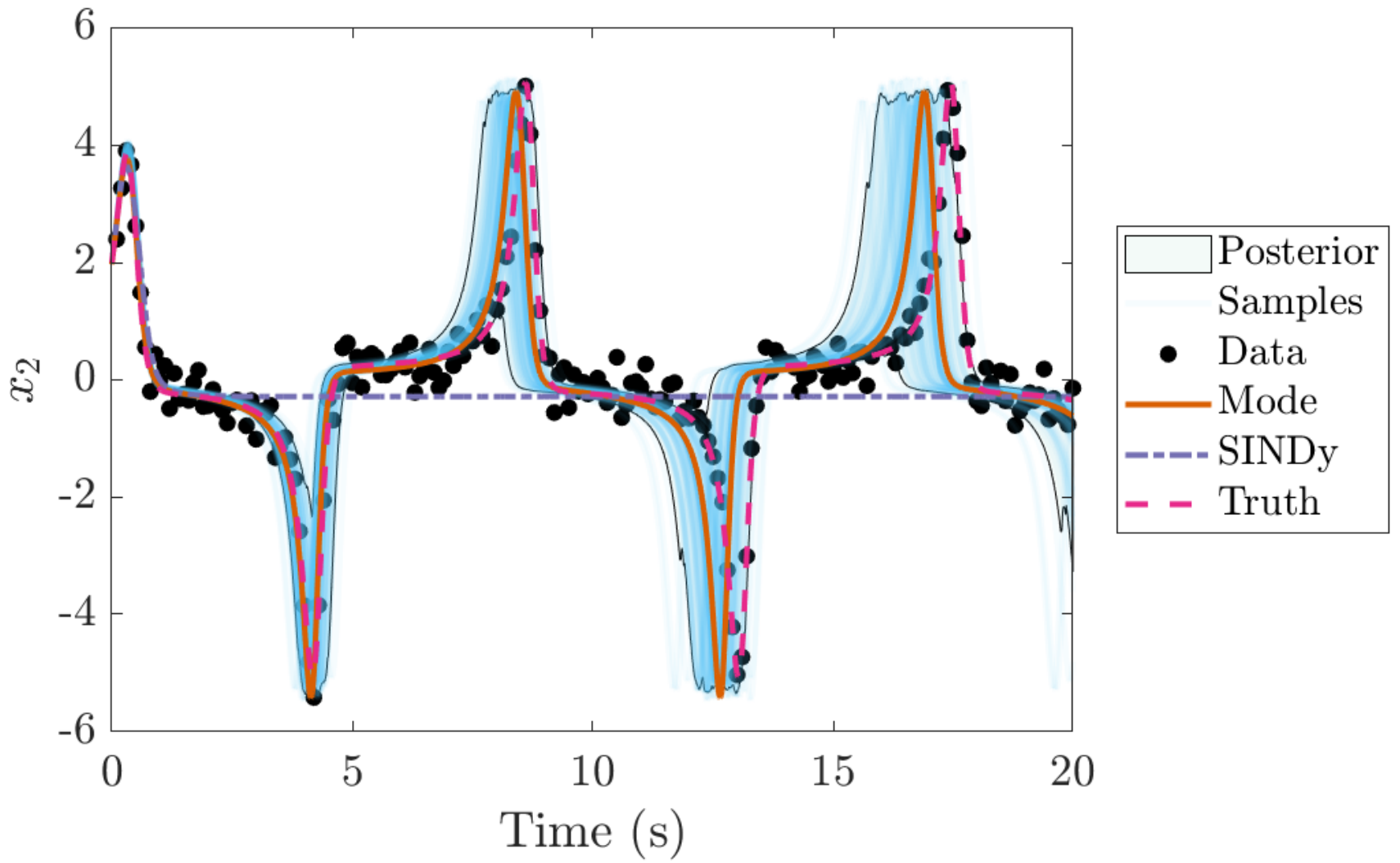}
        \caption{Reconstruction of $x_2$}
        \label{fig:vanderpolReconHighx2}
    \end{subfigure}
    \caption{Comparison of reconstruction error amongst the Bayesian and SINDy algorithms for different levels of noise and measurements for the Van der Pol system.  Top row corresponds to a low-noise/dense-data case, and the bottom row corresponds to a high-noise/sparse-data case.  Left column corresponds to the first state (position), and right column corresponds to the second state (velocity).  The Bayesian estimator is able to accurately reconstruct the dynamics, even in the presence of high noise.}
    \label{fig:recon_vanderpol}
\end{figure}

\begin{figure}
    \centering
    \begin{subfigure}{0.43\linewidth}
        \includegraphics[width=\linewidth]{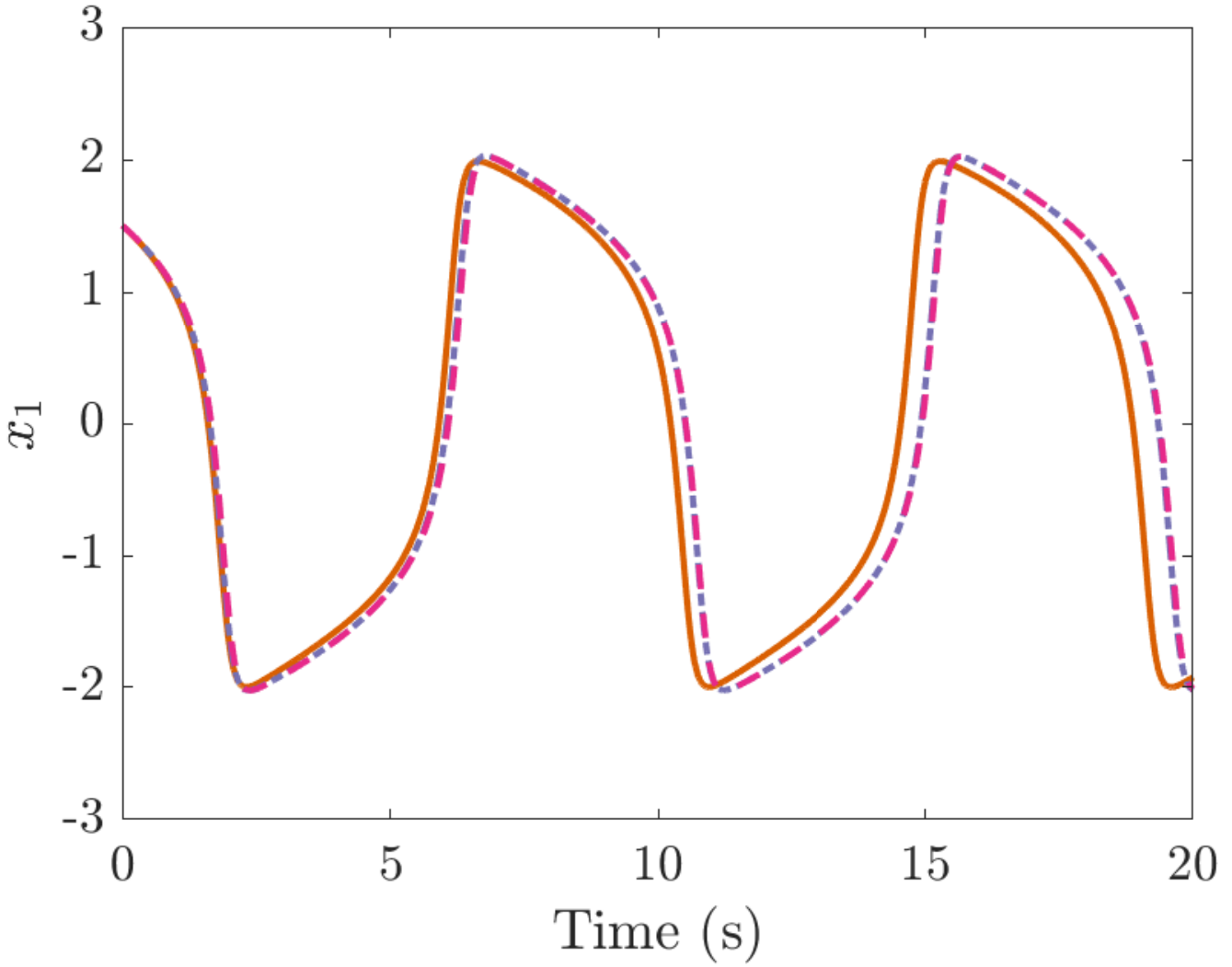}
        \caption{Prediction of $x_1$}
        \label{fig:vanderpolPredictLowx1}
    \end{subfigure}
    \begin{subfigure}{0.56\linewidth}
        \includegraphics[width=\linewidth]{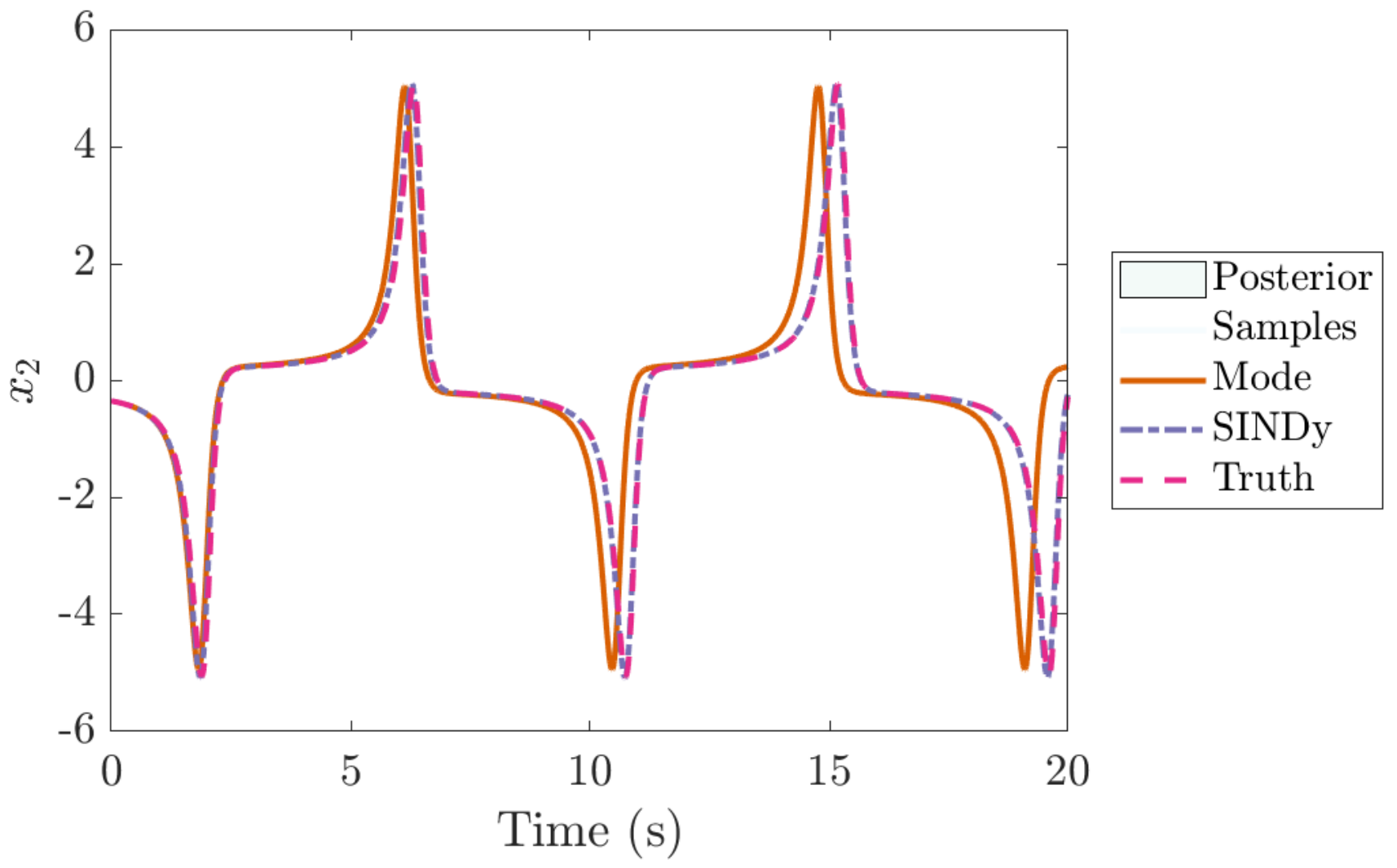}
        \caption{Prediction of $x_2$}
        \label{fig:vanderpolPredictLowx2}
    \end{subfigure}
    \begin{subfigure}{0.43\linewidth}
        \includegraphics[width=\linewidth]{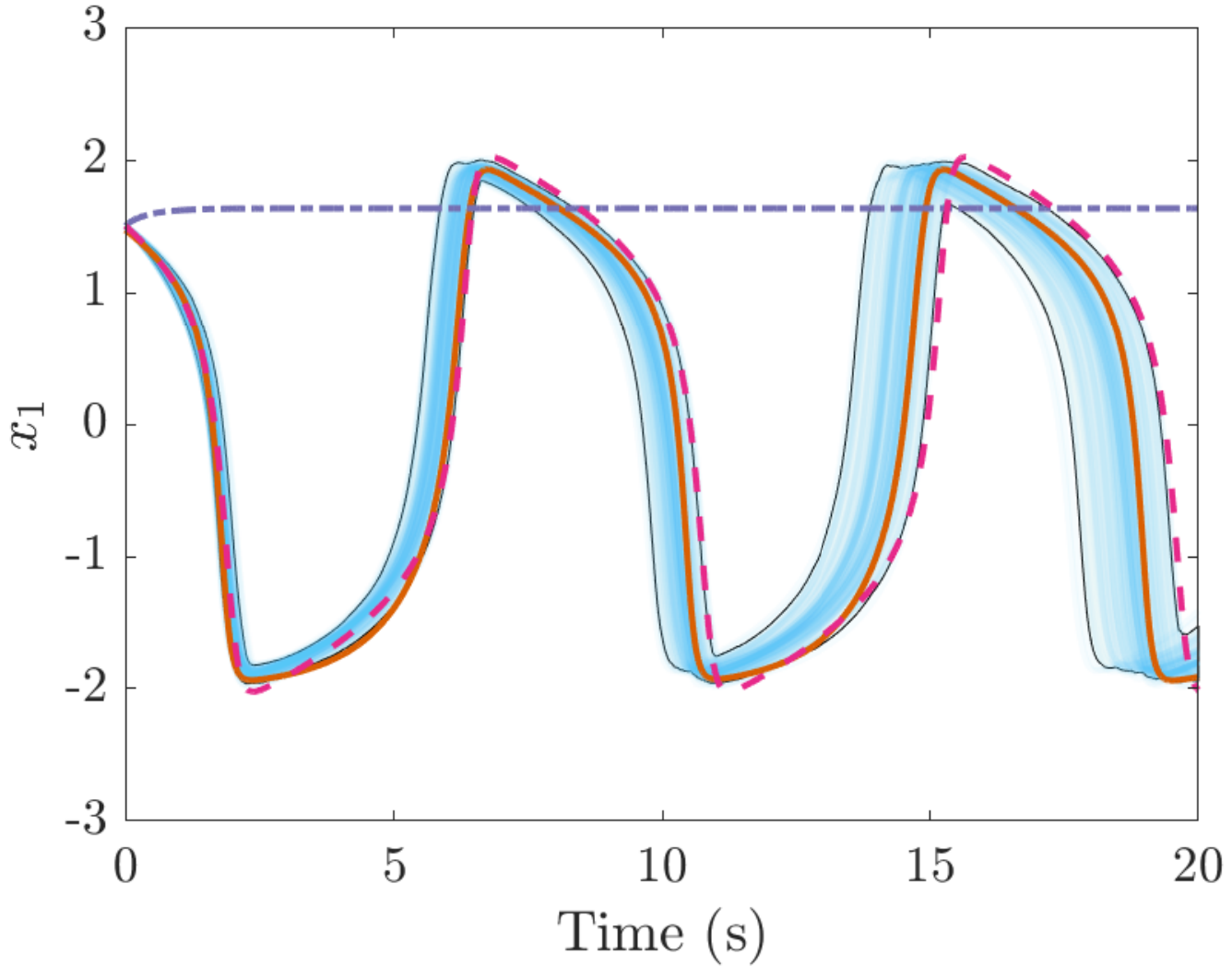}
        \caption{Prediction of $x_1$}
        \label{fig:vanderpolPredictHighx1}
    \end{subfigure}
    \begin{subfigure}{0.56\linewidth}
        \includegraphics[width=\linewidth]{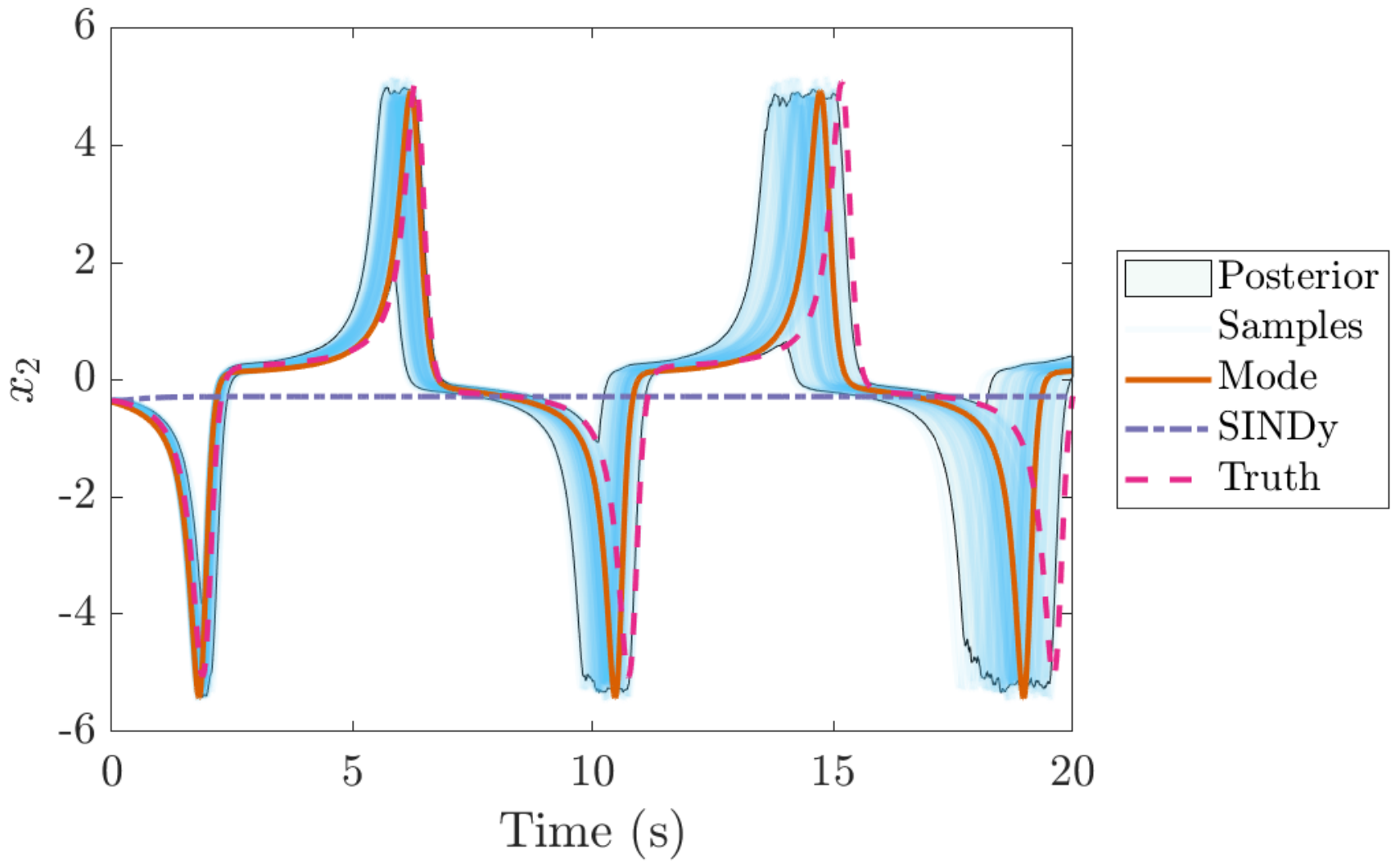}
        \caption{Prediction of $x_2$}
        \label{fig:vanderpolPredictHighx2}
    \end{subfigure}
    \caption{Comparison of prediction error amongst the Bayesian and SINDy algorithms for different levels of noise and measurements for the Van der Pol system.  The meaning of the figures is the same as described in Figure~\ref{fig:recon_vanderpol}.  The model learned by the Bayesian estimator is still accurate at a different initial condition.}
    \label{fig:predict_vanderpol}
\end{figure}

\begin{figure}
    \centering
    \begin{subfigure}{0.45\linewidth}
    \centering
    \includegraphics[width=\linewidth]{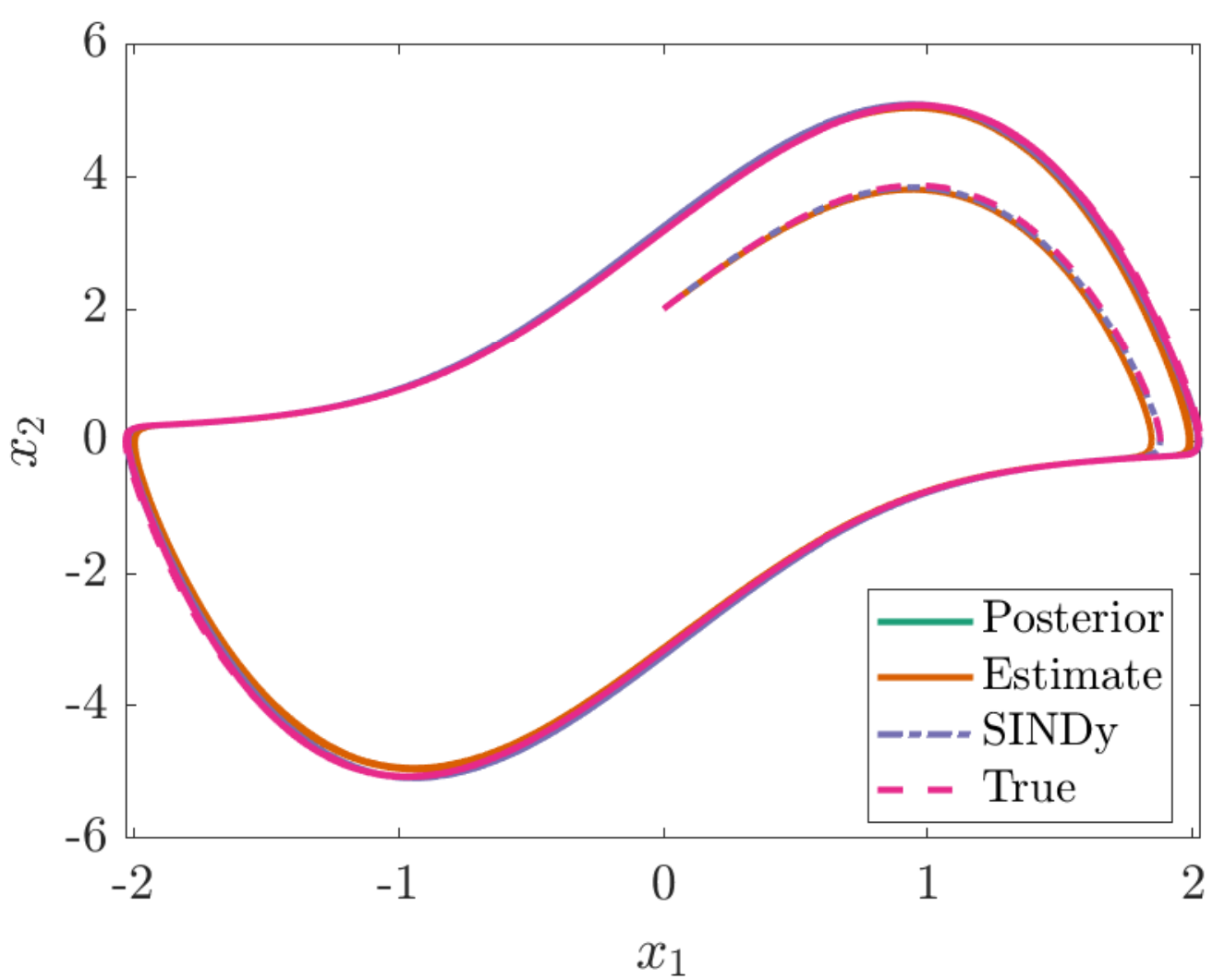}
    \caption{$\sigma=10^{-3}$, $n=2000$}
    \label{fig:vanderpolPhaseLow}
    \end{subfigure}
    \begin{subfigure}{0.45\linewidth}
        \centering
        \includegraphics[width=\linewidth]{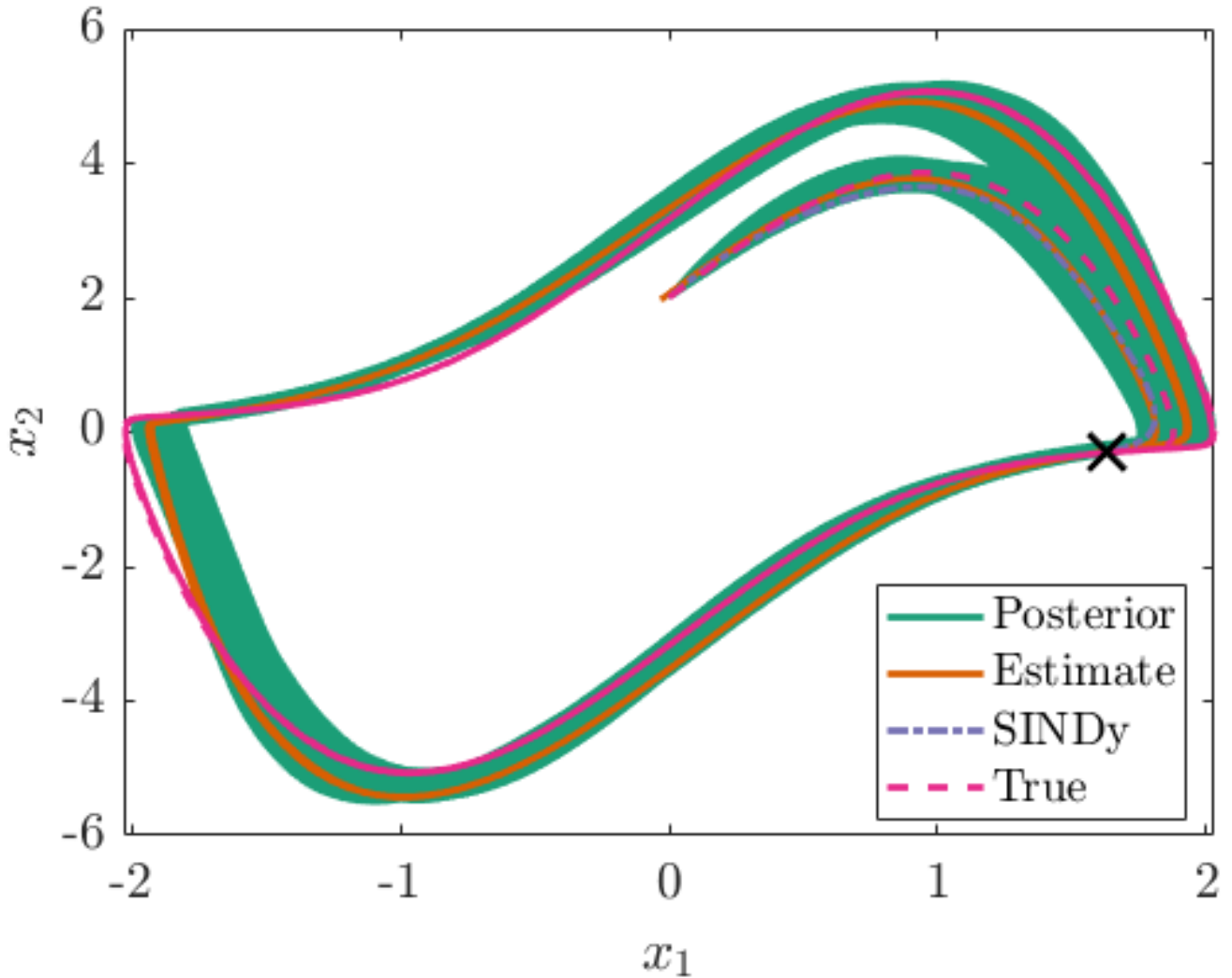}
        \caption{$\sigma=2.5\times10^{-1}$, $n=200$}
        \label{fig:vanderpolPhaseHigh} 
    \end{subfigure}
    \caption{Phase-diagram reconstruction for the Van der Pol oscillator under the two indicated data conditions. In the low-noise and frequent data domain, both the Bayesian and SINDy estimates lie directly on the truth. In the high noise-case, the Bayesian posterior is wider, but is still visually aligned with the truth.  The SINDy estimate is unable to recover the limit cycle, and the large ``x'' marks the equilibrium point to which SINDy converges, as shown in Figure~\ref{fig:predict_vanderpol}.}
    \label{fig:phase_vanderpol}
\end{figure}

\subsection{Known model form}

Finally, we consider the case where the model form is known, for instance from physical laws, but the parameters are uncertain. This is the classical inference setting and has seen a lot of development~\cite{Marzouk2007, Marzouk2009, Brastein_2018, Dokos_2004, Kivman_2003}, including in the computational physics community. However, much of this literature either only considers deterministic dynamics according to some variation of Equation~\eqref{eq:detobj} or only static problems. In this section, we consider both a chaotic system and a reaction-diffusion PDE in which we impose process noise to aid in the parameter estimation. For the reaction-diffusion PDE, this implies that the process noise is added to the discretized dynamics. Our results suggest that these methods are applicable to spatial problems and are able to effectively learn chaotic dynamics with a much smaller amount of data than observed in the literature.

\subsubsection{Lorenz 63}
We first consider the chaotic Lorenz 63 system~\cite{Lorenz_1963}
\begin{align}
    \begin{bmatrix}\dot{x}_1 \\ \dot{x}_2 \\ \dot{x}_3 \end{bmatrix} = 
    \begin{bmatrix} \theta_1(x_2-x_1) \\ x_1(\theta_2-x_3)-x_2 \\ x_1x_2-\theta_3 x_3 \end{bmatrix}, \qquad 
    x_0=\begin{bmatrix}2.0181 \\ 3.5065 \\ 11.8044\end{bmatrix}
\end{align}
The initial condition of this system was chosen so as to sit on the attractor. We attempt only to learn the parameters $\thetdyn = (\theta_1, \theta_2, \theta_3)$.  The difficulty with learning in chaotic systems is that the computation of the likelihood can be challenging. Since the likelihood involves running a filter, and filtering chaotic systems is well known to be challenging, it may seem that our approach would breakdown. Here we show that our Gaussian filtering approach is still able to learn an approximate dynamical system without resorting to more complicated likelihood building processes, e.g., using correlation integrals~\cite{Haario2015,Springer2019}. 

The priors on the dynamics parameters are once again improper and uniform. In addition to learning the model parameters in this example, we also learn the process noise variance for each state and the measurement noise variance for a total of seven parameters.  The parameterizations of the covariance matrices are shown:
\begin{align}
    \begin{array}{lr}\Sigma(\thetsig) = \begin{bmatrix}\theta_4 & 0 & 0 \\ 0 & \theta_5 & 0 \\ 0 & 0 & \theta_6\end{bmatrix}, & \Gamma(\thetgam) = \theta_7I_{3\times3}, \end{array}
\end{align}
with half-normal priors as before.

One hundred data points uniformly spaced over ten seconds are collected with a true measurement noise standard deviation of 2.0.  The predicted state trajectories after 10 seconds of simulation using the parameter posterior mode are shown in Figure~\ref{fig:LorenzStates}.  Similar to the Van der Pol oscillator, the dynamics exist on a low-dimensional attractor in phase space, and the wide, but constant, posterior distribution once again reflects this fact. Figure~\ref{fig:LorenzAttractors} shows the reconstructed and predicted attractors from the Bayesian algorithm. These figures show that while we cannot accurately capture the state, indeed all methods would eventually break down due to the chaotic nature of the system, we do predict a qualitatively similar attractor. As such, one would expect that most post-processing of these attractors, e.g., for control, would yield similar results.

\begin{figure}
    \centering
    \begin{subfigure}{0.3\linewidth}
        \includegraphics[width=\linewidth]{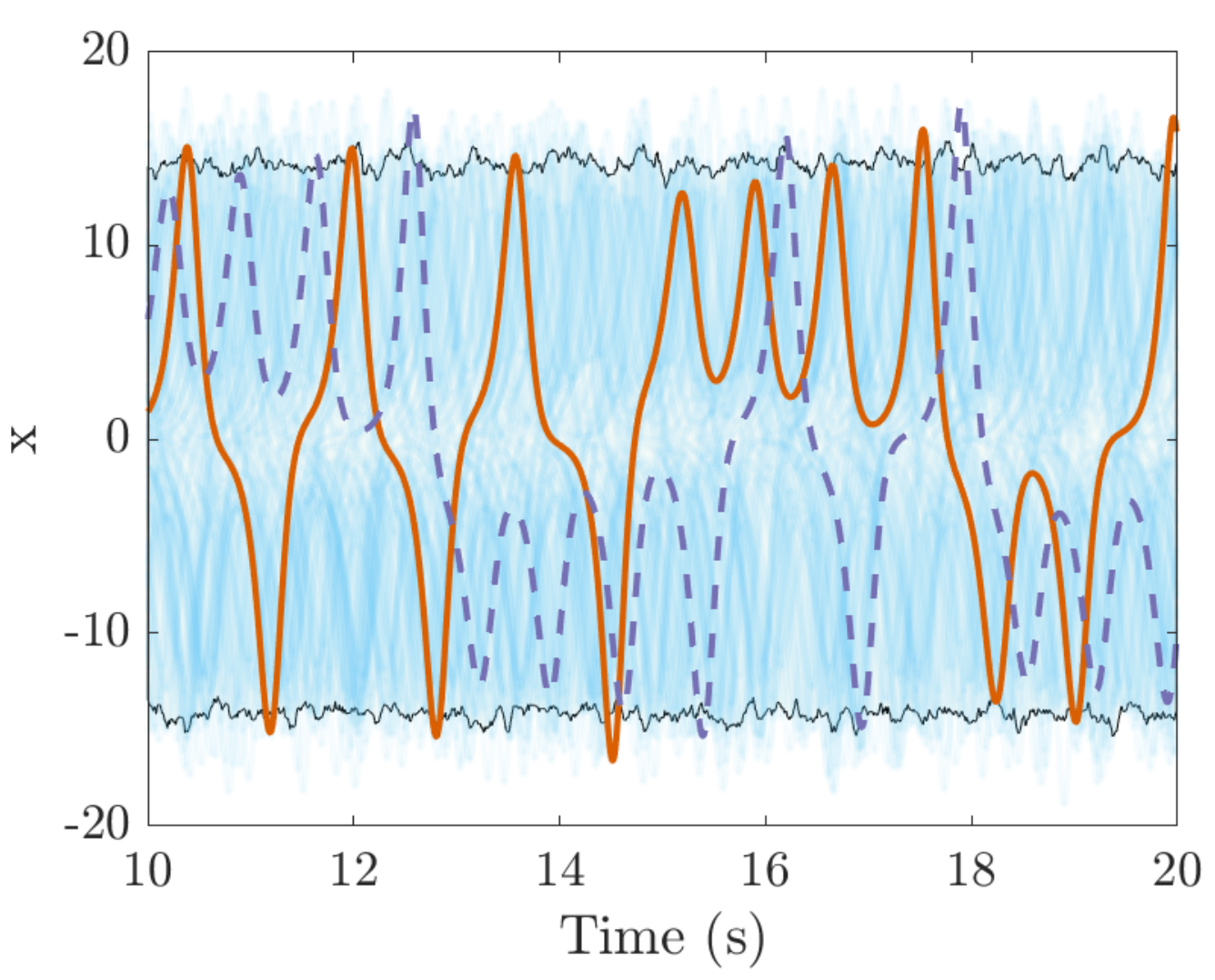}
        \caption{Prediction of $x$}
        \label{fig:LorenzPredictx}
    \end{subfigure}
    \begin{subfigure}{0.3\linewidth}
        \includegraphics[width=\linewidth]{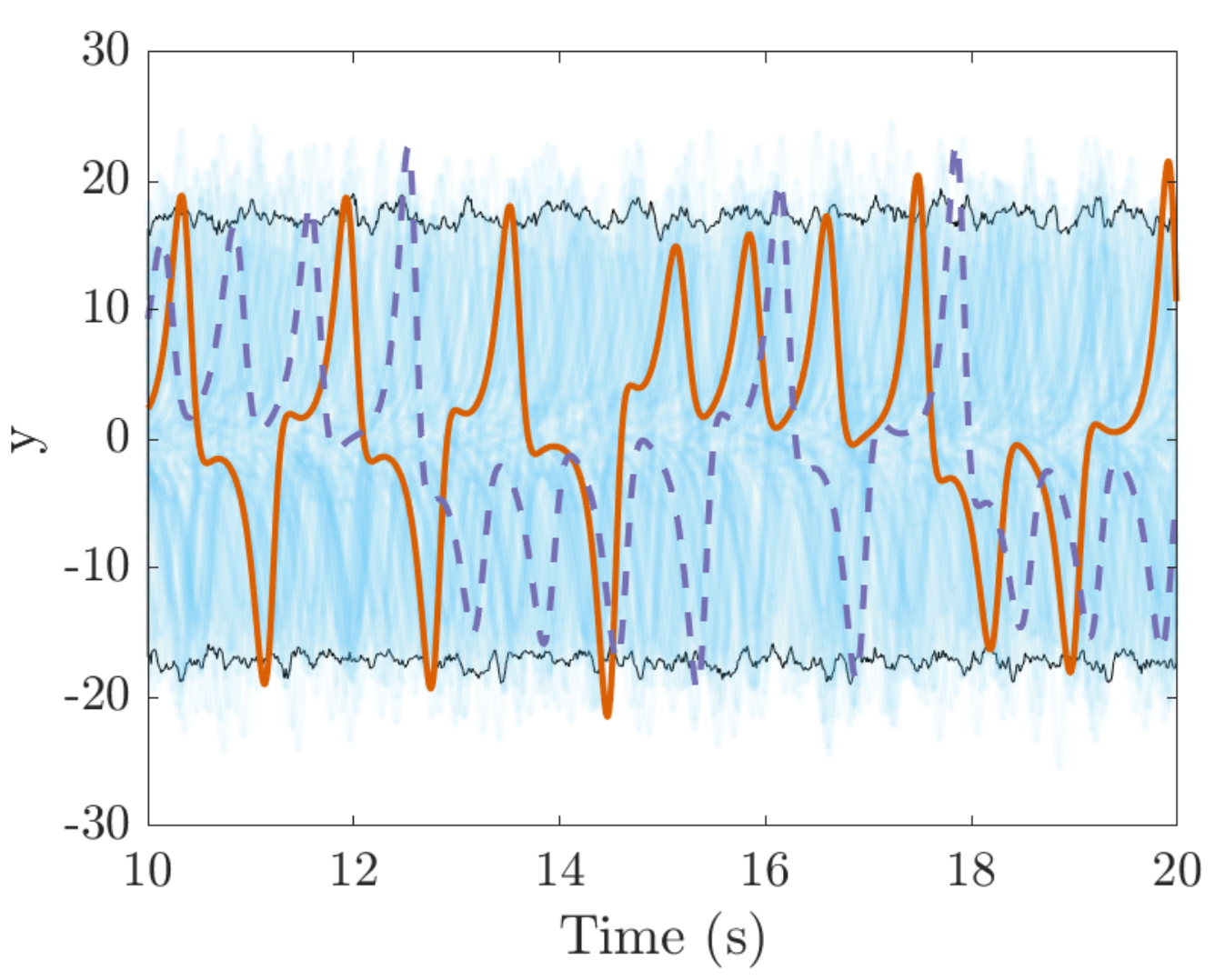}
        \caption{Prediction of $y$}
        \label{fig:LorenzPredicty}
    \end{subfigure}
    \begin{subfigure}{0.38\linewidth}
        \includegraphics[width=\linewidth]{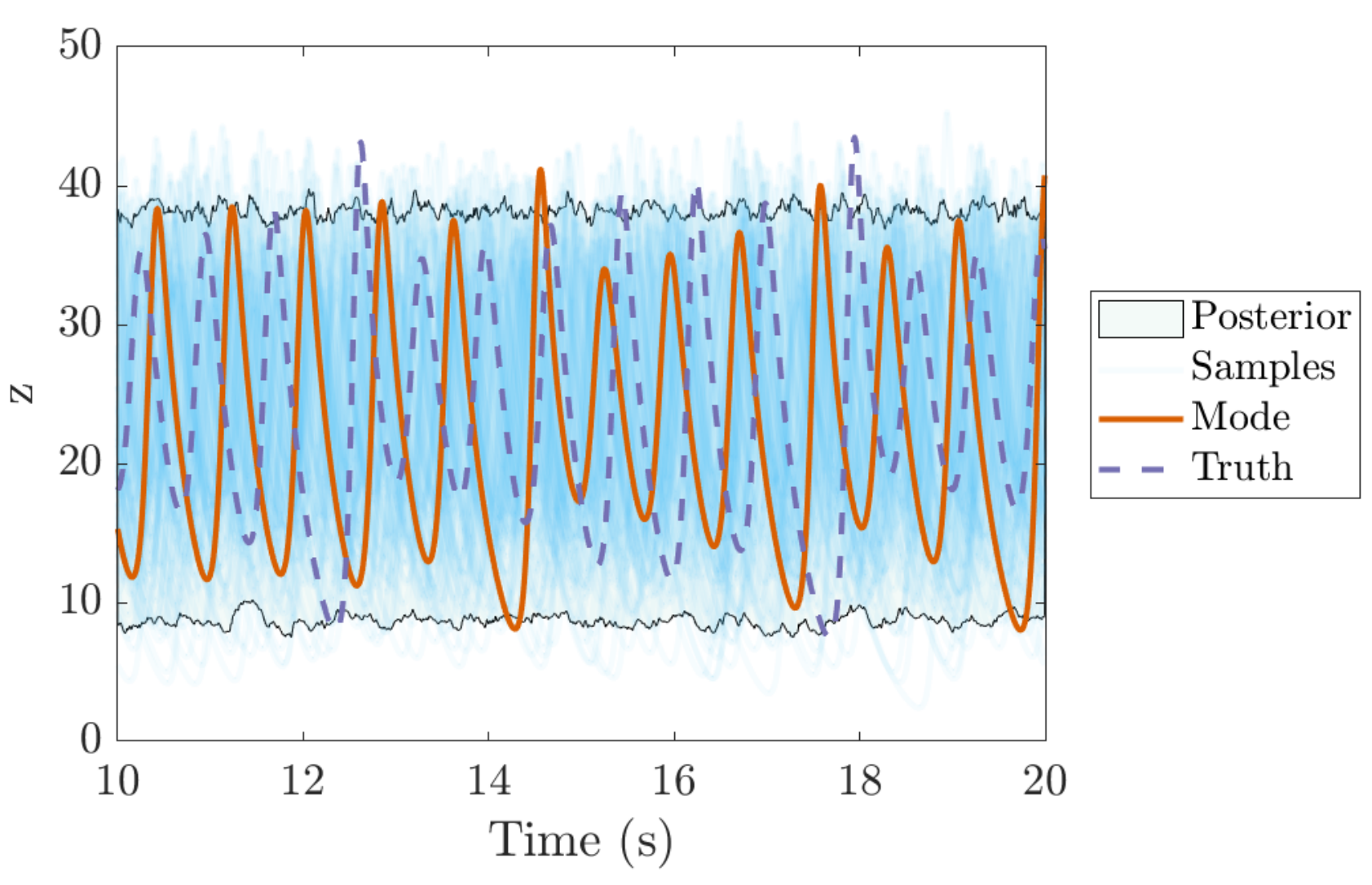}
        \caption{Prediction of $z$}
        \label{fig:LorenzPredictz}
    \end{subfigure}
    \caption{Lorenz '63 prediction posteriors. Measurements were taken every 0.1 seconds for 10 seconds with noise standard deviation of 2.0.  Although the trajectories become misaligned rather quickly due to the chaotic nature of the system, the posterior phase diagram~\ref{fig:LorenzAttractors} reveals that the algorithm has discovered that the dynamics exist on a low-dimensional attractor.}
    \label{fig:LorenzStates}
\end{figure}

\begin{figure}
    \centering
    \begin{subfigure}{0.49\linewidth}
        \includegraphics[width=\linewidth]{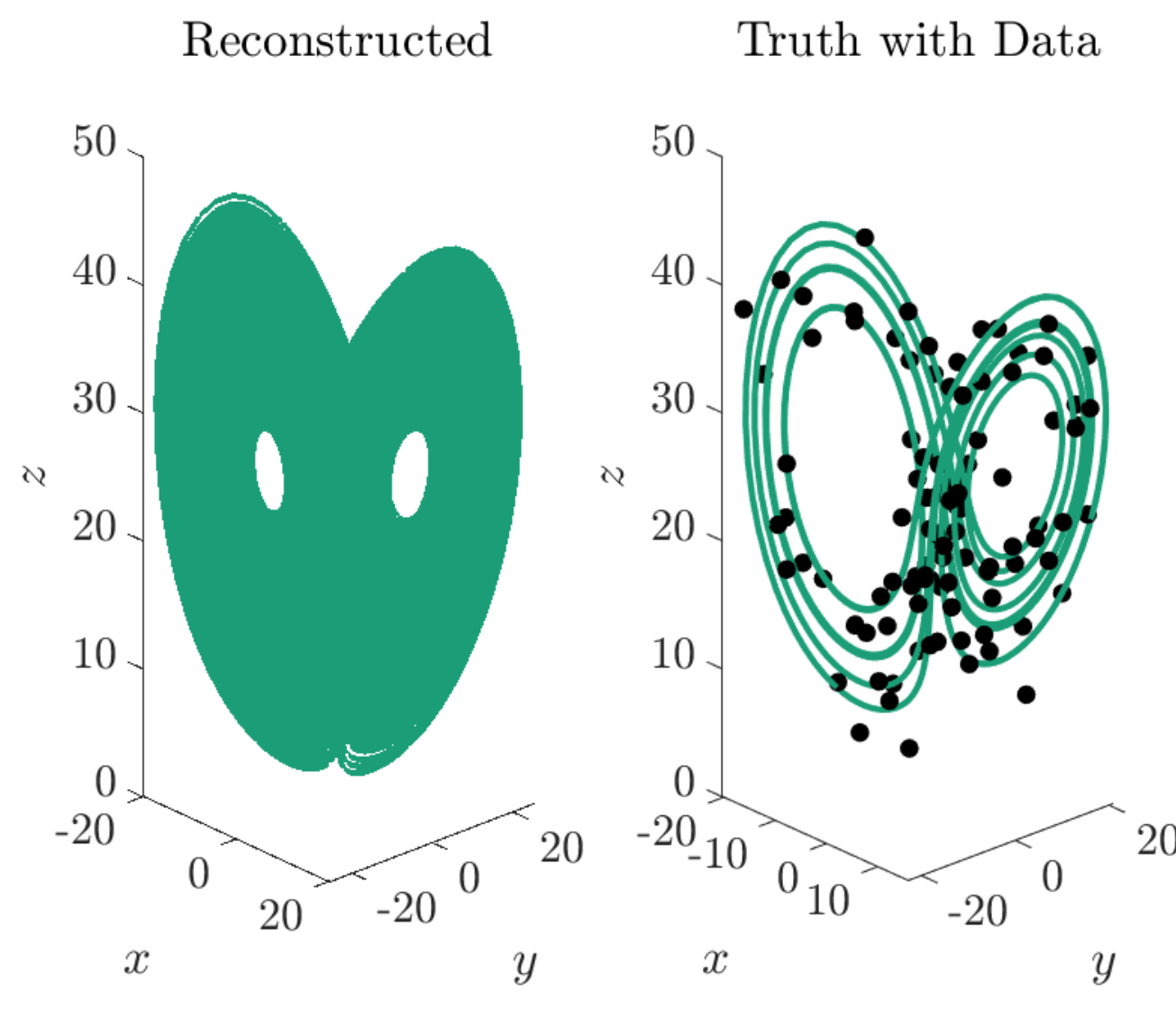}
        \caption{Reconstruction}
        \label{fig:LorenzAttractorRecon}
    \end{subfigure}
    \begin{subfigure}{0.49\linewidth}
        \includegraphics[width=\linewidth]{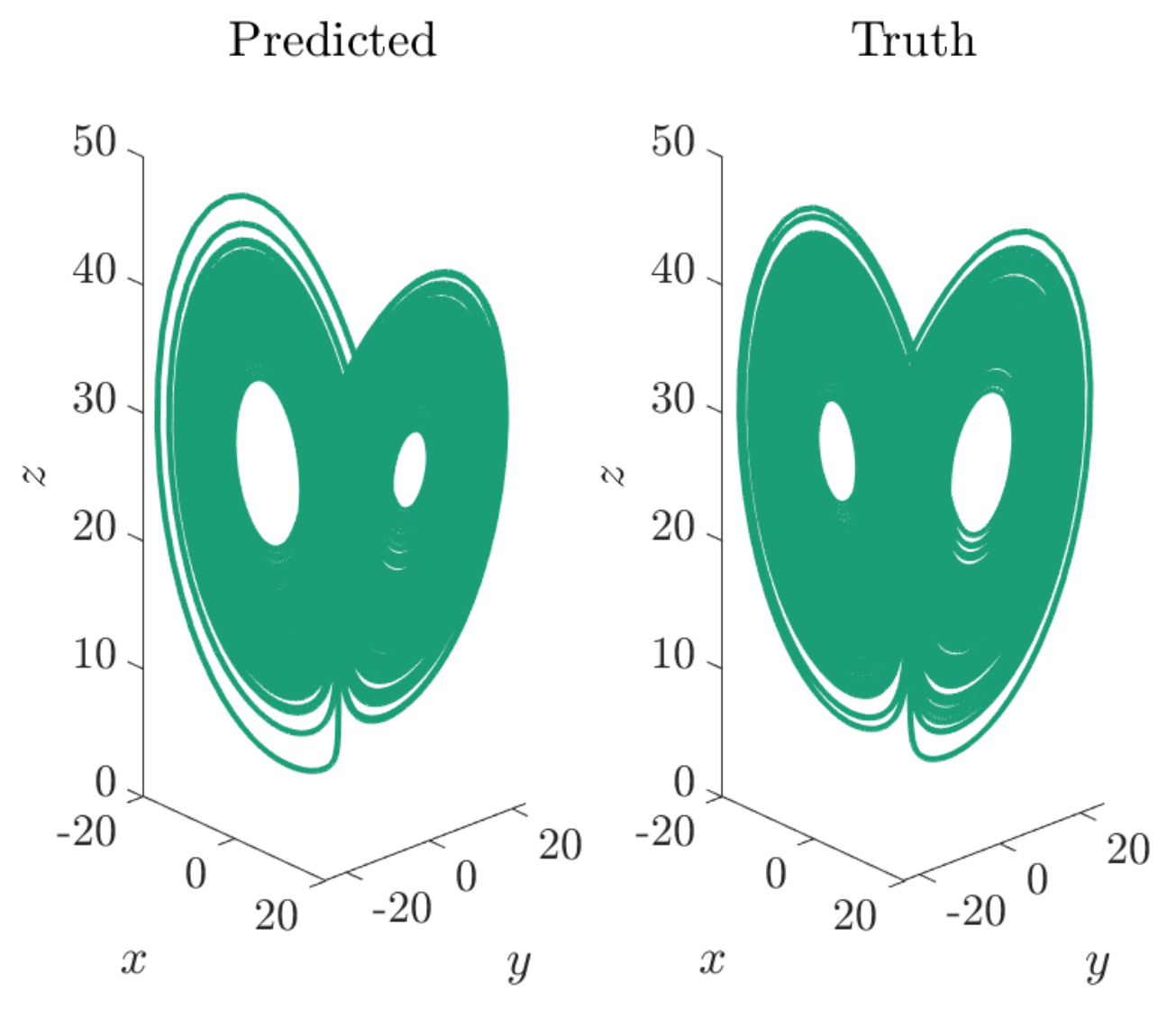}
        \caption{Prediction}
        \label{fig:LorenzAttractorPredict}
    \end{subfigure}
    \caption{Reconstruction and prediction of the Lorenz '63 attractor. The right panel compares the predicted and true trajectories up to 200 seconds using the mode of the parameter posterior distribution. The proposed approach is able to successfully discover the Lorenz attractor from sparse, noisy data.}
    \label{fig:LorenzAttractors}
\end{figure}

\subsubsection{Reaction diffusion}
\label{subsec:reacDiff}

In the final example we consider both a PDE and a case where the measurement operator $h$ is not the identity. The reaction diffusion PDE is given by 
\begin{align}
\begin{split}
    \frac{\partial C_1}{\partial t} &= \theta_1 \frac{\partial^2 C_1}{\partial x^2} + 0.1 - C_1 + \theta_3 C_1^2 C_2 \\
    \frac{\partial C_2}{\partial t} &= \theta_2 \frac{\partial^2 C_2}{\partial x^2}C_2 + 0.9 - C_1^2C_2
\end{split}
\end{align}
where $C_1,C_2$ specify the concentrations. A one dimensional spatial grid was selected to have regular intervals of 0.4 units between boundaries of -40 and 40 for a total of 201 grid points for each of the two states.  The boundary conditions at $x=\pm 40$ are 
\begin{align}
    \frac{\partial C_1}{\partial x} = \frac{\partial C_2}{\partial x} = 0,
\end{align}
and the initial condition of the system was drawn from a uniform distribution as shown
\begin{align}
    (C_i)_j\sim\mathcal{U}(0.4, 0.6), \quad \text{for }t=0;\;\forall i = 1,2;\; \forall j = 1,...,201.
\end{align}

Similar to the Lorenz example, for this system we attempt to learn only the model parameters, $\theta_1$, $\theta_2$, and $\theta_3$ rather than the complete model.  The measurement covariance matrix is assumed to be known, and the process noise covariance is fixed to be 1e-8 such that the total number of parameters that we are learning remains only three. The observation operator indirectly measures the concentration through only the first two moments of the concentration of the first species at certain time intervals
\begin{align}
\begin{split}
    y_1(t) &= \int_{-40}^{40}C_1(t)\,dx \\
    y_2(t) &= \int_{-40}^{40}C_1^2(t)\,dx.
\end{split}
\end{align}

We collect measurements every 0.5 seconds for 15 seconds with noise standard deviation of $10^{-2}$.  The reconstructions and predictions of the moments from these data using the mode of the parameter posterior distribution are shown in Figure~\ref{fig:ReacDiffObs}.  Additionally, the true and reconstructed contours of $C_1$ and $C_2$ are shown in Figure~\ref{fig:ReacDiffContour}.  The Bayesian estimate shows close agreement with the truth.

\begin{figure}
    \centering
    \begin{subfigure}{.42\linewidth}
        \includegraphics[height = 43mm]{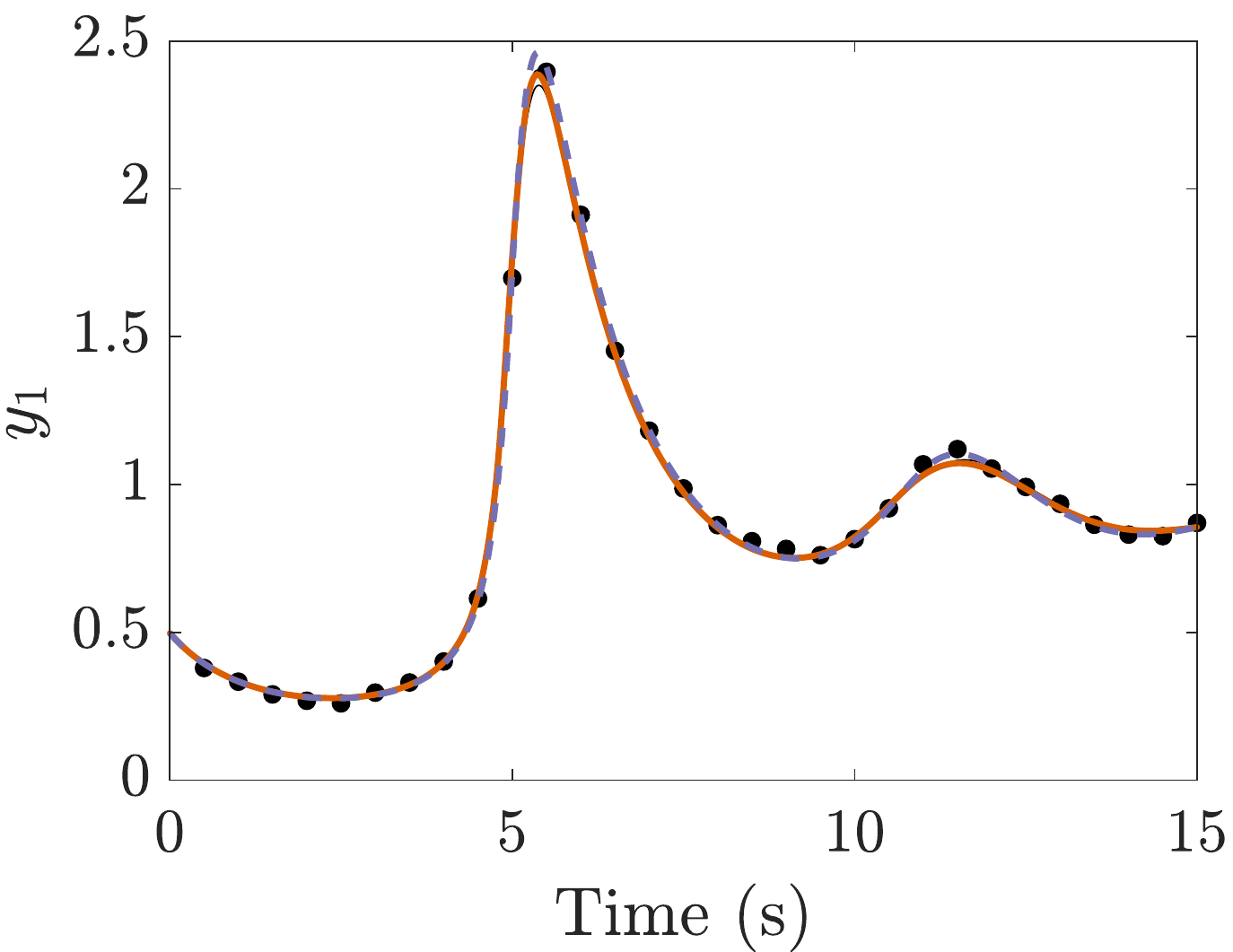}
        \caption{$y_1$, $n=30$, $\sigma=10^{-2}$}
        \label{fig:rdRecony1}
    \end{subfigure}
    \begin{subfigure}{.57\linewidth}
        \includegraphics[height=43mm]{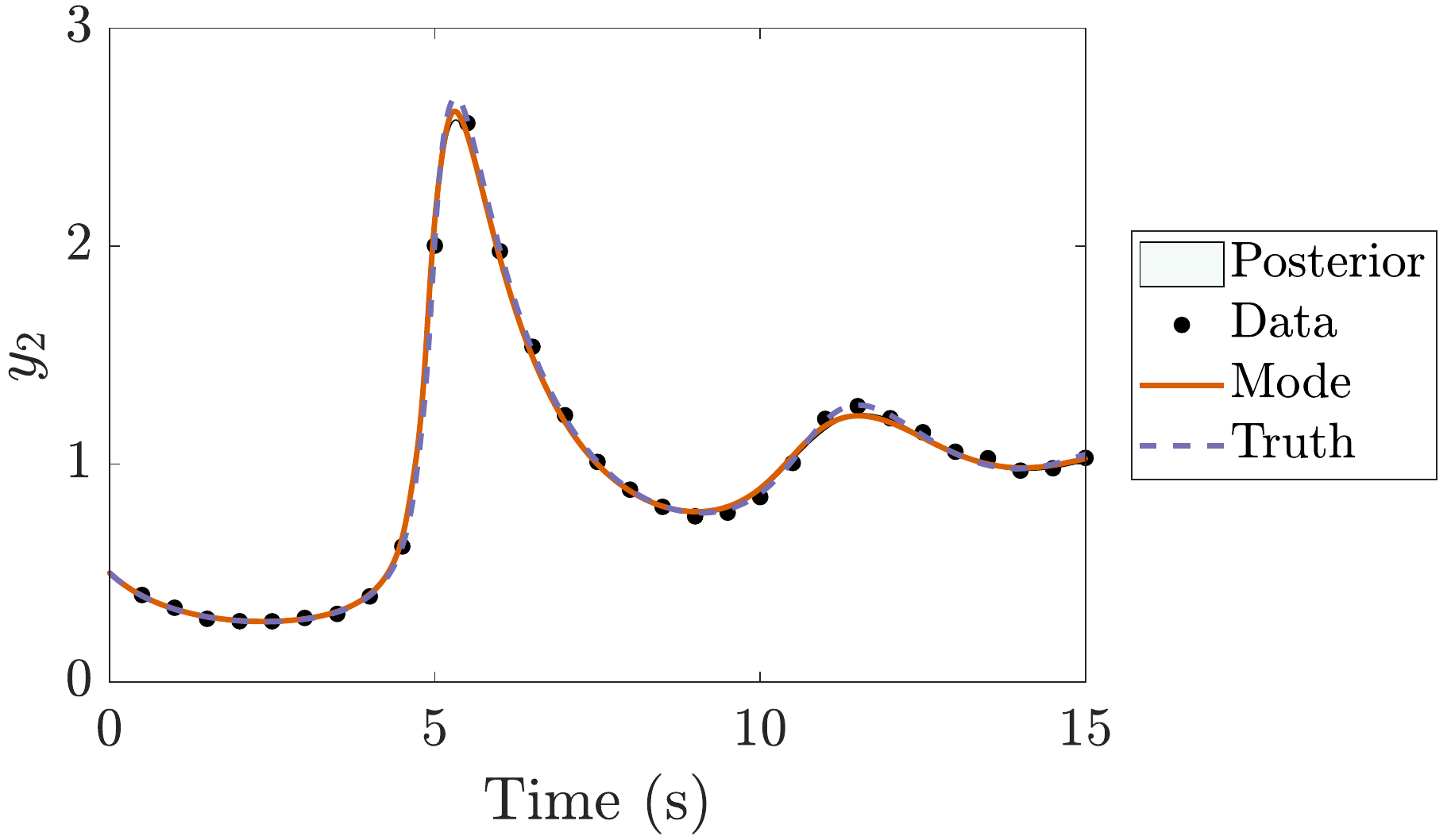}
        \caption{$y_2$, $n=30$, $\sigma=10^{-2}$}
        \label{fig:rdRecony2}
    \end{subfigure}
    \begin{subfigure}{.42\linewidth}
        \includegraphics[height=43mm]{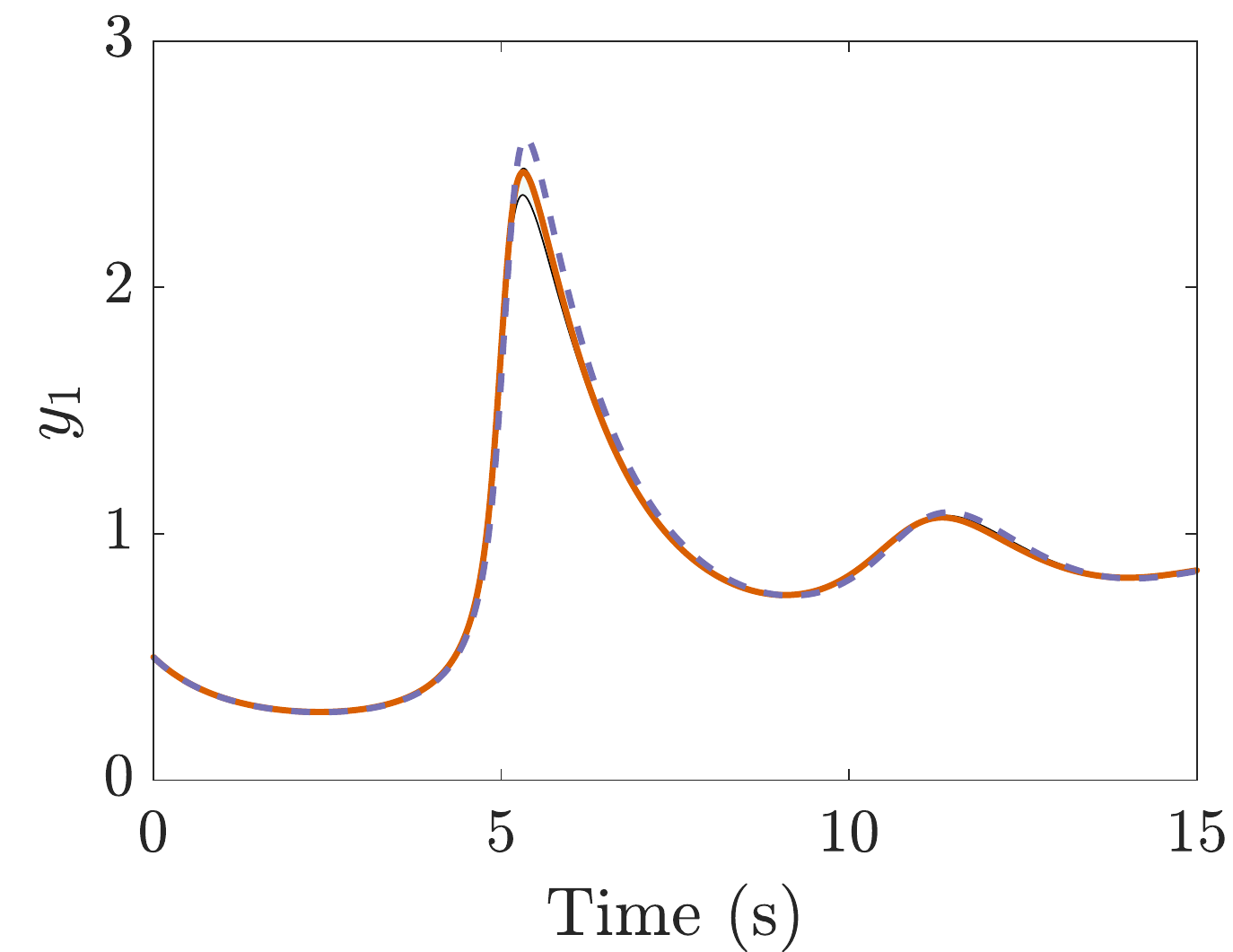}
        \caption{$y_1$ Prediction}
        \label{fig:rdRecony1_altIC}
    \end{subfigure}
    \begin{subfigure}{.57\linewidth}
        \includegraphics[height=43mm]{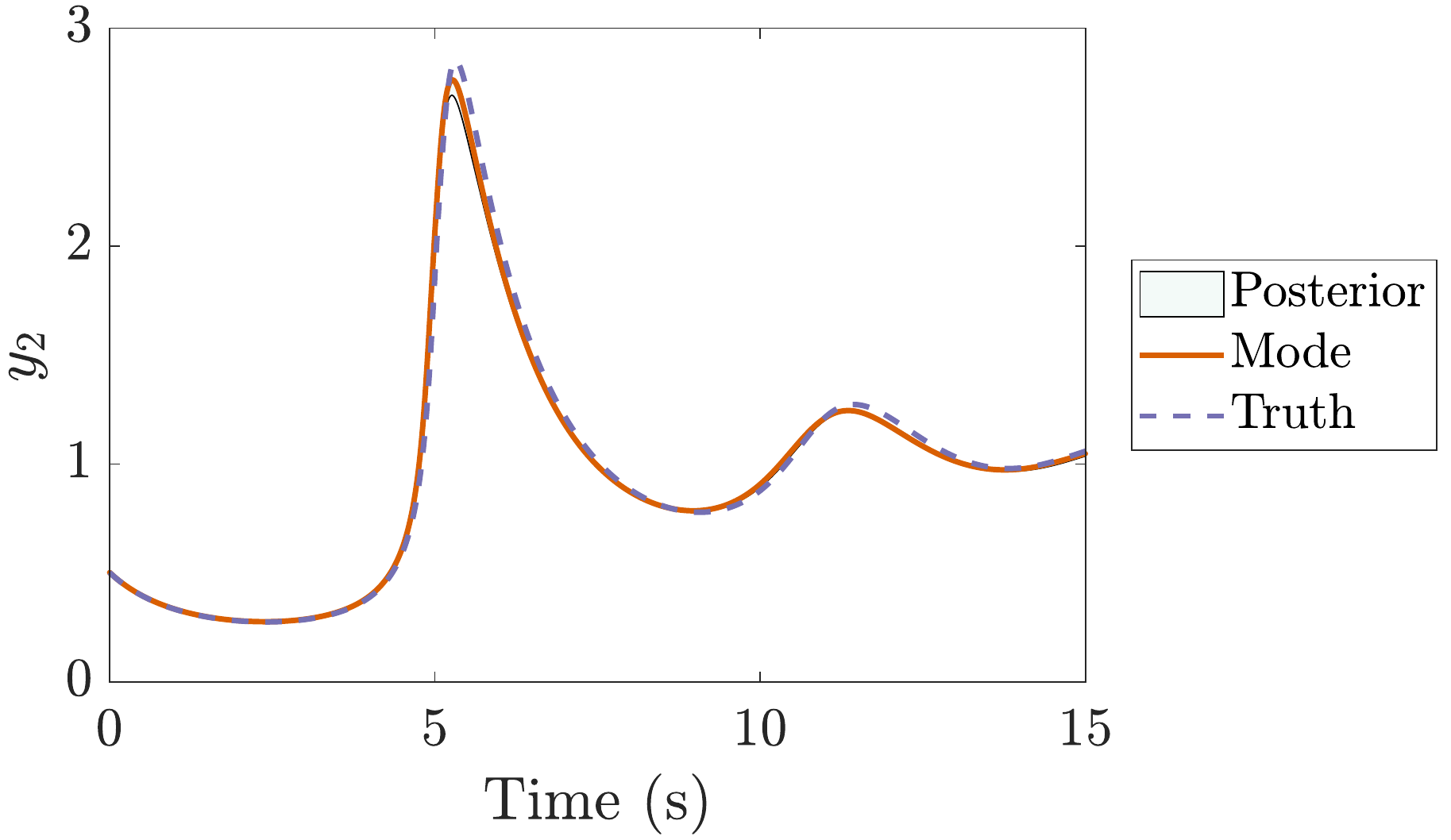}
        \caption{$y_2$ Prediction}
        \label{fig:rdRecony2_altIC}
    \end{subfigure}
    \caption{Reconstruction and prediction of the observables of the reaction diffusion system.  The top row shows the reconstruction, and the bottom row shows the prediction for an alternate initial condition.  The left column is the first measurement state (first moment), and the right column is the second measurement state (second moment).  The estimates are very close to the truth, demonstrating the generality of the learned model}
    \label{fig:ReacDiffObs}
\end{figure}

\begin{figure}
    \centering
    \begin{subfigure}{.49\linewidth}
        \includegraphics[width=\linewidth]{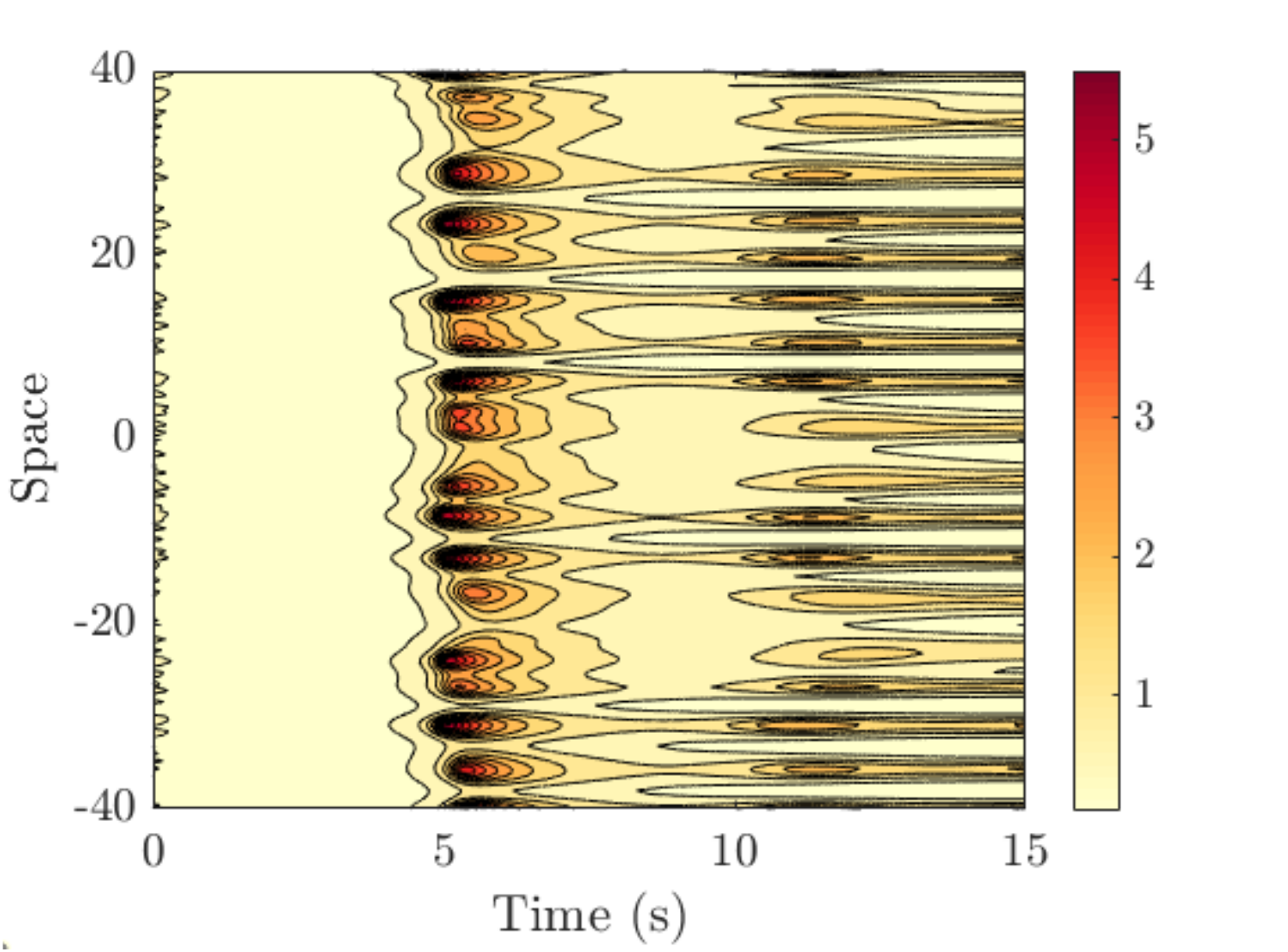}
        \caption{$C_1$ True}
        \label{fig:rdTrueContourC1}
    \end{subfigure}
    \begin{subfigure}{.49\linewidth}
        \includegraphics[width=\linewidth]{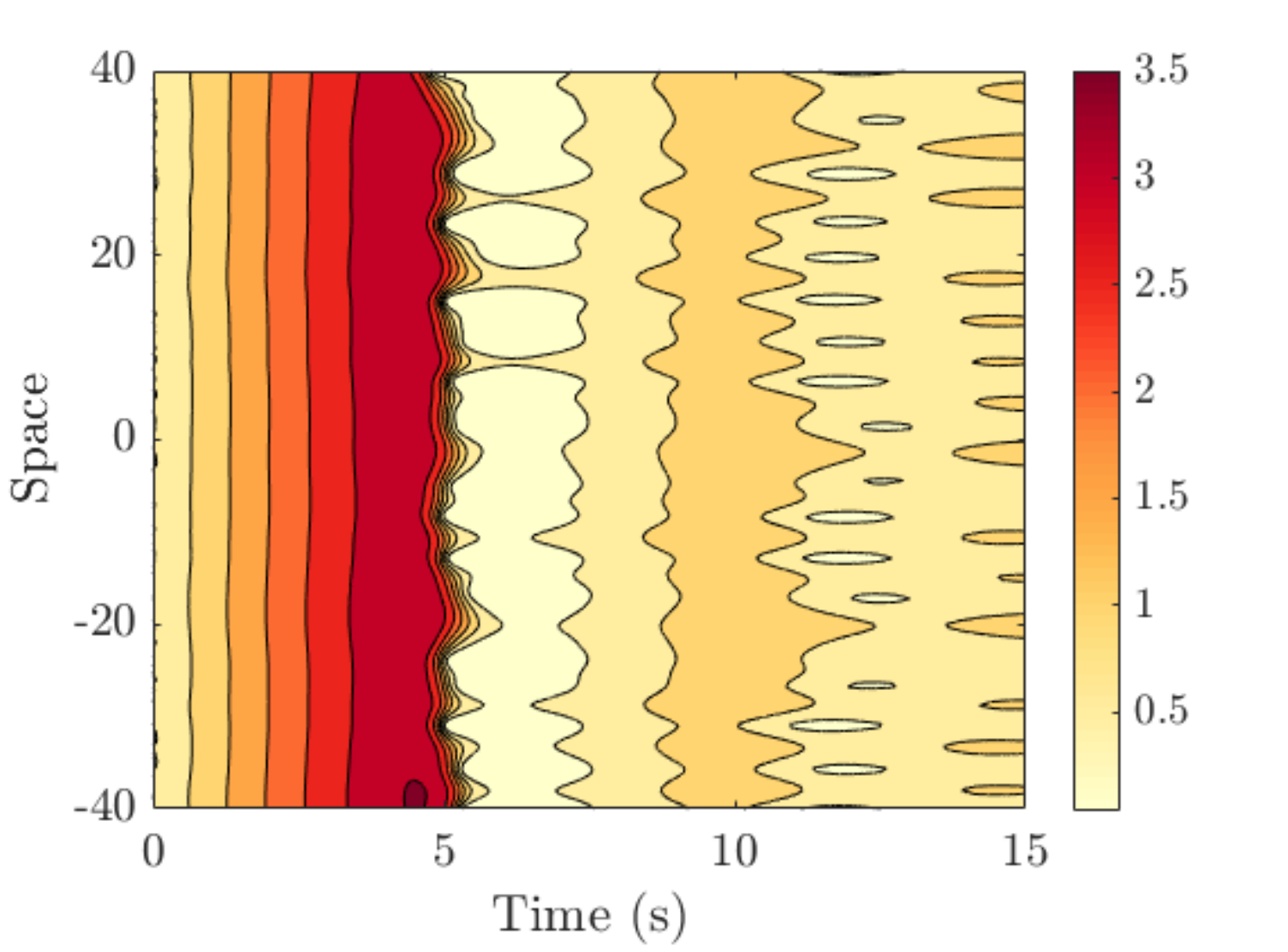}
        \caption{$C_2$ True}
        \label{fig:rdTrueContourC2}
    \end{subfigure}
    \begin{subfigure}{0.49\linewidth}
        \includegraphics[width=\linewidth]{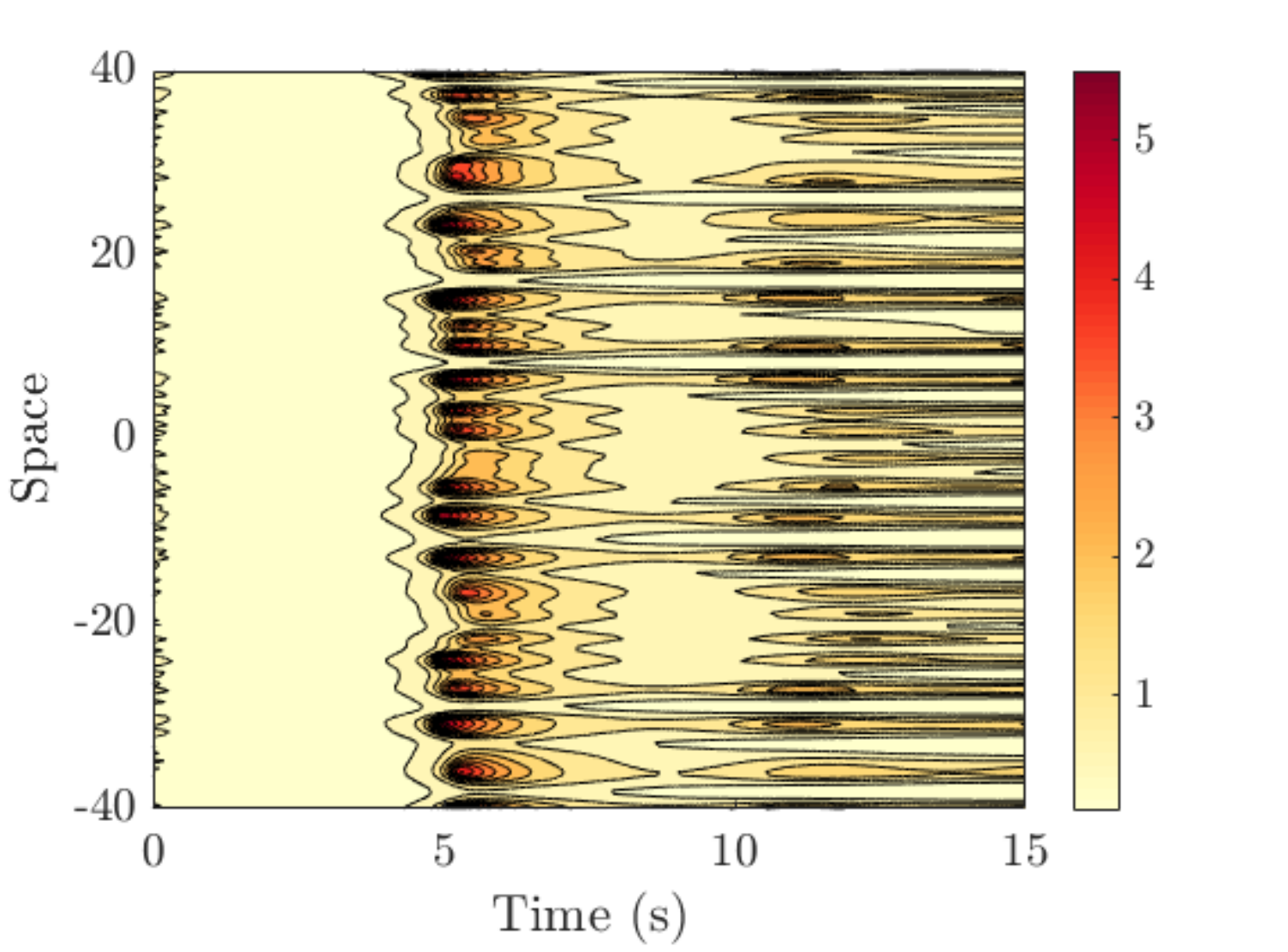}
        \caption{$C_1$ Reconstructed}
        \label{fig:rdReconContourC1}
    \end{subfigure}
    \begin{subfigure}{0.49\linewidth}
        \includegraphics[width=\linewidth]{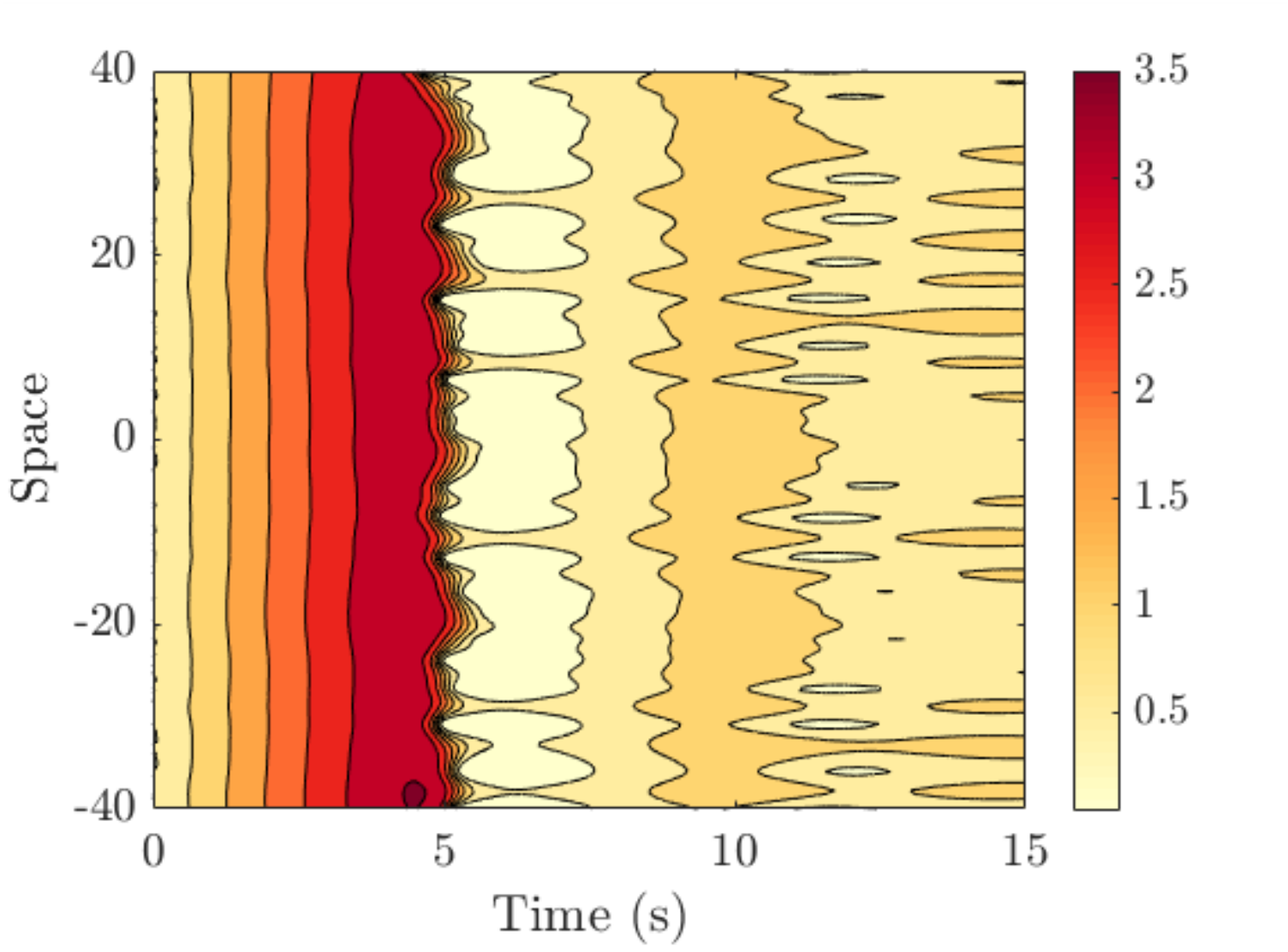}
        \caption{$C_2$ Reconstructed}
        \label{fig:rdReconContourC2}
    \end{subfigure}
    \caption{The experiment is the same as in Figure~\ref{fig:ReacDiffObs}. The top row shows the true contours of $C_1$ and $C_2$.  The bottom row shows the contours of $C_1$ and $C_2$ reconstructed using the mode of the parameter posterior distribution.  Visually, the two rows appear very similar, reflecting the strong performance of the Bayesian algorithm.}
    \label{fig:ReacDiffContour}
\end{figure}

\section{Conclusion}
In this paper, we have shown how data-driven system ID methods that consider only the measurement noise or only the process noise are impractical for many problems.  When only the measurement noise is considered, increasingly many local minima arise as data collection is continued, making identification of the optimal solution difficult.  When only process noise is considered, noisy and/or sparse measurements can cause the estimator to break down, even after incorporation of a denoising algorithm.  By deriving a probabilistic model of our dynamical system from first principles, we were able to account for how parameter, model, and measurement uncertainty can each affect the learning problem in different ways.  From this probabilistic formulation, we were then able to prove that DMD and SINDy assume noiseless measurements, and are thus poorly suited for problems where the measurement noise to process noise ratio is nonnegligible.

Next we outlined a Kalman filter and unscented Kalman filter (UKF-MCMC) MCMC algorithm for linear models and nonlinear models respectively that facilitates drawing samples from the marginal posterior without having to compute a high-dimensional integral to marginalize out the states from the joint posterior.  In the linear case, the Kalman filter algorithm targets exactly the marginal posterior, but in the nonlinear case, the UKF-MCMC targets an approximate marginal posterior.  A comparison of the computational complexity of these algorithms to that of DMD and sparse regression was then performed.  It was found that the cost of the Bayesian algorithms is roughly $n$ times more expensive than DMD and sparse regression, but for many problems, this is an acceptable cost for the enhanced performance of the Bayesian algorithms.

Lastly, the Bayesian algorithms were compared to DMD and SINDy on a number of systems for varying values of measurement noise and frequency.  It was shown that when substantial noise is introduced into the measurements, DMD and SINDy will fail due to their underlying assumption that the measurements are noiseless.  The Bayesian algorithm makes no such assumption and in addition to yielding strong performance for low-noise measurements, remains robust to noisy and infrequent data as well.  Thus it has been empirically shown that consideration of parameter, model, and measurement uncertainty leads to enhanced performance on a wider class of systems than that to which most least squares-based approaches can be reliably applied.

\section*{Acknowledgments}

This research was primarily supported by the DARPA Physics of AI Program under the grant ``Physics Inspired Learning and Learning the Order and Structure of Physics.'' It was also supported in part by the DARPA Artificial Intelligence Research Associate under the grant ``Artificial Intelligence Guided Multi-scale Multi-physics Framework for Discovering Complex Emergent Materials Phenomena.''

\appendix

\section{Pseudocode}

In this appendix we provide the pseudocode for both the linear Kalman filter and nonlinear unscented Kalman filter algorithms.

\begin{algorithm}
\begin{algorithmic}[1]
  \REQUIRE System parameters $\theta=(\thetdyn, \thetobs, \thetsig, \thetgam);$ \\
        \quad \quad Prior distribution $p(\theta)$; \\
        \quad \quad Distribution on initial condition $m_0$, $P_0$; \\
        \quad \quad Linear dynamical model parameterization $A(\thetdyn)$; \\
        \quad \quad Linear observation model parameterization $H(\thetobs)$; \\
        \quad \quad Covariance matrices $\Sigma(\thetsig)$ and $\Gamma(\thetgam)$
\ENSURE Posterior evaluation $p(\theta \mid \yn)$
\STATE Compute the prior $\probd(\theta\mid\mathcal{Y}_0)=\probd(\theta)$
 \FOR {$k=1$ to $\numObs$}
   \STATE Predict $\probd(X_{k}|\theta,\mathcal{Y}_{k-1}) = \mathcal{N}(m_{k}^{-},P_{k}^{-})$\\
   $\quad m_{k}^{-}(\theta)=A(\thetdyn)m_{k-1}$\\
   $\quad P_{k}^{-}(\theta) = A(\thetdyn)P_{k-1}A^{T}(\thetdyn)+\Sigma(\thetsig)$\\
   \STATE Compute the Evidence $\probd(y_{k}|\theta,\mathcal{Y}_{k-1}) = \mathcal{N}(\mu_{k},S_{k})$\\
   $\quad \mu_{k}(\theta) = H(\thetobs)m_{k}^{-}$\\
   $\quad S_{k}(\theta) = H(\thetobs)P_{k}^{-} H^{T}(\thetobs) + \Gamma(\thetgam)$\\
   \STATE Update $\probd(X_{k} | \theta, \mathcal{Y}_{k}) = \mathcal{N}(m_{k},P_{k})$\\
   $\quad m_{k}(\theta) = m_{k}^{-} + P_{k}^{-}H^T(\thetobs)S_{k}^{-1}(y_{k}-\mu_{k})$\\
   $\quad P_{k}(\theta) = P_{k}^{-} - P_{k}^{-}H^{T}(\thetobs)S_{k}^{-1}H(\thetobs)P_{k}^{-}$\\
   \STATE Update $\probd(\theta|\mathcal{Y}_{k}) \propto  \probd(y_k\mid\theta,\mathcal{Y}_{k-1})\probd(\theta|\mathcal{Y}_{k-1})$
\ENDFOR
\end{algorithmic}
\caption{Kalman filtering for evaluating $\probd(\theta \mid\mathcal{Y}_{n})$ (exact for linear models)}
\label{alg:margpost_KF}
\end{algorithm}

\begin{algorithm}
\begin{algorithmic}[1]
  \REQUIRE System parameters $\theta=(\thetdyn, \thetobs, \thetsig, \thetgam);$ \\
        \quad \quad Prior distribution $p(\theta)$; \\
        \quad \quad Distribution on initial condition $m_0$, $P_0$; \\
        \quad \quad Dynamical model parametrization $\Psi(\thetdyn)$; \\
        \quad \quad Observation model parameterization $h(\thetobs)$; \\
        \quad \quad Covariance matrices $\Sigma(\thetsig)$ and $\Gamma(\thetgam)$; \\
        \quad \quad UKF parameters $\alpha$, $\kappa$, $\beta$
\ENSURE Approximate evaluation of the posterior $p(\theta \mid \yn)$
\STATE Calculate $\lambda = \alpha^2(\dimx+\kappa)-\dimx$
\STATE Compute the weights\\
$W_{0}^{(m)} = \frac{\lambda}{\dimx+\lambda}$\\
$W_{0}^{(c)} = \frac{\lambda}{\dimx+\lambda}+(1-\alpha^2+\beta)$\\
$W^{(m)}_{i}=W^{(c)}_{i}=\frac{1}{2(\dimx+\lambda)},\quad \forall i=1,\ldots,2\dimx$
\STATE Compute the prior $\probd(\theta\mid\mathcal{Y}_0)=\probd(\theta)$
 \FOR {$k=1$ to $\numObs$}
   \STATE Predict $\probd(X_{k}|\theta,\mathcal{Y}_{k-1}) \approx \mathcal{N}(m_{k}^{-},P_{k}^{-})$\\
   \STATE \qquad Form the sigma points\\
   \qquad\quad $\mathcal{X}^{(0)}_{k-1}(\theta) = m_{k-1}$\\
   \qquad\quad $\mathcal{X}^{(i)}_{k-1}(\theta) = m_{k-1} + \sqrt{\dimx+\lambda}\left[\sqrt{P_{k-1}}\right]_i$\\
   \qquad\quad $\mathcal{X}^{(i+\dimx)}_{k-1}(\theta) = m_{k-1} - \sqrt{\dimx+\lambda}\left[\sqrt{P_{k-1}}\right]_i,\quad \forall i = 1,\ldots,\dimx$
   \STATE \qquad Propagate the sigma points through the dynamical model\\
   \qquad\quad $\hat{\mathcal{X}}_k^{(i)}(\theta) = \Psi(\mathcal{X}_k^{(i)},\thetdyn),\quad \forall i = 0,\ldots,2\dimx$
   \STATE \qquad Compute the mean and covariance\\
   \qquad\quad $m_{k}^{-}(\theta)=\sum_{i=0}^{2\dimx}W^{(m)}_{i}\hat{\mathcal{X}}^{(i)}_{k}$\\
   \qquad\quad $P_{k}^{-}(\theta) = \sum_{i=0}^{2\dimx}W^{(c)}_i(\hat{\mathcal{X}}^{(i)}_{k}-m_{k}^{-})(\hat{\mathcal{X}}^{(i)}_{k}-m_{k}^{-})^{T}+\Sigma(\thetsig)$\\
   
   \STATE Compute the Evidence $\probd(y_{k}|\theta,\mathcal{Y}_{k-1}) \approx \mathcal{N}(\mu_{k},S_{k})$\\
   \STATE \qquad Update the sigma points\\
   \qquad\quad $\mathcal{X}^{(0)}_{k-1}(\theta) = m_{k-1}$\\
   \qquad\quad $\mathcal{X}^{(i)}_{k-1}(\theta) = m_{k-1} + \sqrt{\dimx+\lambda}\left[\sqrt{P_{k-1}}\right]_i$\\
   \qquad\quad $\mathcal{X}^{(i+\dimx)}_{k-1}(\theta) = m_{k-1} - \sqrt{\dimx+\lambda}\left[\sqrt{P_{k-1}}\right]_i,\quad \forall i = 1,\ldots,\dimx$
   \STATE \qquad Propagate the sigma points through the observation model\\
   \qquad\quad $\hat{\mathcal{Y}}_k^{(i)}(\theta) = h(\mathcal{X}_k^{(i)},\thetobs),\quad \forall i = 0,\ldots,2\dimx$
   \STATE \qquad Compute the mean and covariance\\
   \qquad\quad $\mu_{k}(\theta) = \sum_{i=0}^{2\dimx}W^{(m)}_{i}\hat{\mathcal{Y}}^{(i)}_{k}$\\
   \qquad\quad $S_{k}(\theta) = \sum_{i=0}^{2\dimx}W^{(c)}_i(\hat{\mathcal{Y}}^{(i)}_{k}-\mu_{k}^{-})(\hat{\mathcal{Y}}^{(i)}_{k}-\mu_{k}^{-})^{T} + \Gamma(\thetgam)$\\
   
   \STATE Update $\probd(X_{k} | \theta, \mathcal{Y}_{k}) \approx \mathcal{N}(m_{k},P_{k})$\\
   \qquad\quad $C_k(\theta) = \sum_{i=0}^{2\dimx} W^{(c)}_{i}(\mathcal{X}_{k}^{(i)}-m_{k}^{-})(\hat{\mathcal{Y}}_{k}^{(i)}-\mu_{k})^{T}$\\
   \qquad\quad $m_{k}(\theta) = m_{k}^{-} + (C_{k}S_{k}^{-1})(y_{k}-\mu_k)$\\
   \qquad\quad $P_{k}(\theta) = P_{k}^{-} - (C_{k}S_{k}^{-1})S_{k}^{-1}(C_{k}S_{k}^{-1})^{T}$\\
   
   \STATE Update $\probd(\theta|\mathcal{Y}_{k}) \propto  \probd(y_k\mid\theta,\mathcal{Y}_{k-1})\probd(\theta|\mathcal{Y}_{k-1})$
\ENDFOR
\end{algorithmic}
\caption{Unscented Kalman filtering algorithm for approximating $\probd(\theta \mid\yn)$}
\label{alg:margpost_UKF}
\end{algorithm}

\bibliography{mybib}

\end{document}